\newcommand{\memo}[1]{\ifthenelse{\boolean{includeMemo}}{\todo[inline,caption={},color=green!20!]{#1}}}
\newcommand{\memob}[1]{\ifthenelse{\boolean{includeMemo}}{\todo[inline,caption={},color=blue!20!]{#1}}}
\newcommand{\ignore}[1]{}
\newcommand{\squishlist}{
\begin{list}{{{\small{$\bullet$}}}}
{\setlength{\itemsep}{3pt}      
\setlength{\parsep}{3pt}
\setlength{\topsep}{3pt}       
\setlength{\partopsep}{3pt}
\setlength{\leftmargin}{1em} 
\setlength{\labelwidth}{1em}
\setlength{\labelsep}{0.5em} } }
\newcommand{\squishend}{  \end{list}}
\newcommand{\squishenum}{
\begin{list}{$\bullet$}{ 
    \setlength{\itemsep}{1pt}
    \setlength{\parsep}{0pt}
    \setlength{\topsep}{1.5pt}
    \setlength{\partopsep}{0pt}
    \setlength{\leftmargin}{2em}
    \setlength{\labelwidth}{1.5em}
    \setlength{\labelsep}{0.5em} } }
\newtheorem{property}{Property}
\newtheorem{theorem}{Theorem}
\newtheorem{proposition}{Proposition}
\newtheorem{corollary}{Corollary}
\newtheorem{definition}{Definition}
\newtheorem{example}{Example}
\newtheorem{dummytheorem}{Theorem}
\newtheorem{dummyproposition}{Proposition}
\newtheorem{dummycorollary}{Corollary}
\newcommand{\Args}{\ensuremath{\mathcal{X}}}
\newcommand{\Atts}{\ensuremath{\mathcal{A}}}
\newcommand{\Supps}{\ensuremath{\mathcal{S}}}
\newcommand{\Rels}{\ensuremath{\mathcal{R}}}
\newcommand{\BS}{\ensuremath{\tau}}
\newcommand{\SF}{\ensuremath{\sigma}}
\newcommand{\graph}{\ensuremath{\mathcal{G}}}
\newcommand{\CST}{\ensuremath{\mathcal{T}}}
\title{A Methodology for Incompleteness-Tolerant and Modular\\  Gradual Semantics for Argumentative Statement Graphs}
\author{
Antonio Rago$^{1,2}$\footnote{Equal Contribution.}\!\!\!
\and
\!Stylianos Loukas Vasileiou$^{3,4*}$\!\!\!\and
\!Francesca Toni$^2$\!\!\and
\!Tran Cao Son$^3$\!\!\And\!\!\!
William Yeoh$^4$\!\\
\affiliations
$^1$King's College London\\
$^2$Imperial College London\\
$^3$New Mexico State University\\
$^4$Washington University in St. Louis\\
\emails
antonio.rago@kcl.ac.uk,
\{stelios,stran\}@nmsu.edu,
ft@imperial.ac.uk,
wyeoh@wustl.edu
}
\begin{document}

\maketitle

\begin{abstract}
    Gradual semantics (GS) have demonstrated great potential in argumentation, in particular for deploying \emph{quantitative bipolar argumentation frameworks} (QBAFs) in a number of real-world settings, from judgmental forecasting to explainable AI
    . In this paper, we provide a novel methodology for obtaining GS for
    \emph{statement graphs}, a form of structured argumentation framework, where
    arguments and relations between them are built from logical statements. 
    Our methodology differs from existing approaches in the literature in two main ways. First, it naturally accommodates incomplete information, so that arguments with partially specified premises can play a meaningful role in the evaluation.
    Second, it is modularly defined to  leverage on any GS for QBAFs.
     We also define a set of novel properties for our GS and study their suitability 
    alongside a set of existing properties (adapted to our setting)
    for two instantiations of our GS, demonstrating their advantages over existing approaches.
    
\end{abstract}

\section{Introduction}
\label{sec:intro}

Argumentation frameworks have emerged as powerful tools for reasoning about and resolving conflicting information in complex environments (see~\cite{handbook,Atkinson_17} for overviews). In recent years, \emph{gradual semantics} (GS) have shown great promise in extending these frameworks, particularly in the realm of \emph{quantitative bipolar argumentation frameworks} (QBAFs)~\cite{Baroni_18}, where arguments have weights and 
may be related by \emph{attack} and \emph{support} relations. 
These approaches thus allow for both negative and positive, respectively, influences between arguments to be accounted for, arguably aligning more closely with human judgment than traditional argumentation frameworks~\cite{Polberg_18}. GS have found applications in diverse (and often human-centric) areas, from \emph{judgmental forecasting}~\cite{Irwin_22} to \emph{explainable AI}~\cite{Vassiliades_21,Cyras_21
}, by allowing for quantitative, and thus in some cases more nuanced, evaluations of arguments beyond the traditional accept/reject dichotomy in extension-based semantics~\cite{Dung_95,Cayrol_05}. These quantitative evaluations, typically referred to as \emph{strengths}, can be particularly useful in cases where the information in argumentation frameworks is incomplete, since 
strengths can account for 
the resulting uncertainty 
quantitatively.

However, most of the existing work on GS (see \cite{Baroni_19} for an overview) has focused on formalisms based on 
abstract arguments \cite{Dung_95}, 
treated as atomic entities without internal structure. This abstraction, while useful in many contexts, fails to capture the rich logical structure that is often present in real-world arguments. Meanwhile, various forms of \emph{structured argumentation frameworks} (see~\cite{besnard2014introduction} for an overview) 
allow to represent the internal composition of arguments and the relationships between their components. They
thus
offer a more expressive and realistic model in agent-based domains, such as user modeling ~\cite{hadoux2023strategic}, scientific debates~\cite{cramer2020structured}, and model reconciliation settings~\cite{vasileiou2024dialectical}. However, this greater reliance on the logical (e.g., background or contextual) knowledge can lead to more problematic cases where this information is incomplete, where the \emph{completeness} of a statement (as we formally define in §\ref{sec:main}) refers to its grounding in facts. 
Yet, 
it is only recently that the study of GS for structured argumentation has received 
 attention, e.g., for \emph{deductive argumentation} \cite{besnard2014constructing} by Heyninck et al.~\shortcite{Heyninck_23}, for \emph{assumption-based argumentation} (ABA)~\cite{Cyras_17} by Skiba et al.~\shortcite{Skiba_23},
 for \emph{ASPIC$^+$}~\cite{Modgil_14} by Spaans~\shortcite{Spaans_21} and Prakken~\shortcite{Prakken_24},
and for a restricted type of structured argumentation in the form of \emph{statement graphs} (SGs) \cite{hecham2018flexible} by Jedwabny \emph{et al.}~\shortcite{Jedwabny_20}. 

We selected SGs as our targeted 
form of structured argumentation in this paper
. We chose 
this starting point for our analysis, rather than other forms of structured argumentation, because they naturally accommodate attack and support relations. Their GS are thus naturally relatable to those of QBAFs, which are the most widely studied form of argumentation when it comes to GS \cite{Baroni_19}. This allows us 
to undertake an interesting analysis of properties of our GS in direct comparison with those for QBAFs, given their widespread application.

To illustrate the need for GS in structured argumentation,
Figure~\ref{fig:SG} 
shows a debate about climate change represented as an SG. 
SGs use
a restricted type of structured arguments in the form of \emph{statements}, 
each consisting of a premise (a compound of literals, which may possibly be $\top$, i.e., true) and a claim (a literal). 
A typical statement ($\alpha_1$) may combine evidence about rising CO$_2$ emissions ($a$) and increasing global temperatures ($b$), given that they are both supported by facts (
$\alpha_2$ and $\alpha_3$), to support the claim that climate change is human-caused ($c$). Such an SG may be the result of 
mining arguments 
from text, e.g., as overviewed in ~\cite{lawrence-reed-2019-argument}. 
Such argument mining techniques coupled with suitable GS for SGs could allow for 
more expressive representation of arguments while handling incomplete information in the modelling of debates, a rapidly growing research area utilising many forms of argumentation \cite{Visser_20,Goffredo_23,Ruiz-Dolz_23,Budan_23,Bistarelli_23,Bezou-Vrakatseli_24}, especially given the groundbreaking potential of large language models \cite{Oluokun_24}.

\begin{figure}[t]
    \centering
    \includegraphics[width=0.5\textwidth]{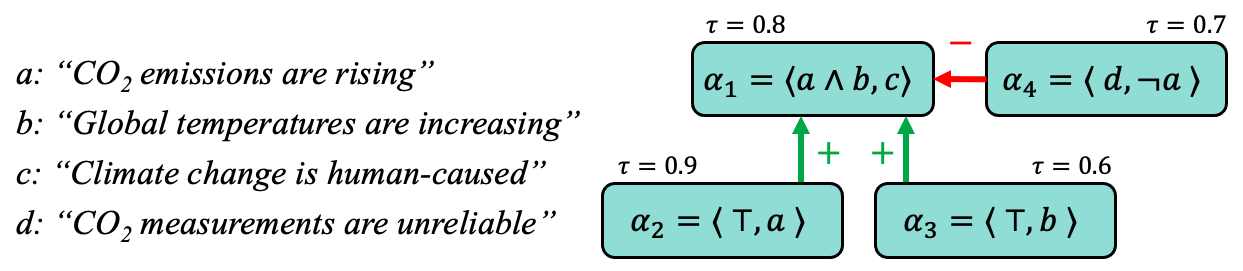}
    \caption{Example SG 
    (with arbitrary weights $\BS$, see \S\ref{sec:prelim}). Green edges with a plus 
    denote support; red edges with a minus 
    denote attack. 
    The SG includes incomplete information about the statement $\alpha_4$, since its premise $d$ is not grounded in facts. However, the given weight for $\alpha_4$ provides 
    information that our GS can use.}
    \label{fig:SG}
\end{figure}

The GS proposed by Jedwabny \emph{et al.}~\shortcite{Jedwabny_20}
for SGs uses T-norms and T-conorms from 
fuzzy logic to aggregate the strengths of a statement's components, 
but this GS 
faces challenges when dealing with incomplete information as it relies on 
complete knowledge and the presence of a statement's 
supporting elements. This requirement can limit its applicability in real-world scenarios, where such information is almost always incomplete. 
For example, 
in Figure~\ref{fig:SG}, 
the strength of the statement $\alpha_1$ 
is reduced (e.g., in the eyes of a modelled audience) by a statement ($\alpha_4$) questioning the reliability of CO$_2$ measurements, without being evidenced (there is no support for $d$). 
To date, existing approaches to GS in structured argumentation
struggle to handle such incompleteness, potentially leading to oversimplified or incorrect conclusions, and thus poor modelling of debates.

To address 
this limitation, we introduce a novel methodology for 
GS in SGs that can naturally accommodate 
complete 
and incomplete information. Our approach 
is \emph{modular} in that it allows for different GS to be instantiated based on the requirements of a given task and context. 
Specifically, our methodology for GS can be instantiated with
any existing GS for 
QBAFs. 
This existing GS is used to dialectically evaluate the literals in a premise, based on their attacking and supporting evidence, before these ``intermediate'' evaluations are aggregated (depending on the statement's construction) to give a final strength for the statement.

Thus, our methodology combines some of the expressiveness of structured argumentation with the flexibility of GS for QBAFs, while accounting for the practical reality of incomplete knowledge.
Our contributions are as follows:
\begin{itemize}
    \item We introduce a novel methodology 
    for constructing 
    GS in SGs with incomplete information, presenting two 
    instantiations based on different existing GS for QBAFs.
    \item We 
    present a collection of novel and existing (by Jedwabny \emph{et al.}~\shortcite{Jedwabny_20}) properties for GS in SGs and discuss their suitability in different settings. 
    %
    \item Using these properties, we conduct a comprehensive theoretical analysis of our two proposed instantiations, two 
    instantiations 
     of the T-norm semantics~\cite{Jedwabny_20} and two existing GS for QBAFs, discussing the strengths and limitations of each approach. 
    %
    %
\end{itemize}

\section{Related Work}
\label{sec:related}

GS have received significant attention within abstract argumentation in recent years, given their professed alignment with human reasoning~\cite{Vesic_22,Tarle_22
} and applicability to real-world settings~\cite{Potyka_21,Rago_21
}. 
This has given rise to a rich field of research, with a broad repertoire of GS~\cite{Gonzalez_21,Yun_21,Wang_24}, in-depth analyses of GS' behaviour~\cite{Delobelle_19,Oren_22,
Yin_23,Kampik_24}, and a number of open-source computational toolsets~\cite{Potyka_22,Alarcon_24}.
One issue in this area is that many abstract GS, e.g., those we use \cite{Rago_16,Potyka_18}, do not satisfy the uniqueness property in assigning strengths in cyclic graphs due to their recursive nature \cite{Gabbay_15,Anaissy_24}. 
This must be considered if our acyclicity restriction is relaxed in future work.

There have been other works related to GS in structured argumentation, differing from our work as follows. Within ASPIC$^+$, Spaans~\shortcite{Spaans_21} introduced \emph{initial strength functions}, with corresponding properties, though they use weights on rules and do not consider explicit support relations. 
Further, Prakken~\shortcite{Prakken_24} gives a formal model of argument strength across graphs and deploys it. 
A major difference between this work and our approach is our leveraging of existing bipolar abstract GS for handling incomplete information. We leave to future work the assessment of this approach with respect to our proposed properties. 
Also for ASPIC$^+$, extension-based semantics have been used for cases with incomplete information \cite{Odekerken_23}. 


Meanwhile, Heyninck \emph{et al.}~\shortcite{Heyninck_23} compare properties of GS with those concerning culpability when applied to abstract argumentation frameworks extracted from deductive argumentation, whereas we lift abstract GS to the structured level. 
Skiba \emph{et al.}~\shortcite{Skiba_23} assess a family of GS 
for ranking arguments in structured argumentation, specifically ABA. 
Both of these works 
do not consider the support relation.
Finally, Amgoud and Ben-Naim~\shortcite{Amgoud_15} give a method for ranking logic-based argumentation frameworks, but do not consider an explicit strength or a support relation.

Finally, \emph{probabilistic argumentation}~\cite{Kohlas_03,Dung_10,Thimm_12,Hunter_13,
Gabbay_15_prob,Polberg_17,
Hunter_20,Fazzinga_18,Thimm_18,Mantadelis_20}, 
is a related but distinct subfield where strengths, which may be over (sets of) arguments, relations or both, must adhere to strict probabilistic principles, whereas GS may not.

\section{Preliminaries}
\label{sec:prelim}

We consider a propositional language $\mathcal{L}$ comprising a finite set of atoms, including a special atom $\top$ to represent truth. A \textit{literal} is an atom $\phi$ or its negation $\neg \phi$. Note that $\neg (\neg \phi) \equiv \phi.$ 
A \textit{compound} is an expression, limited in this paper to be constructed from literals using 
the connective $\wedge$
.

A \textit{statement} is defined as a 
pair $\langle \Phi,\Psi \rangle$, where $\Phi \in \mathcal{L}$ is either $\top$ or a logically consistent compound and $\Psi \in \mathcal{L}$ is a literal. We refer to $\Phi$ as the \emph{premise} 
and to $\Psi$ as the \emph{claim} of statement $\alpha = \langle \Phi,\Psi \rangle$. We define $Prem(\alpha) = \{ x \: | \: x \in \Phi \} \setminus \{\top\}$ as the set of literals 
in the premise of 
statement $\alpha$. Note that if
the premise of a statement $\alpha$ is $\top$ then $Prem(\alpha) = \emptyset$.


For simplicity and ease of presentation, 
we restrict 
attention to statements 
with conjunctive premises, similarly to 
\cite{Jedwabny_20}. 
 In §\ref{ssec:methodology}, we will define our methodology generally, before instantiating it for these restricted statements, pointing towards other instantiations which could be used for more complex compounds, such as those in conjunctive normal form. 


A statement \textit{attacks} or \textit{supports} another if the former 
gives justification against or for the latter's premise:

\begin{definition}
Let $\alpha_1 = \langle \Phi_1, \Psi_1 \rangle$ and $\alpha_2 = \langle \Phi_2, \Psi_2 \rangle$ be two statements. We say that: 
\begin{itemize}

\item$\alpha_1$ \textit{supports} $\alpha_2$ iff 
$\exists x \in Prem(\alpha_2)$ s.t. $\Psi_1 \equiv x$; 
\item $\alpha_1$ \textit{attacks} $\alpha_2$ iff 
$\exists x \in Prem(\alpha_2)$ s.t. $\Psi_1 \equiv \neg x$.
\end{itemize}
\label{def:supp-att}
\end{definition}

We can then generate 
an SG structured with respect to the support and attack relations between statements, as in the works by Hecham \emph{et al.}~\shortcite{hecham2018flexible} and Jedwabny \emph{et al.}~\shortcite{Jedwabny_20}.\footnote{Note that we chose to use SGs similarly to Hecham \emph{et al.}~\shortcite{hecham2018flexible} and Jedwabny \emph{et al.}~\shortcite{Jedwabny_20} in order to allow for direct comparisons with the gradual semantics in the latter work. Their focus on attacks and
supports on premises, rather than conclusions, also aligns
with other works in the literature, e.g., ABA.}
However, differently to these works, we include the quantitative 
weights of each statement within the SG, rather than applying them separately.
This reformulation allows for direct comparisons with 
properties of QBAFs~\cite{Baroni_19}, which provided inspiration for some of our properties.

\begin{definition}
    An \emph{SG} is a quadruple $\langle \Args, \Atts, \Supps, \BS \rangle$ s.t.: 
        \begin{itemize}
         
        \item $\Args$ is a set of statements; 
        \item $\Atts \!\subseteq\! \Args \!\times\! \Args$ is 
        s.t. $\forall \alpha_1, \!\alpha_2 \!\in\! \Args$, $(\alpha_1, \alpha_2) \!\in\! \Atts$ iff $\alpha_1$ \textit{attacks} $\alpha_2$; 
        \item $\Supps \!\subseteq\! \Args \!\times\! \Args$ is 
        s.t. $\forall \alpha_1, \!\alpha_2 \!\in\! \Args$, $(\alpha_1, \!\alpha_2) \!\in\! \Supps$ iff $\alpha_1$ \textit{supports} $\alpha_2$; 
        \item $\BS: \Args \rightarrow [0,1]$ is a \emph{weight} over the statements. 
       \end{itemize}
    For any $\alpha_1 \in \Args$, we call $\Atts(\alpha_1) = \{ \alpha_2 \in \Args | (\alpha_2, \alpha_1) \in \Atts \}$ $\alpha_1$'s \emph{attackers} and $\Supps(\alpha_1) =\{ \alpha_2 \in \Args | (\alpha_2, \alpha_1) \in \Supps \}$ $\alpha_1$'s \emph{supporters}.
    For any $\alpha_x, \alpha_y \in \Args$, let a \emph{path} from $\alpha_x$ to $\alpha_y$ via $\Rels \subseteq \Atts \cup \Supps$ be $(\alpha_0,\alpha_1), \ldots, (\alpha_{n-1}, \alpha_{n})$ for some $n>0$, where $\alpha_0 \!=\! \alpha_x$, $\alpha_n \!=\! \alpha_y$ and, for any $1 \leq i \leq n$, $(\alpha_{i-1}, \alpha_{i}) \in \Rels$.
    \label{def:graph}
\end{definition}

\begin{corollary}
\label{cor:disjointness}
    \!Given a SG $
    \langle \Args,\! \Atts,\! \Supps,\! \BS \rangle$,\! $\Atts \cap \Supps \!=\! \emptyset$.\footnote{The proofs for all theoretical results are given in Appendix \ref{appendix:proofs}.}
\end{corollary}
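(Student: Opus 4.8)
The plan is to argue by contradiction, directly unfolding Definition~\ref{def:supp-att} and exploiting the consistency requirement built into the notion of a statement. Suppose, towards a contradiction, that there exist $\alpha_1 = \langle \Phi_1, \Psi_1 \rangle$ and $\alpha_2 = \langle \Phi_2, \Psi_2 \rangle$ in $\Args$ with $(\alpha_1, \alpha_2) \in \Atts \cap \Supps$. By the definition of $\Atts$ and $\Supps$ in Definition~\ref{def:graph}, this means $\alpha_1$ both attacks and supports $\alpha_2$. First I would apply Definition~\ref{def:supp-att} twice: from ``$\alpha_1$ supports $\alpha_2$'' we get some $x \in Prem(\alpha_2)$ with $\Psi_1 \equiv x$, and from ``$\alpha_1$ attacks $\alpha_2$'' we get some $y \in Prem(\alpha_2)$ with $\Psi_1 \equiv \neg y$.

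Next I would combine these two equivalences through $\Psi_1$ to obtain $x \equiv \neg y$, i.e.\ $y \equiv \neg x$ (using $\neg(\neg\phi)\equiv\phi$). Since $Prem(\alpha_2) \subseteq \Phi_2$ (modulo $\top$), both $x$ and $\neg x$ occur as conjuncts of the compound $\Phi_2$; equivalently, if $x = y$ we already get $x \equiv \neg x$, which no literal satisfies. Either way, $\Phi_2$ entails $x \wedge \neg x$, so $\Phi_2$ is logically inconsistent. This contradicts the requirement in the definition of a statement that $\Phi_2$ is either $\top$ or a logically consistent compound — note that the case $\Phi_2 = \top$ is already excluded here, since then $Prem(\alpha_2) = \emptyset$ and neither an attack nor a support into $\alpha_2$ could exist. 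Hence no such pair exists and $\Atts \cap \Supps = \emptyset$.

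There is essentially no hard step here; the only point requiring a little care is the bookkeeping around the equivalence relation $\equiv$ on literals and the two (non-exclusive) sub-cases $x = y$ versus $x \neq y$, together with making explicit that the consistency of premises is precisely the hypothesis doing the work. I would keep the write-up to a few lines, emphasising that disjointness of $\Atts$ and $\Supps$ is an immediate structural consequence of statements having consistent premises.
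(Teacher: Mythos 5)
Your proposal is correct and follows essentially the same route as the paper's own proof: a contradiction argument that unfolds Definition~\ref{def:supp-att} to conclude that both a literal and its negation would occur in $Prem(\alpha_2)$, contradicting the required consistency of the premise $\Phi_2$. Your extra bookkeeping around $\equiv$ and the $x=y$ versus $x\neq y$ sub-cases is slightly more explicit than the paper's one-line version, but it is the same argument.
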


An SG thus gives a restricted form of structured argumentation framework with conjunctive premises, undercutting attacks, deductive/evidential supports and weights. Note that weights may be fixed, i.e., identical for all arguments which are considered a-priori equal as in classical argumentation frameworks, or variable, representing, e.g., different attributes such as source authority, premise credibility, or goal importance.\footnote{While methods for learning these weights represent an important research direction, e.g., as in \cite{Rago_25} within the setting of review aggregation, they fall outside the scope of this work.} We envisage a usage of weights similar to that in \cite{Alsinet_08} within the realm of \emph{defeasible logic programming} \cite{Garcia_04}, where weighted certainties on arguments (including facts) are used to introduce \emph{possibilistic argumentation.} Also, we focus on acyclic graphs, following several works on GS \cite{Rago_16,Tarle_22
}, leaving cyclic graphs to future work
.

For the remainder of the paper, we assume as given a generic SG $\graph = \langle \Args, \Atts, \Supps, \BS \rangle$.
\begin{definition}
\label{def:semantics}
    A \emph{GS} $\SF: \Args 
  \rightarrow [0,1]$ assigns each statement $\alpha \in \Args$ a \emph{strength} $\SF^\graph
    (\alpha) \in [0,1]$.\footnote{With an abuse of notation, when referring to GS generally, we drop the $\graph$ superscript for clarity.}
\end{definition}

In essence, GS offer a principled approach to \textit{evaluating} statements quantitatively. 
We refer informally throughout the paper to any GS that 
takes into account the premises of any statement as a \emph{structured GS}, and any that does not as an \emph{abstract GS}. 
It should be noted that while abstract GS can be applied to SGs, structured GS cannot be applied to frameworks using abstract argumentation such as QBAFs, since the statements therein are abstract entities.


We refer to any statement $\alpha \in \Args$ with $\SF^\graph(\alpha) = 1$ as having/being \emph{top-strength}, and $\SF^\graph(\alpha) = 0$ as having/being \emph{bottom-strength}. 
We note here that other notions of acceptance and rejection, e.g., with thresholds over the strength range as in \cite{Tarle_22,Rago_23}, would be possible in our setting, but we leave the study of this to future work.

Next, we recall two notions 
from~\cite{Jedwabny_20}.\footnote{We omit the acyclicity condition since we use acyclic graphs.} 
The first gives the notion indicating when an argument's premises is grounded in facts, while the second defines operators used in the GS.

\begin{definition}
\label{def:CST}
    A \emph{complete support tree} (CST) for some $\alpha_1 \in \Args$ is a set of statements $T \subset \Args$ s.t.: 
    \begin{itemize}
        \item $\alpha_1 \in T$;
        \item $\forall \alpha_2 \!\in\! T$, $\forall x \!\in\! Prem(\alpha_2)$, $\exists \alpha_3 \!=\! \langle \Phi_3, \Psi_3 \rangle \!\in\! T$ s.t. $\Psi_3 \equiv x$; 
        \item $\nexists \alpha_2, \alpha_3 \in T$ s.t. $(\alpha_2,\alpha_3) \in \Atts$; 
        %
        %
        \item $\nexists T' \!\!\subset\!\! T$ s.t. $T'$ is a CST for $\alpha_1$ (minimal wrt set inclusion). 
    \end{itemize}
    We denote with $\CST^\graph(\alpha_1)$ the set of CSTs for $\alpha_1$ wrt $\graph$. Given two CSTs $T\in \CST^\graph(\alpha_1)$ and $T' \in \CST^\graph(\alpha_2)$ for $\alpha_1, \alpha_2 \in \Args$, respectively, $T$ attacks $T'$ iff $\exists \alpha_3 \in T'$ s.t. $(\alpha_1,\alpha_3) \in \Atts$.
\end{definition}
\begin{definition}
A triple $(\otimes, \oplus, \neg)$, where $\otimes$ is a T-norm, $\oplus$ is a T-conorm, and $\neg$ is a negation, is called a De Morgan triple iff $\forall v_1,v_2 \in [0,1]$, $\otimes (v_1,v_2) = \neg (\oplus (\neg (v_1), \neg (v_2))$, and $\oplus (v_1,v_2) = \neg (\otimes (\neg (v_1), \neg (v_2))$.
\end{definition}

Then, Jedwabny \emph{et al.}~\shortcite{Jedwabny_20} introduce the following GS.
\begin{definition}
\label{def:Tnorm}
    Given 
   a De Morgan triple $(\otimes, \oplus, \neg)$, the \emph{T-norm semantics} $\SF_T$ is a structured GS s.t. for any $\alpha \in \Args$, where
        %
        $\SF^\graph_T(\alpha) = \bigoplus_{T \in \CST^\graph(\alpha)} \mathcal{O}(T)$, 
        $\mathcal{O}(T) = \mathcal{I}(T) \otimes \neg \bigoplus_{T' \in \CST^\graph(\alpha' \in \Args)\text{, } T' \text{ attacks } T} \mathcal{I}(T')$ and 
        $\mathcal{I}(T) = \bigotimes_{\alpha \in T} \tau(\alpha)$. 
        %
\end{definition}

While the T-norm semantics is modular wrt the De Morgan triple, the methodology is based on a statement's CSTs, and thus it can be seen that it inherently relies on complete information, i.e., a statement's grounding in facts.

In this paper, we instantiate two T-norm semantics. For the first, which we name the \emph{T-norm-p} semantics, denoted by $\SF_{T_p}$, we let, as in the illustrations by Jedwabny \emph{et al.}~\shortcite{Jedwabny_20}: 
$\otimes$ be the product, i.e., $\otimes(v_1,v_2) = v_1 \times v_2$;
$\oplus$ be the probabilistic sum, i.e., $\oplus(v_1,v_2) = v_1 + v_2 - (v_1 \times v_2)$; and 
$\neg$ be standard negation, i.e., $\neg(v_1) = 1 - v_1$.
For the second, which we name the \emph{T-norm-m} semantics, denoted by $\SF_{T_m}$, we let: 
$\otimes$ be the minimum, i.e., $\otimes(v_1,v_2) = \min(v_1,v_2)$;
$\oplus$ be the maximum, i.e., $\oplus(v_1,v_2) = \max(v_1,v_2)$; and 
$\neg$ be standard negation as for the T-norm-p semantics.

We now recap two abstract GS from the literature, as defined by Rago \emph{et al.}~\shortcite{Rago_16} and Potyka~\shortcite{Potyka_18}, respectively.

\begin{definition}
\label{def:DFQuAD}
    The \emph{DF-QuAD semantics} $\SF_D$ is an abstract GS s.t. for any $\alpha \in \Args$, 
        $\SF^\graph_D(\alpha) = c(\BS(\alpha),\Sigma(\SF^\graph_D( \Atts(\alpha))),\Sigma(\SF^\graph_D(\Supps(\alpha))))$ 
    where, for any $S \subseteq \Args$, $\SF^\graph_D(S)=(\SF^\graph_D(\alpha_1),\ldots,\SF^\graph_D(\alpha_k))$ for $(\alpha_1,\ldots,\alpha_k)$, an arbitrary permutation of $S$, and: 
    \begin{itemize}
        \item $\Sigma$ is s.t. $\Sigma(())=0$, where $()$ is an empty sequence, and, for $v_1,\ldots,v_n \in [0,1]$ ($n \geq 1$), 
        if $n=1$, then $\Sigma((v_1))=v_1$; if $n=2$, then $\Sigma((v_1,v_2))= v_1 + v_2 - v_1\cdot v_2$; and 
        if $n>2$, then $\Sigma((v_1,\ldots,v_n)) = \Sigma (\Sigma((v_1,\ldots, v_{n-1})),v_n)$; 
        \item $c$ is s.t., for $v^0,v^-,v^+ \in [0,1]$,
        if $v^-\geq v^+$, then $c(v^0,v^-,v^+)=v^0-v^0\cdot| v^+ - v^-|$ and
        if $v^-< v^+$, then $c(v^0,v^-,v^+)=v^0+(1-v^0)\cdot| v^+ - v^-|$.
    \end{itemize}
\end{definition}

\begin{definition}
\label{def:QEM}
    The \emph{QEM semantics}\footnote{We 
    define a simplified GS for the case of acyclic graphs.} $\SF_Q$ is an abstract GS s.t. for any $\alpha \in \Args$,
        $\SF^\graph_Q(\alpha) = \BS(\alpha) + (1 - \BS(\alpha)) \cdot h(E_\alpha) -  \BS(\alpha) \cdot h(-E_\alpha)$ 
    where $E_\alpha = \sum_{\alpha_1 \in \Supps(\alpha)}{\SF^\graph_Q(\alpha_1)} - \sum_{\alpha_2 \in \Atts(\alpha)}{\SF^\graph_Q(\alpha_2)}$ and for all $v \in \mathbb{R}$, $h(v) = \frac{\max\{v,0\}^2}{1+\max\{v,0\}^2}$.
\end{definition}

While these abstract GS do not take into account the structure of statements, they do consider a weight on each statement, which gives an intrinsic strength which may be used as a starting point for calculating statement strengths, e.g., in the case of incomplete information.


\section{
Gradual Semantics for Statement Graphs}
\label{sec:main}

We now introduce a novel methodology for defining structured GS, leveraging abstract GS, to evaluate the strength of statements in SGs  (§\ref{ssec:methodology}). We then 
redefine and discuss the suitability of existing (redefined for this setting) and novel properties for GS in SGs (§\ref{ssec:properties}), 
before we evaluate our approach with 
the properties, comparing against existing structured and abstract GS (§\ref{ssec:theory}). These contributions will be discussed with a focus of the following notion.


\begin{definition}
\label{def:completeness}
    For any $\alpha_1 \in \Args$, we say $\alpha_1$ is: 
    \begin{itemize}
        \item \emph{complete} iff $\forall \alpha_2 \in \Args$, where $\alpha_2 = \alpha_1$ or there exists a path from $\alpha_2$ to $\alpha_1$ 
        via $\Supps$, $\CST^\graph(\alpha_2) \neq \emptyset$; 
        %
        \item \emph{partially-complete} iff it is not complete and $\CST^\graph(\alpha_1) \neq \emptyset$; 
        %
        \item \emph{incomplete} otherwise, i.e., iff $\CST^\graph(\alpha_1) = \emptyset$.
    \end{itemize}
\end{definition}

Note that partially-complete and incomplete statements are in the same spirit as \emph{potential arguments} in ABA \cite{potentialToni13}, but the latter are used for computational purposes, as steps towards complete statements, rather than as representations in their own right, as we do. Further investigation, e.g. on the equivalent in ASPIC$^+$, is left to future work.

\begin{example}
\label{ex:completeness}
Consider the SG in Figure \ref{fig:completeness}, in which 
$\Args = \{ \alpha'_1, \alpha'_2, \alpha'_3 \}$, 
$\alpha'_1 = \langle b, a \rangle$, 
$\alpha'_2 = \langle c, b \rangle$ and
$\alpha'_3 = \langle d, b \rangle$, giving
$\Supps = \{ (\alpha'_2, \alpha'_1), (\alpha'_3, \alpha'_1) \}$ and $\Atts = \emptyset$ (the $\BS$ values are irrelevant here).
In this first case, $\alpha'_1$ incomplete as it has no CSTs.
If we add the statement $\alpha'_4 = \langle \top, c \rangle$, leading to $\Supps = \{ (\alpha'_2, \alpha'_1), (\alpha'_3, \alpha'_1), (\alpha'_4, \alpha'_2) \}$, $\alpha'_1$ becomes partially-complete as it has one CST, but $\alpha'_3$, which has a path to $\alpha'_1$, does not have a CST.
Finally, if we add the statement $\alpha'_5 = \langle \top, d \rangle$, leading to $\Supps = \{ (\alpha'_2, \alpha'_1), (\alpha'_3, \alpha'_1), (\alpha'_4, \alpha'_2), (\alpha'_5, \alpha'_3) \}$, $\alpha'_1$ becomes complete as all statements with a path to it have CSTs.
Crucial here is the fact that in the incomplete case the T-norm semantics ignores the (potentially useful) information from both $\alpha'_2$ and $\alpha'_3$, regardless of $\BS$, while in the partially-complete case it ignores that of $\alpha'_3$.
\end{example}

\begin{figure}[t]
    \centering
    \includegraphics[width=0.65\linewidth]{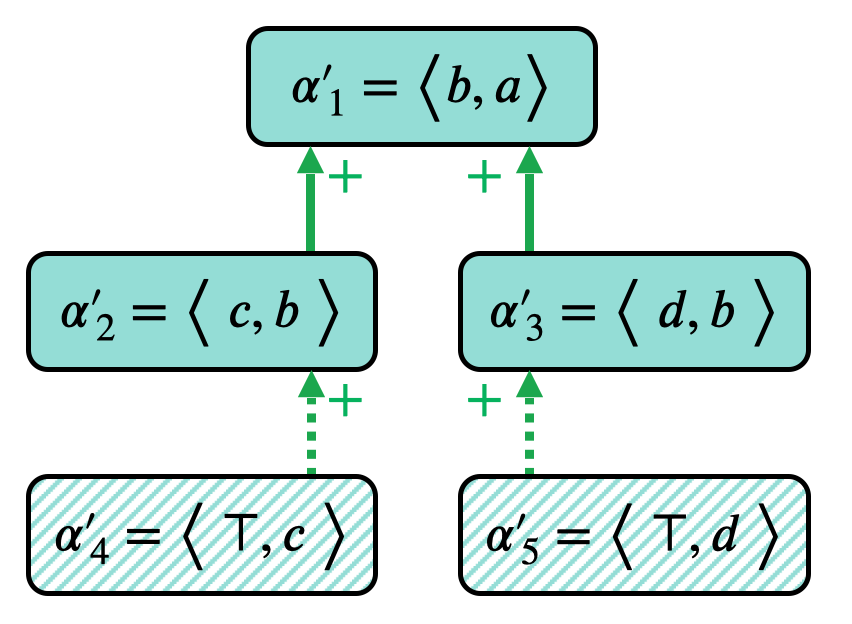}
    \caption{SG from Example \ref{ex:completeness} with initial (added) statements shown in solid (striped, respectively) turquoise, no attacks, initial (added) supports as solid (dashed, respectively) green arrows and some arbitrary $\BS$ (not shown here). The statement $\alpha'_1 \in \Args$ is incomplete when $\Args = \{ \alpha'_1, \alpha'_2, \alpha'_3 \}$, but is partially-complete when $\alpha'_4$ (or $\alpha'_5$) is added and complete when both $\alpha'_4$ and $\alpha'_5$ are added.}
    \label{fig:completeness}
\end{figure}

To solve this issue, we target a methodology which behaves well in all cases, i.e., considering all available information.

\begin{figure*}[h]
    \centering
    \includegraphics[width=\textwidth]{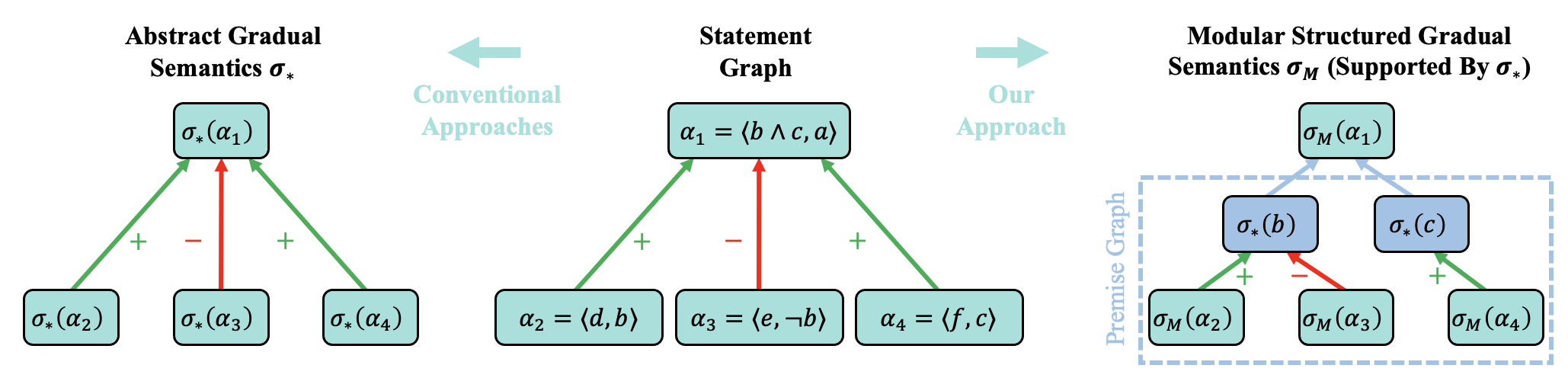}
    \caption{The conventional way (left) in which abstract GS ($\SF_*$) are used for SGs (centre), with a strength calculated for $\alpha_1$ using the strengths of its supporters ($\alpha_2$ and $\alpha_4$) and attacker ($\alpha_3$), and our approach (right), modular structured GS ($\SF_M$), which calculate the strength, using $\SF_*$, of each literal in $\alpha_1$'s premises ($b$ and $c$) within the premise graph of $\alpha_1$, before $\SF_M$ aggregates these strengths in a premise aggregation function to calculate the strength of $\alpha_1$. 
    }
    \label{fig:mds}
\end{figure*}

\subsection{Methodology}
\label{ssec:methodology}

First, we introduce the following notion.

\begin{definition}
    A \emph{premise graph} for some $\alpha_1 \in \Args$ is an SG $\graph^1 \!\!=\!\! \langle \Args^1\!, \!\Atts^1\!, \!\Supps^1\!, \!\BS^1\! \rangle$ s.t. $\Args^1 \!\!=\! Prem(\alpha_1) \!\cup\! \Atts(\alpha_1) \!\cup\! \Supps(\alpha_1)$ and:

        \begin{itemize}
         
        \item $\Atts^1 = \{ ( \langle \Phi_2, \Psi_2 \rangle, x) \in \Atts(\alpha_1) \times Prem(\alpha_1) \: | \: 
        \neg x \equiv \Psi_2 \}$; 
        \item $\Supps^1 = \{ ( \langle \Phi_2, \Psi_2 \rangle, x) \in \Supps(\alpha_1) \times Prem(\alpha_1) \: | \: 
        x \equiv \Psi_2 \}$.

    \end{itemize}
\end{definition}

A statement's premise graph contains a new set of statements representing the statement's premises, attackers and supporters from the original SG, as shown in the dashed blue box on the right of 
Figure \ref{fig:mds}
.
Note that we do not specify $\BS^1$ at this point; we will do so in the specific instantiation of our methodology.
In introducing a premise graph, we allow for the literals in statements' premises to be evaluated by abstract gradual semantics, a core tenet of our approach.

Our methodology, illustrated in Figure \ref{fig:mds}, 
is as follows.\footnote{Given the modular nature of our approach, we leave a study of the complexities of the resulting GS to future work.}

\begin{definition}
\label{def:modular}
    Given an abstract GS $\SF_*$, \emph{a modular structured GS supported by} $\SF_*$ is a structured GS $\SF_{M}$ s.t. for any $\alpha_1 \in \Args$, where $\graph^1$ is the premise graph of $\graph$ for $\alpha_1$:
    \begin{align}
        \SF_{M}^\graph(\alpha_1) = \odot(\{ \SF_*^{{\graph^1}}(x) \: | \: x \in Prem(\alpha_1) \}) \nonumber
    \end{align}
    where $\odot: [0,1]^* \rightarrow [0,1]$ is a \emph{premise aggregation function}. 
\end{definition}

The modular structured GS thus dialectically evaluates the evidence supporting or attacking each literal in a statement's premise via an abstract GS using the premise graph, rather than based on the completeness of its support as in~\cite{Jedwabny_20}. 
These dialectical evaluations are then used to evaluate the statement with the premise aggregation function.
This permits the incorporation of incomplete information, since the evidence for or against premises is combined dialectically starting from a statement's weight, giving an approximation of the strength of a statement based on the \emph{available} evidence, giving desirable behaviour in all three cases in Definition \ref{def:completeness}.
This modular methodology permits instantiations tailored to individual applications based on 
choices in Definition \ref{def:modular}. 
We now define one such instantiation, tailored to our restricted SGs of Definition \ref{def:graph}, where premises are conjunctions, with its supporting abstract GS left unspecified.

\begin{definition}
\label{def:DC}
    Given 
    an abstract GS $\SF_*$, the \emph{dialectical conjunction (DC) semantics} supported by $\SF_*$, denoted by $\SF_{\wedge_*}$, is a modular structured GS s.t., for any $\alpha_1 \in \Args$, if $Prem(\alpha_1) = \emptyset$, then $\SF_{\wedge_*}^\graph(\alpha_1) = \BS(\alpha_1)$; otherwise: 
    \begin{align}
        \SF_{\wedge_*}^\graph(\alpha_1) = \bigtimes_{x \in Prem(\alpha_1)} \SF_*^{\graph^1}(x) \nonumber
    \end{align}
    where $\BS^1(x) = \sqrt[n]{\BS(\alpha_1)}$, $n = |Prem(\alpha_1)|$ and $\BS^1(\alpha_2) = \SF_{\wedge_*}^\graph(\alpha_2)$ $\forall \alpha_2 \in \Args^1 \setminus Prem(\alpha_1)$.
\end{definition}

\begin{corollary}
 
    For any abstract GS $\SF_*$ that satisfies existence and uniqueness, 
    $\SF_{\wedge_*}$ also satisfies existence and uniqueness.
    
\end{corollary}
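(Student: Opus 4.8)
The plan is to prove the corollary by unwinding the definitions and showing that the recursion defining $\SF_{\wedge_*}$ is well-founded, so that existence and uniqueness of $\SF_*$ on each premise graph lift to existence and uniqueness of $\SF_{\wedge_*}$ on the whole SG. First I would set up an induction on the acyclic structure of $\graph$. Since $\graph$ is acyclic (by the standing assumption in §\ref{sec:prelim}) and finite, there is a well-founded partial order on $\Args$ given by reachability via $\Atts \cup \Supps$; I would induct on the height of a statement $\alpha_1$ in this order, i.e.\ the length of the longest path ending at $\alpha_1$.

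For the base case, a statement $\alpha_1$ with no attackers or supporters: if $Prem(\alpha_1) = \emptyset$ then $\SF_{\wedge_*}^\graph(\alpha_1) = \BS(\alpha_1)$ is defined and unique outright; otherwise the premise graph $\graph^1$ has node set exactly $Prem(\alpha_1)$ with no edges, $\BS^1(x) = \sqrt[n]{\BS(\alpha_1)}$, and $\SF_*^{\graph^1}$ exists and is unique by hypothesis on $\SF_*$, so the finite product $\bigtimes_{x \in Prem(\alpha_1)} \SF_*^{\graph^1}(x)$ is well-defined and unique, and lies in $[0,1]$ since a product of values in $[0,1]$ stays in $[0,1]$. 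For the inductive step, assume $\SF_{\wedge_*}^\graph(\alpha_2)$ is defined and unique for every $\alpha_2$ of strictly smaller height. The premise graph $\graph^1$ of $\alpha_1$ has $\Args^1 = Prem(\alpha_1) \cup \Atts(\alpha_1) \cup \Supps(\alpha_1)$; every $\alpha_2 \in \Atts(\alpha_1) \cup \Supps(\alpha_1)$ has strictly smaller height than $\alpha_1$, so $\BS^1(\alpha_2) = \SF_{\wedge_*}^\graph(\alpha_2)$ is, by the induction hypothesis, a well-defined unique value in $[0,1]$; together with $\BS^1(x) = \sqrt[n]{\BS(\alpha_1)} \in [0,1]$ for $x \in Prem(\alpha_1)$, this makes $\graph^1$a bona fide SG (all weights in $[0,1]$, and it is acyclic since its only edges go from $\Atts(\alpha_1)\cup\Supps(\alpha_1)$ into $Prem(\alpha_1)$). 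Applying existence and uniqueness of $\SF_*$ to $\graph^1$ yields a unique $\SF_*^{\graph^1}(x)$ for each $x \in Prem(\alpha_1)$, and hence a unique value $\SF_{\wedge_*}^\graph(\alpha_1) \in [0,1]$. This closes the induction, giving existence and uniqueness of $\SF_{\wedge_*}$ on all of $\Args$.

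The one point that needs a little care — and the place I would expect to spend the most attention — is the claim that the premise graph $\graph^1$ used at step $\alpha_1$ is itself a legitimate SG to which the hypothesis on $\SF_*$ applies, and in particular that its weight assignment $\BS^1$ is well-defined. This is exactly where the acyclicity of $\graph$ is essential: $\BS^1(\alpha_2) = \SF_{\wedge_*}^\graph(\alpha_2)$ refers back to strengths in the original graph, and the argument only goes through because those $\alpha_2$ are strictly lower in the well-founded order, so there is no circular dependence. I would also note that $\SF_*$ being an \emph{abstract} GS means it only looks at $\Atts^1$, $\Supps^1$ and $\BS^1$ of $\graph^1$, not at any internal structure of the ``statements'' $x \in Prem(\alpha_1)$ (which here are literals), so it is legitimate to feed it $\graph^1$. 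Everything else — that $\sqrt[n]{\BS(\alpha_1)} \in [0,1]$, and that a finite product of $[0,1]$-values is in $[0,1]$, and the degenerate-case handling when $Prem(\alpha_1)=\emptyset$ — is routine and I would not belabour it.
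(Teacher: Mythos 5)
Your proof is correct and follows the same underlying idea as the paper's own (one-sentence) argument, namely that by Definition~\ref{def:DC} the strength of each statement is either its weight or a product of values obtained from its premise graph, so existence and uniqueness of $\SF_*$ lift to $\SF_{\wedge_*}$. The well-founded induction on height that you spell out — in particular the check that each premise graph is a legitimate SG whose weights $\BS^1(\alpha_2)=\SF_{\wedge_*}^\graph(\alpha_2)$ are already determined by acyclicity — is exactly the detail the paper leaves implicit, so your version is a more careful rendering of the same route rather than a different one.
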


Note that conjunctive premises represent a challenge for handling incomplete information, as the failure of any single premise invalidates the entire statement. For that matter, the DC semantics has been designed such that certain desirable theoretical properties will be satisfied, 
as we will see in §\ref{ssec:properties}.  Importantly, the DC semantics represents a natural starting point for our methodology because conjunctive premises are typically abound in real-world argumentative reasoning. For instance, when people construct arguments, they typically combine multiple pieces of evidence conjunctively to support their claims. This pattern is evident in scientific argumentation, legal reasoning and everyday discourse, where people naturally aggregate supporting facts with ``and'' connectives \cite{mercier2011humans}.
In future work, modular structured GS could be instantiated for different structures of SGs, e.g., for SGs with disjunctions as premises, a probabilistic sum could be used for the premise aggregation function. This would align with the intuition that multiple independent supporting lines of reasoning may strengthen a statement's overall strength. For other logical structures such as statements in 
conjunctive normal form, a hierarchical aggregation approach could be employed
, i.e., first combining literals within clauses using disjunctive operators, then aggregating clauses conjunctively.

\begin{example}
\label{ex:2}
Consider the SG from Figure~\ref{fig:SG}, where $\Supps = \{ {(\alpha_2, \alpha_1), (\alpha_3, \alpha_1) } \}$ and $\Atts = \{ {(\alpha_4, \alpha_1)} \}$. 
Given the DC semantics supported by DF-QuAD, the strength of statement $\alpha_1$ is calculated as follows:
$\SF_{\wedge_D}^\graph(\alpha_1) = \SF_{D}^{\graph^1} (a) \times \SF_{D}^{\graph^1}(b) = 0.916 \times 0.957 = 0.877$, where $\BS^1(a) = \BS^1(b) = \sqrt[2]{0.8} = 0.894$. In contrast, the T-norm-p semantics produces a strength of: $\SF_{T_p}^\graph(\alpha_1) = \tau(\alpha_1) \times \tau(\alpha_2) \times \tau(\alpha_3) = 0.8 \times 0.9 \times 0.6 = 0.432$.
Here, we see how the DC and T-norm-p semantics differ. The DC semantics considers the individual strengths of the premises of $\alpha_1$, while the T-norm-p semantics takes into account the weight of $\alpha_1$ and its direct supporters.
\end{example}

\subsection{Properties}
\label{ssec:properties}

We now discuss existing (from \cite{Jedwabny_20}\footnote{Jedwabny \emph{et al.}~\shortcite{Jedwabny_20} also defined properties that are specific to CSTs, which we reformulate and assess in Appendix \ref{appendix:props}.}) and novel properties for GS in SGs. 
Note that the properties apply to either all or exclusively structured GS as specified.

The first property \cite{Jedwabny_20} is a fundamental requirement for GS, stating that a statement's strength only depends on statements that are connected to it, and are thus relevant to it.

\begin{property}
\label{prop:directionality}
    Given two SGs $\graph$ and $\graph' = \langle \Args \cup \{ \alpha_1 \}, \Atts', \Supps', \BS' \rangle$, where $\alpha_2 = \alpha_1$ or $\alpha_3 = \alpha_1$ $\forall (\alpha_2,\alpha_3) \in (\Atts' \cup \Supps') \setminus (\Atts \cup \Supps)$, a GS $\SF$ satisfies \emph{directionality} iff for any $\alpha_2 \in \Args$, where there exists no path from $\alpha_1$ to $\alpha_2$ via $\Atts \cup \Supps$ and $\BS'(\alpha_3) = \BS(\alpha_3)$ $\forall \alpha_3 \in \Args$, it holds that $\SF^{\graph'}(\alpha_2) = \SF^\graph(\alpha_2)$.
\end{property}

As in previous works~\cite{Amgoud_18,Amgoud_22}, we would advocate that 
this property is 
satisfied in the vast majority of settings
since we would not expect 
information that is unrelated to a statement to have an effect on its strength.

Next, Jedwabny et al.~\shortcite{Jedwabny_20} introduced a property which requires that the length of a proof in the logic, i.e., a chain of reasoning, should not affect the strength of a statement.

\begin{property}
\label{prop:rewriting}
    A structured GS $\SF$ satisfies \emph{rewriting} iff for any $\alpha_1, \alpha_2, \alpha_3 \in \Args$, if: 
        %
        \begin{itemize}
        
        \item $\alpha_1 = \langle \Phi_1, \Psi_1 \rangle$, where $\Phi_1 \equiv x_1 \wedge \ldots \wedge x_n \wedge x_{n+1}$ and $\Psi_1 \equiv x$; 
        %
        \item $\alpha_2 = \langle \Phi_2, \Psi_2 \rangle$, where $\Phi_2 \equiv x_1 \wedge \ldots \wedge x_n$ and $\Psi_2 \equiv x'$; 
        %
        \item $\alpha_3 = \langle \Phi_3, \Psi_3 \rangle$, where $\Phi_3 \equiv x' \wedge x_{n+1}$ and $\Psi_3 \equiv x$; 
        %
        \item $\nexists \langle \Phi_4, \Psi_4 \rangle$ where $x' \in Prem(\langle \Phi_4, \Psi_4 \rangle)$ or $x' \equiv \Psi_4$; 
        %
        \item $\BS(\alpha_1) = \BS(\alpha_3)$ and $\BS(\alpha_2) = 1$; 
   \end{itemize}
    then $\SF^\graph(\alpha_1) = \SF^\graph(\alpha_3)$.
\end{property}

This property is indeed intuitive in a number of settings where we would prioritise the logical consistency over approximate evaluations of statements. However, in some settings, e.g., when modelling human discourse, it may be the case that the same logical conclusions, when reached over the course of multiple statements, are weaker, or even stronger, than those stated succinctly (e.g., see the study of \emph{priority} in \cite{Yin_24}). 
For example, given the statements instantiated in rewriting, consider the effect of adding a statement $\alpha_4 = \langle \Phi_4, \Psi_4 \rangle$, where $\Psi \equiv \neg x_1$. In some settings it may be 
desirable that the weakening effect of $\alpha_4$ on $\alpha_3$ is lesser than that on $\alpha_1$ due to the distance. 
We leave a formalisation of this (non-trivial) property to future work.
We would, however, 
argue that rewriting may not be \emph{universally} applicable, as illustrated below.

\begin{example}
\label{ex:3}

Continuing from Example~\ref{ex:2}, let us introduce two additional statements in the SG of Figure~\ref{fig:SG}:
$\alpha_5 = \langle a, e \rangle$ (``CO$_2$ emissions are rising, therefore extreme weather events are increasing''), and
$\alpha_6 = \langle e \wedge b, c \rangle$ (``Extreme weather events and global temperatures are increasing, therefore climate change is human-caused'').
Let $\tau(\alpha_5) = 1$ and $\tau(\alpha_6) = 0.8$. 
$\Atts = \{(\alpha_4, \alpha_1), (\alpha_4, \alpha_5)\}$.
Under the DC semantics, we get $\SF_{\wedge_D}^\graph(\alpha_1) = 0.877$ and $\SF_{\wedge_D}^\graph(\alpha_6) = \SF_{D}^{\graph^1}(e) \times \SF_{D}^{\graph^1}(b) = 1 \times 0.957 = 0.957$. In contrast, under T-norm-p semantics, we get $\SF_{T_p}^{\graph}(\alpha_1) \!=\! \SF_{T_p}^\graph(\alpha_6) \!=\! 0.432$.
We can see that the DC semantics violate the rewriting property, as $\alpha_1$ and $\alpha_6$ have different strengths despite making the same claim through different reasoning paths. The T-norm-p semantics, however, is agnostic to the different lengths of the reasoning chain. 
\end{example}

Next, Jedwabny \emph{et al.}~\shortcite{Jedwabny_20} state that a statement without at least one supporter for each of its premises, i.e., with no evidence for any of them, is bottom-strength.

\begin{property}
\label{prop:provability} 
    A structured GS $\SF$ satisfies \emph{provability} iff for any $\alpha_1 \in \Args$, if $\exists x \in Prem(\alpha_1)$ and $\nexists \langle \Phi_2, \Psi_2 \rangle \in \Supps(\alpha_1)$ s.t. $x \equiv \Psi_2$, then $\SF^\graph(\alpha_1) = 0$.
\end{property}

This property is clearly useful in cases under 
complete information, i.e., first case in Definition \ref{def:completeness}, but in the other two cases
, we posit that this property may be too demanding.

\begin{example}
\label{ex:4}
In our running example, 
consider $\alpha_4 = \langle d, \neg a \rangle$. 
Note that, while $\alpha_4$ is an attacker of $\alpha_1$ and $\alpha_5$, it is incomplete because its premise $d$ does not have a support.
Under the T-norm-p semantics, the strength of $\alpha_4$ is $\SF^{\graph}_{T_p}(\alpha_4) = 0$. Consequently, it has no effect on the strengths of $\alpha_1$ and $\alpha_5$. In contrast, under the DC semantics, the strength of $\alpha_4$ defaults to its weight, i.e., $\SF^{\graph}_{\wedge_D}(\alpha_4) = \tau(\alpha_4) =  0.7$. This non-zero strength is then used when computing the strengths of $\alpha_1$ and $\alpha_5$, thus reducing their overall strength.
\end{example}

As an optional alternative, we propose a weaker property (trivially implied by provability), which makes the same demands but only when the statement's weight is zero.

\begin{property}
\label{prop:weakprovability} 
    A structured GS $\SF$ satisfies \emph{weak provability} iff for any $\alpha_1 \in \Args$ such that $\BS(\alpha_1) =0$, if $\exists x \in Prem(\alpha_1)$ and $\nexists \langle \Phi_2, \Psi_2 \rangle \in \Supps(\alpha_1)$ s.t. $x \equiv \Psi_2$, then $\SF^\graph(\alpha_1) = 0$.
\end{property}

\begin{corollary}
\label{cor:weakprovability} 
   A structured GS $\SF$ which satisfies provability necessarily satisfies weak provability.
\end{corollary}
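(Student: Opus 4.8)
The plan is to observe that \emph{weak provability} is literally the restriction of \emph{provability} to statements of weight zero, so the implication is immediate by instantiating the universally quantified condition in provability. Concretely, suppose $\SF$ satisfies provability. To show it satisfies weak provability, I would fix an arbitrary $\alpha_1 \in \Args$ with $\BS(\alpha_1) = 0$ and assume the antecedent of weak provability holds, i.e.\ that there is some $x \in Prem(\alpha_1)$ for which there is no $\langle \Phi_2, \Psi_2 \rangle \in \Supps(\alpha_1)$ with $x \equiv \Psi_2$.

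The key step is then to notice that this assumption is exactly the antecedent of provability \emph{for the same statement $\alpha_1$} (Property~\ref{prop:provability} quantifies over all $\alpha_1 \in \Args$, with no constraint on $\BS(\alpha_1)$). Hence, applying provability to $\alpha_1$ directly yields $\SF^\graph(\alpha_1) = 0$, which is the desired conclusion of weak provability. Since $\alpha_1$ was arbitrary among weight-zero statements satisfying the hypothesis, $\SF$ satisfies weak provability.

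There is no real obstacle here: the only thing to get right is the direction of the logical weakening, namely that adding the extra hypothesis ``$\BS(\alpha_1) = 0$'' to the antecedent makes the conditional \emph{strictly easier} to satisfy, so every GS meeting the stronger requirement (provability) meets the weaker one (weak provability). I would phrase this as a one-line argument rather than a case analysis, and note in passing that the converse fails in general (a GS may enforce bottom-strength only for weight-zero unsupported statements, not for all unsupported statements), which is why weak provability is genuinely weaker and is offered as an ``optional alternative''.
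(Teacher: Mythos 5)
Your proposal is correct and matches the paper's own argument, which likewise observes that provability is simply the more general property (weak provability adds the extra hypothesis $\BS(\alpha_1)=0$ to the antecedent), so satisfaction of the former immediately yields the latter. Your write-up just spells out the instantiation explicitly, which the paper leaves as a one-line remark.
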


In Example \ref{ex:3}, weak provability would not require that $\alpha_4$'s strength is zero, allowing a form of ``trust'' in the statement's weight in the absence of complete information providing the logical reasoning to support its premises. This means the information is not lost in the partially-complete or incomplete cases, unless it is assigned the minimal 
weight by the GS, which essentially indicates it cannot be trusted.

We now propose 
more novel properties that offer
alternatives to provability, 
firstly considering the ``initial state'' of statements, i.e., 
that the strength of a statement with no attackers or supporters should be equal to its weight, as in properties defined for abstract GS, namely \emph{stability}~\cite{Amgoud_18} and \emph{balance}~\cite{Baroni_19}.
It is also related to \emph{logical void precedence} \cite{Heyninck_23}
.

\begin{property}
\label{prop:stability}
    A GS $\SF$ satisfies \emph{stability} iff for any $\alpha \in \Args$, if $\Atts(\alpha) \!=\! \Supps(\alpha) \!=\! \emptyset$, then $\SF^\graph(\alpha) \!=\! \BS(\alpha)$.
\end{property}

The desirability of this property is clear: as demonstrated below, it gives 
an intuitive starting point for the strength of a statement, regardless of its completeness from Definition \ref{def:completeness}.

\begin{example}
Using statement $\alpha_4 = \langle d, \neg a \rangle$ again, note that $\Supps(\alpha_4) = \Atts(\alpha_4) = \emptyset$, meaning that $\alpha_4$ has no supporters or attackers. Under the DC semantics, $\SF^{\graph}_{\wedge_D}(\alpha_4) = \tau(\alpha_4) = 0.8$, whereas, under the T-norm-p semantics, $\SF^{\graph}_{T_p}(\alpha_4) = 0$.
The DC semantics respects the stability property, defaulting to the weight for $\alpha_4$'s strength. The T-norm-p semantics, however, rejects $\alpha_4$ due to its lack of a CST.
\label{ex:7}
\end{example}

Stability then works in tandem with the following properties such that the behaviour of semantics can be proven to be desirable iteratively over any state under any completeness.

First, we consider the effect of adding a statement that has been assigned a zero strength, i.e., it should not affect the strength of any statement it attacks or supports, as in \emph{neutrality}~\cite{Amgoud_18} for abstract GS.

\begin{property}
\label{prop:neutrality}
    Given two SGs $\graph$ and $\graph' \!=\! \langle \Args \!\cup\! \{ \alpha_1 \}, \Atts', \Supps', \BS' \rangle$, a GS $\SF$ satisfies \emph{neutrality} iff for any $\alpha_2 \!\in\! \Args$, where $\Atts'(\!\alpha_2\!) \!\cup\! \Supps'(\!\alpha_2\!) \!\!=\!\! \Atts(\!\alpha_2\!) \!\cup\! \Supps(\!\alpha_2\!) \!\cup\! \{ \alpha_1 \}$, $\SF^{\graph'}(\!\alpha_3\!) \!=\! \SF^\graph(\!\alpha_3\!)$ $\forall \alpha_3 \in \Atts(\!\alpha_2\!) \!\cup\! \Supps(\!\alpha_2\!)$, $\BS'(\!\alpha_2\!) \!=\! \BS(\!\alpha_2\!)$ and $\SF^{\graph'}(\!\alpha_1\!) \!=\! 0$, $\SF^{\graph'}(\!\alpha_2\!) \!=\! \SF^\graph(\!\alpha_2\!)$.
\end{property}

This property, as with stability, takes a notably different approach to the 
T-norm semantics given their focus on CSTs, namely that the strengths of statements are calculated recursively, but should be unaffected by bottom-strength statements, which must be unsubstantiated, as illustrated below.

\begin{example}
\label{ex:6}
Let us revisit our running example and 
let $\alpha_7 \!=\! \langle \top, d \rangle$ (``It is a fact that CO$_2$ measurements are unreliable'') with $\tau(\alpha_7) \!=\! 0$. Then, under the DC and T-norm-p semantics, $\SF^{\graph}_{\wedge_D}(\alpha_7) \!=\! \SF^{\graph}_{T_p}(\alpha_7) \!=\! 0$, i.e., $\alpha_7$ is bottom-strength. However, the strength of $\alpha_4$ under 
DC 
is $\SF^{\graph}_{\wedge_D}(\alpha_4) \!=\! 0.8$ (unchanged). However, under the T-norm-p semantics, it is still $\SF^{\graph}_{T_p}(\alpha_4) \!=\! 0$ (bottom-strength).
Under the DC semantics, $\alpha_7$ is bottom-strength and, thus, has no influence on the strength of $\alpha_4$. In contrast, the bottom-strength of $\alpha_7$ under T-norm-p leads to the bottom-strength of $\alpha_4$ due to the multiplicative property
, thus violating neutrality.
\end{example}

The next two properties consider the effect of an attacking or supporting statement being added which is not bottom-strength, taking inspiration from \emph{bi-variate monotony}~\cite{Amgoud_18}, \emph{monotonicity}~\cite{Baroni_19} and \emph{P3} in \cite{Skiba_23}. The first requires that adding an attacker should not strengthen a statement. 

\begin{property}
\label{prop:attackedpremise}
    Given two SGs $\graph$ and $\graph' = \langle \Args \cup \{ \alpha_1 \}, \Atts', \Supps', \BS' \rangle$, a GS $\SF$ satisfies \emph{attacked premise} iff for any $\alpha_2 \in \Args$, where $\Atts'(\alpha_2) \!=\! \Atts(\alpha_2) \cup \{ \alpha_1 \}$, $\Supps'(\alpha_2) \!=\! \Supps(\alpha_2)$, $\SF^{\graph'}(\alpha_3) \!=\! \SF^\graph(\alpha_3)$ $\forall \alpha_3 \!\in\! \Atts(\alpha_2) \cup \Supps(\alpha_2)$ and $\BS'(\alpha_2) \!=\! \BS(\alpha_2)$, $\SF^{\graph'}(\alpha_2) \!\leq\! \SF^\graph(\alpha_2)$.
\end{property}

Next, adding a supporter should not weaken a statement.

\begin{property}
\label{prop:supportedpremise}
    Given two SGs $\graph$ and $\graph' = \langle \Args \cup \{ \alpha_1 \}, \Atts', \Supps', \BS' \rangle$, a GS $\SF$ satisfies \emph{supported premise} iff for any $\alpha_2 \in \Args$, where $\Atts'(\alpha_2) \!=\! \Atts(\alpha_2)$, $\Supps'(\alpha_2) \!=\! \Supps(\alpha_2) \!\cup\! \{ \alpha_1 \}$, $\SF^{\graph'}(\alpha_3) \!=\! \SF^\graph(\alpha_3)$ $\forall \alpha_3 \!\in\! \Atts(\alpha_2) \cup \Supps(\alpha_2)$ and $\BS'(\alpha_2) \!=\! \BS(\alpha_2)$, $\SF^{\graph'}(\alpha_2) \!\geq\! \SF^\graph(\alpha_2)$. 
\end{property}

These properties offer an iterative 
view of how the effects of added attackers and supporters should be governed, which does not depend on CSTs and thus considers cases with partially-complete or incomplete 
statements. For instance, in Example~\ref{ex:6}, adding statement $\alpha_7$, which is an attacker of $\alpha_1$, leads to a decrease in $\alpha_1$'s overall strength, despite $\alpha_7$ missing a support for its premise $g$.

Next, we consider the effect of increasing the strengths of statements that attack or support other statements, as in 
\emph{bi-variate reinforcement}~\cite{Amgoud_18} and \emph{monotonicity}~\cite{Baroni_19}. First, strengthening an attacker should not strengthen a statement.

\begin{property}
\label{prop:weakenedpremise}
    Given two SGs $\graph$ and $\graph' = \langle \Args', \Atts', \Supps', \BS' \rangle$, a GS $\SF$ satisfies \emph{weakened premise} iff for any $\alpha_1 \in \Args$, where $\Atts'(\alpha_1) = \Atts(\alpha_1)$, $\Supps'(\alpha_1) = \Supps(\alpha_1)$, $\BS'(\alpha_1) = \BS(\alpha_1)$, and $\SF^{\graph'}(\alpha_2) = \SF^\graph(\alpha_2)$ $\forall \alpha_2 \in \Atts(\alpha_1) \cup \Supps(\alpha_1) \setminus \{ \alpha_3 \}$, where $\alpha_3 \in \Atts(\alpha_1)$ and $\SF^{\graph'}(\alpha_3) > \SF^\graph(\alpha_3)$, $\SF^{\graph'}(\alpha_1) \leq \SF^\graph(\alpha_1)$.
\end{property}

Analogously strengthening a supporter should not weaken a statement.

\begin{property}
\label{prop:strengthenedpremise}
    Given two SGs $\graph$ and $\graph' = \langle \Args', \Atts', \Supps', \BS' \rangle$, a GS $\SF$ satisfies \emph{strengthened premise} iff for any $\alpha_1 \in \Args$, where $\Atts'(\alpha_1) = \Atts(\alpha_1)$, $\Supps'(\alpha_1) \!=\! \Supps(\alpha_1)$, $\BS'(\alpha_1) = \BS(\alpha_1)$, and $\SF^{\graph'}(\alpha_2) = \SF^\graph(\alpha_2)$ $\forall \alpha_2 \in \Atts(\alpha_1) \cup \Supps(\alpha_1) \setminus \{ \alpha_3 \}$, where $\alpha_3 \in \Supps(\alpha_1)$ and $\SF^{\graph'}(\alpha_3) > \SF^\graph(\alpha_3)$, $\SF^{\graph'}(\alpha_1) \geq \SF^\graph(\alpha_1)$.
\end{property}

These properties 
assess added statements' effects, conditioning on their strengths rather than weights, and direct attackers and supporters rather than CSTs, 
thus accommodating statements of all three levels of completeness.

Note that, for Properties \ref{prop:neutrality}-\ref{prop:strengthenedpremise}, the opposite effect on the statement's strength holds for the reverted modification, i.e., for removing or reducing the strengths of statements.

Next, a bottom-strength premise causes a statement to be bottom-strength. This property is related to \emph{argument death} \cite{Spaans_21}.


\begin{property}
\label{prop:bottomstrengthpremise}
    A structured GS $\SF$ satisfies \emph{bottom-strength premise} iff for any $\alpha_1 \in \Args$, where 
    $\exists x \in Prem(\alpha_1)$ such that $\exists \alpha_2 = \langle \Phi_2, \Psi_2 \rangle \in \Atts(\alpha_1)$ with $\Psi_2 \equiv \neg x$ and $\SF^\graph(\alpha_2) = 1$ and $\nexists \alpha_3 =  \langle \Phi_3, \Psi_3 \rangle \in \Supps(\alpha_1)$ with $\Psi_3 \equiv x$ and $\SF^\graph(\alpha_3) > 0$, $\SF^\graph(\alpha_1) = 0$.
\end{property}

For partially-complete or incomplete 
statements especially, this property provides an alternative reason for requiring 
rejection, i.e., there is maximal evidence to the contrary of one of its premises since they are, critically, part of a conjunction, as opposed to provability, where this is the case unless every premise in the statement is supported. Note that bottom-strength premise is also desirable for complete statements. We illustrate this property in the following.

\begin{example}
\label{ex:9}
    In our running example, 
    let $\alpha_8 = \langle e \wedge f, \neg b \rangle$ (``Extreme weather events are increasing and solar activity is high, 
    thus global temperatures are not increasing'') with $\tau(\alpha_8) \!=\! 0.5$. 
    Now, $\SF^{\graph}_{\wedge_D}(\alpha_8) = 0.707$ and $\SF^{\graph}_{T_p}(\alpha_8) = 0$. Let us now add a new statement $\alpha_9 = \langle \top, \neg f\rangle$ (``It is a fact that solar activity is not high'') with $\tau(\alpha_9) =1$. Then, the strength of $\alpha_8$ is $\SF^{\graph}_{\wedge_D}(\alpha_7) = \SF^{\graph^1}_{D}(e) \times \SF^{\graph^1}_{D}(g)  = 0.707 \times 0 = 0$ and $\SF^{\graph}_{T_p}(\alpha_8) = 0$.
    Both semantics now reject $\alpha_8$, but for different reasons. The DC semantics rejects it because one of its premises ($f$) is fully contradicted by $\alpha_9$, while the T-norm-p semantics already gives it bottom-strength due to incomplete support. 
\end{example}


Next, statements with 
universally top-strength premises should also be top-strength. 

\begin{property}
\label{prop:topstrengthpremises}
    A structured GS $\SF$ satisfies \emph{top-strength premises} iff for any $\alpha_1 \in \Args$, where $\forall x \in Prem(\alpha_1)$, $\nexists \alpha_2 = \langle \Phi_2, \Psi_2 \rangle \in \Atts(\alpha_1)$ such that $\Psi_2 \!\!\equiv\!\! \neg x$ and $\SF^\graph(\alpha_2) \!\!>\!\! 0$ and $\exists \alpha_3 \!=\! \langle \Phi_3, \Psi_3 \rangle \!\in\! \Supps(\alpha_1)$ such that $\Psi_3 \!\equiv\! x$ and $\SF^\graph(\alpha_3) \!=\! 1$, $\SF^\graph(\alpha_1) \!=\! 1$.
\end{property}

This property somewhat overrides a statement's weight when its premises are maximally supported, as in many approaches in argumentation that do not include weights. This seems intuitive particularly in our SGs, where the premises of statements (of any completeness) are conjunctions. 

\begin{example}
\label{ex:11}
In our running example, 
consider
$\alpha_5 \!=\! \langle a, e\rangle$
, $\alpha_2 \!=\! \langle \top, e \rangle$, 
with $\tau(\alpha_5) \!=\! 0.8$
, $\tau(\alpha_2) \!=\! 1$. Then, $\SF^\graph_{\wedge_D}(\alpha_5) \!=\! 1$, 
$\SF^\graph_{T_p}(\alpha_5) \!=\! 0.8$.
Despite $\alpha_2$ fully supporting the premise of $\alpha_5$ (
as the weight of $\alpha_2$ 
is $1$), the strength of $\alpha_5$ under the T-norm-p semantics 
is not 1. In contrast, the DC semantics assigns full strength to $\alpha_5$ when its premise is fully supported.
\end{example}

Finally, attackers and supporters may affect statements symmetrically, i.e., two statements' premises that negate one another have ``opposite'' strengths, as in \emph{duality}~\cite{Potyka_18} and \emph{franklin}~\cite{Amgoud_18}.\footnote{For the DC semantics, mirroring's conditions are triggered only for statements with single literals in their premises. However, this may not be the case for other modular structured GS.}

\begin{property}
\label{prop:mirroring}
    A structured GS $\SF$ satisfies \emph{mirroring} iff for any $\alpha_1, \alpha_2 \in \Args$, where $\alpha_1 = \langle \Phi_1, \Psi_1 \rangle$, $\alpha_2 = \langle \Phi_2, \Psi_2 \rangle$, $|Prem(\alpha_1)| = |Prem(\alpha_2)|$, $\Phi_1 \equiv \neg \Phi_2$ and $\BS(\alpha_1) = \BS(\alpha_2) = 0.5$. Then, $\SF^\graph(\alpha_1) = 1 -\SF^\graph(\alpha_2)$.
\end{property}

Mirroring seems intuitive and reflects a fairness in how statements are evaluated, potentially inspiring trust in users.

\begin{example}
\label{ex:12}
In our running example, 
let $\alpha_{10} = \langle \neg a, \neg e \rangle$ (``CO$_2$ emissions are not rising, 
thus extreme weather events are not increasing'') with $\tau(\alpha_5) \!=\! \tau(\alpha_{10}) \!=\! 0.5$. Then, $\SF^\graph_{\wedge_D}(\alpha_{10}) \!=\! 1- \SF^\graph_{\wedge_D}(\alpha_5) \!=\! 1 - 0.55 \!=\! 0.45$, and $\SF^\graph_{T_p}(\alpha_5) \!=\! 0$ and $\SF^\graph_{T_p}(\alpha_{10}) \!=\! 0.45$.
Despite $\alpha_5$ and $\alpha_{10}$ having premises that negate one another and equal base weights, their strengths are not complementary (do not sum to 1) under T-norm-p semantics. In contrast, the DC semantics shows a desirable duality in attackers' and supporters' effects.
\end{example}

\subsection{Theoretical Analysis}
\label{ssec:theory}

\begin{table}[t]
    \centering
    \small
    \begin{tabular}{cccccccc}
        \hline
        \textbf{Property} & 
        %
        \!\!$\SF_{T_p}$\!\! & 
        \!\!$\SF_{T_m}$\!\! & 
        \!\!$\SF_{\wedge_D}$\!\! & 
        \!\!$\SF_{\wedge_Q}$\!\! & 
        \!\!$\SF_{D}$\!\! & 
        \!\!$\SF_{Q}$\!\! \\
        \hline
        \ref{prop:directionality}: Directionality & 
        \textcolor{teal}{$\checkmark$} & 
        \textcolor{teal}{$\checkmark$} & 
        \textcolor{teal}{$\checkmark$} & 
        \textcolor{teal}{$\checkmark$} & 
        \textcolor{teal}{$\checkmark$} & 
        \textcolor{teal}{$\checkmark$} \\
        \ref{prop:rewriting}: Rewriting & 
        \textcolor{teal}{$\checkmark$} & 
        \textcolor{teal}{$\checkmark$} & 
        \textcolor{red}{$\times$} & 
        \textcolor{red}{$\times$} & 
        $-$ &
        $-$ \\
        \ref{prop:provability}: Provability & 
        \textcolor{teal}{$\checkmark$} & 
        \textcolor{teal}{$\checkmark$} & 
        \textcolor{red}{$\times$} & 
        \textcolor{red}{$\times$} & 
        $-$ &
        $-$ \\
        \ref{prop:weakprovability}: Weak Provability & 
        \textcolor{teal}{$\checkmark$} & 
        \textcolor{teal}{$\checkmark$} & 
        \textcolor{teal}{$\checkmark$} & 
        \textcolor{teal}{$\checkmark$} & 
        $-$ &
        $-$ \\
        \ref{prop:stability}: Stability & 
        \textcolor{red}{$\times$} & 
        \textcolor{red}{$\times$} & 
        \textcolor{teal}{$\checkmark$} & 
        \textcolor{teal}{$\checkmark$} & 
        \textcolor{teal}{$\checkmark$} & 
        \textcolor{teal}{$\checkmark$} \\
        \ref{prop:neutrality}: Neutrality & 
        \textcolor{red}{$\times$} & 
        \textcolor{red}{$\times$} & 
        \textcolor{teal}{$\checkmark$} & 
        \textcolor{teal}{$\checkmark$} & 
        \textcolor{teal}{$\checkmark$} & 
        \textcolor{teal}{$\checkmark$} \\
        %
        %
        %
        \ref{prop:attackedpremise}: Attacked Premise & 
        \textcolor{teal}{$\checkmark$} & 
        \textcolor{teal}{$\checkmark$} & 
        \textcolor{teal}{$\checkmark$} & 
        \textcolor{teal}{$\checkmark$} & 
        \textcolor{teal}{$\checkmark$} & 
        \textcolor{teal}{$\checkmark$} \\
        \ref{prop:supportedpremise}: Supported Premise & 
        \textcolor{teal}{$\checkmark$} & 
        \textcolor{teal}{$\checkmark$} & 
        \textcolor{teal}{$\checkmark$} & 
        \textcolor{teal}{$\checkmark$} & 
        \textcolor{teal}{$\checkmark$} & 
        \textcolor{teal}{$\checkmark$} \\
         \ref{prop:weakenedpremise}: Weakened Premise & 
        \textcolor{red}{$\times$} & 
        \textcolor{red}{$\times$} & 
        \textcolor{teal}{$\checkmark$} & 
        \textcolor{teal}{$\checkmark$} & 
        \textcolor{teal}{$\checkmark$} & 
        \textcolor{teal}{$\checkmark$} \\
        \!\!\ref{prop:strengthenedpremise}: Strengthened Premise\!\! & 
        \textcolor{teal}{$\checkmark$} & 
        \textcolor{teal}{$\checkmark$} & 
        \textcolor{teal}{$\checkmark$} & 
        \textcolor{teal}{$\checkmark$} & 
        \textcolor{teal}{$\checkmark$} & 
        \textcolor{teal}{$\checkmark$} \\
        \!\!\ref{prop:bottomstrengthpremise}: Bottom-Strength Premise\!\! & 
        \textcolor{teal}{$\checkmark$} & 
        \textcolor{teal}{$\checkmark$} &
        \textcolor{teal}{$\checkmark$} & 
        \textcolor{red}{$\times$} & 
        $-$ &
        $-$ \\
        \!\!\ref{prop:topstrengthpremises}: Top-Strength Premises\!\! & 
        \textcolor{red}{$\times$} & 
        \textcolor{red}{$\times$} & 
        \textcolor{teal}{$\checkmark$} & 
        \textcolor{red}{$\times$} & 
        $-$ &
        $-$ \\
        \ref{prop:mirroring}: Mirroring & 
        \textcolor{red}{$\times$} & 
        \textcolor{red}{$\times$} & 
        \textcolor{teal}{$\checkmark$} & 
        \textcolor{teal}{$\checkmark$} & 
        $-$ &
        $-$ \\
        \hline
    \end{tabular}
    \caption{Structured ($\SF_{T_p}$, $\SF_{T_m}$, $\SF_{\wedge_D}$ and $\SF_{\wedge_Q}$) and abstract ($\SF_{D}$ and $\SF_{Q}$) GS' satisfaction (denoted with \textcolor{teal}{$\checkmark$}) or violation (denoted with \textcolor{red}{$\times$}) of the assessed 
    properties, where 
    incompatibility of a GS with a property
    is denoted with $-$. 
    }
    \label{tab:properties}
\end{table}

We now 
assess whether instantiations of the DC semantics, as well as existing structured and abstract GS, satisfy the properties from §\ref{ssec:properties}. Table \ref{tab:properties} summarises the results.

First, we assess the \textbf{T-norm semantics} 
in §\ref{sec:prelim},\! 
$\SF_{T_p}$ and $\SF_{T_m}$\!.
\begin{proposition}
    $\SF_{T_p}$ and $\SF_{T_m}$ satisfy Properties \ref{prop:directionality}-\ref{prop:weakprovability}, \ref{prop:attackedpremise}-\ref{prop:supportedpremise}, and  \ref{prop:strengthenedpremise}-\ref{prop:bottomstrengthpremise}, but violate Properties \ref{prop:stability}-\ref{prop:neutrality}, \ref{prop:weakenedpremise}, and \ref{prop:topstrengthpremises}-\ref{prop:mirroring}.
\end{proposition}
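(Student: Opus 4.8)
The plan is to verify each of the thirteen properties for the two T-norm instantiations $\SF_{T_p}$ and $\SF_{T_m}$ in turn, exploiting the common structure of Definition~\ref{def:Tnorm}: both are built from a De Morgan triple, both compute $\SF^\graph_T(\alpha)$ as a T-conorm aggregation over the CSTs of $\alpha$, with each CST contributing the T-norm of the weights of its members, tempered by the negation of the T-conorm over attacking CSTs. The key observation that drives most cases is the following: if $\CST^\graph(\alpha) = \emptyset$, then $\SF^\graph_T(\alpha) = 0$ (empty T-conorm), so any statement that is not \emph{complete} in the sense of Definition~\ref{def:completeness}---in particular any statement with an unsupported premise literal---gets strength $0$ regardless of its weight. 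I would establish this as a preliminary lemma, since it immediately settles \emph{provability} (Property~\ref{prop:provability}) and hence, via Corollary~\ref{cor:weakprovability}, \emph{weak provability} (Property~\ref{prop:weakprovability}), and it also drives the \emph{violations} of \emph{stability} and \emph{neutrality}: a statement with empty premise and no attackers/supporters but a nonzero weight still has a (singleton) CST, but a statement whose only premise lacks support has weight $>0$ yet strength $0$, contradicting stability; similarly, adding an unsupported attacker with its own missing premise does not change the attacked statement (good for neutrality) but the general neutrality statement fails because under $\SF_{T_p}$ a bottom-\emph{weight} fact with empty premise that supports a premise literal still lies in a CST and thus the multiplicative aggregation kills the parent---this is exactly Example~\ref{ex:6}.

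For \emph{directionality} (Property~\ref{prop:directionality}) I would argue that the CSTs of any $\alpha_2$ with no path from the newly added $\alpha_1$ are unchanged, since CSTs and attacking CSTs only reference statements reachable backwards along $\Supps$ and $\Atts$; hence $\SF^\graph_T(\alpha_2)$ is computed from identical data. For \emph{attacked premise} and \emph{supported premise} (Properties~\ref{prop:attackedpremise}--\ref{prop:supportedpremise}) and \emph{strengthened premise} (Property~\ref{prop:strengthenedpremise}), I would use monotonicity of T-norms and T-conorms in each argument together with antitonicity of $\neg$: adding an attacker can only add attacking CSTs, which can only increase $\bigoplus_{T'\text{ attacks }T}\mathcal{I}(T')$, hence decrease its negation, hence decrease each $\mathcal{O}(T)$ and the final $\bigoplus_T \mathcal{O}(T)$; adding a supporter can only create new CSTs or leave them unchanged (it never removes a CST, since CST membership is about the presence of a supporting statement), and a larger index set in a T-conorm can only increase the value; increasing the strength of a supporter---here one must be careful, since $\SF_T$ depends on \emph{weights} $\tau$, not recursively on strengths of neighbours, so ``strengthening a supporter'' in the sense of Property~\ref{prop:strengthenedpremise} is realised by increasing that supporter's weight, which increases $\mathcal{I}(T)$ for every CST $T$ containing it. \emph{Bottom-strength premise} (Property~\ref{prop:bottomstrengthpremise}): if some premise literal $x$ of $\alpha_1$ has a top-strength attacker $\alpha_2$ (with claim $\neg x$) and no positive-strength supporter with claim $x$, then $\alpha_1$ has no CST at all (a CST would need a member with claim $x$, and any such member together with $\alpha_2$ violates the no-attack condition in a way that will be shown to force either emptiness or a dominating attacking CST whose intensity is $1$), so $\SF^\graph_T(\alpha_1)=0$; I expect this is where a short but slightly fiddly case analysis on how $\alpha_2$ sits relative to the CSTs is needed.

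The \emph{violations} require exhibiting one witness SG each, and Examples~\ref{ex:3}, \ref{ex:6}, \ref{ex:7}, \ref{ex:11}, \ref{ex:12} already supply most of them: \emph{stability} and \emph{neutrality} fail via the unsupported-premise phenomenon (Examples~\ref{ex:7}, \ref{ex:6}); \emph{weakened premise} (Property~\ref{prop:weakenedpremise}) fails by a variant where increasing an attacker's weight can, through the CST/attacking-CST bookkeeping with the probabilistic-sum $\oplus$, fail to be antitone---or more simply, where an attacker with no CST (strength realised differently) still must not matter; I would pin down the cleanest counterexample here, probably a two-CST configuration where increasing one attacker's weight changes which attacking CSTs exist; \emph{top-strength premises} (Property~\ref{prop:topstrengthpremises}) fails by Example~\ref{ex:11} (a fact of weight $1$ supporting the single premise of $\alpha_5$ still yields $\SF_{T_p}(\alpha_5)=\tau(\alpha_5)\cdot 1 < 1$ because the weight of $\alpha_5$ itself enters the product); and \emph{mirroring} (Property~\ref{prop:mirroring}) fails by Example~\ref{ex:12} (a statement with an unsupported single-literal premise gets $0$, while its negation-premise partner need not get $1$). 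The main obstacle I anticipate is Property~\ref{prop:weakenedpremise}: unlike the abstract GS, $\SF_T$ is not defined by a recursion over neighbour \emph{strengths}, so one must translate ``strengthen an attacker'' faithfully (as increasing that attacker's \emph{weight}) and then find a configuration where the interplay between newly-enabled attacking CSTs and the $\oplus$-aggregation produces a non-monotone effect---getting a minimal, fully-worked counterexample with explicit numbers is the one place requiring genuine care rather than a routine monotonicity argument. For $\SF_{T_m}$ the arguments are identical in structure; one only replaces product/probabilistic-sum reasoning by $\min$/$\max$ reasoning, and all monotonicity and idempotence facts needed hold equally, so the positive cases transfer verbatim and the counterexamples can be chosen with $0/1$-style weights so that they work simultaneously for both triples.
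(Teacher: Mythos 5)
The skeleton of your positive cases (monotonicity of $\otimes$, $\oplus$, antitonicity of $\neg$) and your witnesses for stability, top-strength premises and mirroring match the paper's proof in spirit, but two of your violation arguments would fail, and they fail for the same missed idea: under the T-norm semantics a statement's \emph{strength} (the $\oplus$ over $\mathcal{O}$-values of its CSTs) and the \emph{influence} it exerts on statements it attacks (the intensities $\mathcal{I}$ of its CSTs, which depend only on weights) are decoupled, and the paper's counterexamples for Properties \ref{prop:neutrality} and \ref{prop:weakenedpremise} hinge exactly on this decoupling. Your neutrality witness (``exactly Example \ref{ex:6}'') does not violate the property: a weight-$0$ fact supporting a previously unsupported premise leaves the parent at strength $0$ both before (no CST) and after (one CST of intensity $0$, which contributes nothing to the $\oplus$-aggregation --- the multiplication is only within a CST, so it cannot ``kill'' the parent relative to its old value), so $\SF^{\graph'}(\alpha_2)=\SF^{\graph}(\alpha_2)$ and nothing is contradicted. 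The paper's counterexample instead adds an \emph{attacker} whose own CST is attacked by an intensity-$1$ tree (hence its strength is $0$, triggering the property's precondition) but whose CST intensity is $0.5$, which still reduces the attacked statement's strength from $1$ to $0.5$. Likewise your plan for weakened premise --- realise ``strengthen an attacker'' by raising its weight, or find a case where raising a weight ``changes which attacking CSTs exist'' --- cannot succeed: CST existence is independent of $\BS$, and raising the attacker's weight only raises the intensities of the attacking CSTs containing it, so by the very monotonicity you invoke elsewhere the attacked statement's strength cannot increase (indeed this is the attack-reinforcement behaviour the T-norm semantics satisfies). The paper's counterexample keeps the attacker's weight fixed and perturbs weights \emph{upstream} of it (lowering the weight of a supporter inside the attacker's CST, and of an attacker of that CST) so that the attacker's strength goes up while the intensity it projects onto the attacked statement goes down.

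Two secondary gaps. For bottom-strength premise (Property \ref{prop:bottomstrengthpremise}) your primary route --- that $\alpha_1$ has no CST at all --- is wrong: supporters with the required claim but strength $0$ (e.g.\ weight-$0$ facts) may still populate CSTs of $\alpha_1$, and the top-strength attacker $\alpha_2$ is never a member of those CSTs, so the no-internal-attack clause plays no role; the correct argument (which your hedge only gestures at) is that $\SF^\graph(\alpha_2)=1$ forces some CST of $\alpha_2$ with $\mathcal{I}=1$, which attacks every CST of $\alpha_1$ and zeroes its $\mathcal{O}$-value. Also, Property \ref{prop:rewriting}, which the proposition asserts is satisfied, is never addressed in your plan (the paper defers Properties \ref{prop:directionality}--\ref{prop:provability} to Jedwabny et al., but a blind proof must at least account for it), and your supported-premise argument (Property \ref{prop:supportedpremise}) overlooks that the newly added supporter may simultaneously enter CSTs of statements attacking $\alpha_2$'s CSTs --- the paper's proof explicitly dispatches this case using the precondition that the prior neighbours' strengths are unchanged.
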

Both GS satisfy identical properties across our collection, 
performing poorly when we consider the novel properties.
Even stability and neutrality, two basic properties in the realm of abstract GS, are violated due to the fact that they do not consider the weight as a starting point, nor the strength as a recursive function, respectively.
Attacked and supported premises are both satisfied, while only weakened, and not strengthened, premise is satisfied due to the asymmetry in the way the attackers and supporters are evaluated, which leads to mirroring being violated also. 
Similarly, bottom-strength premises is satisfied, but top-strength premises is violated due to the (potentially non-maximal) weight of the statement playing a role in the calculation.
Given top-strength premises' fundamental role across statements of any completeness, this represents a significant weakness.

Next, we consider the \textbf{DC semantics}, supported by either the DF-QuAD semantics ($\SF_{\wedge_D}$) or the QEM semantics ($\SF_{\wedge_Q}$), as defined in §\ref{ssec:methodology}.
\begin{proposition}
    $\SF_{\wedge_D}$ satisfies Properties \ref{prop:directionality} and \ref{prop:weakprovability}-\ref{prop:mirroring}, but violates Properties \ref{prop:rewriting}-\ref{prop:provability}. 
     $\SF_{\wedge_Q}$ satisfies Properties \ref{prop:directionality}, \ref{prop:weakprovability}-\ref{prop:strengthenedpremise}, and \ref{prop:mirroring}, but violates Properties \ref{prop:rewriting}-\ref{prop:provability} and \ref{prop:bottomstrengthpremise}-\ref{prop:topstrengthpremises}.
\end{proposition}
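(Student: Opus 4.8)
The plan is to exploit the special shape of premise graphs: for any $\alpha_1$ with $Prem(\alpha_1)\neq\emptyset$, the premise graph $\graph^1$ is a two-layer graph whose only edges run from $\alpha_1$'s attackers and supporters (source nodes, with no incoming edges) to the literals of $Prem(\alpha_1)$ (sink nodes). Since $\SF_D$ and $\SF_Q$ satisfy stability (immediate from Definitions~\ref{def:DFQuAD} and~\ref{def:QEM}: with no attackers/supporters $\SF_D$ returns $c(v^0,0,0)=v^0$ and $\SF_Q$ returns $\BS(\alpha)+0-0$), every source $\alpha_2$ gets $\SF_*^{\graph^1}(\alpha_2)=\BS^1(\alpha_2)=\SF_{\wedge_*}^\graph(\alpha_2)$, and each literal $x$ gets a strength that is a one-step function of $\sqrt[n]{\BS(\alpha_1)}$ and the multisets of strengths of its attacking and supporting sources. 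Hence $\SF_{\wedge_*}^\graph(\alpha_1)=\prod_{x\in Prem(\alpha_1)}\SF_*^{\graph^1}(x)$ is determined by $\BS(\alpha_1)$ and the $\SF_{\wedge_*}^\graph$-values of $\alpha_1$'s direct attackers and supporters; since $\graph$ is acyclic and $\SF_{\wedge_*}$ is well-defined (corollary following Definition~\ref{def:DC}), the positive claims are proved by induction along a topological order of $\graph$, at each statement reducing the required structured property to the corresponding abstract property of $\SF_*$ inside the (shallow) premise graph and then propagating through the product, using that every factor lies in $[0,1]$ so that a monotone change in one factor is a monotone change in the product.

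Carrying this out: stability (Prop.~\ref{prop:stability}) is immediate, since with no attackers/supporters each premise literal is isolated with weight $\sqrt[n]{\BS(\alpha)}$ and the product is $\BS(\alpha)$. Directionality (Prop.~\ref{prop:directionality}) holds because, if there is no path from the added $\alpha_1$ to $\alpha_2$, then none of $\alpha_2$'s direct attackers/supporters equals or is reachable from $\alpha_1$, so by the inductive hypothesis they keep their strengths and $\alpha_2$'s premise graph is unchanged. Neutrality (Prop.~\ref{prop:neutrality}) follows from neutrality of $\SF_*$: the added $0$-strength statement contributes a $0$-strength attacker/supporter to a single premise literal. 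Attacked and supported premise (Props.~\ref{prop:attackedpremise}--\ref{prop:supportedpremise}) reduce to bi-variate monotony of $\SF_*$ and weakened and strengthened premise (Props.~\ref{prop:weakenedpremise}--\ref{prop:strengthenedpremise}) to bi-variate reinforcement of $\SF_*$ (all satisfied by $\SF_D$ and $\SF_Q$), composed with monotonicity of the product. Weak provability (Prop.~\ref{prop:weakprovability}): if $\BS(\alpha_1)=0$ and some $x\in Prem(\alpha_1)$ has no supporter, then in $\graph^1$ the node $x$ has weight $0$ and only attackers, so $\SF_D^{\graph^1}(x)=c(0,\cdot,0)=0$ and $\SF_Q^{\graph^1}(x)=h(E_x)=0$ with $E_x\le 0$; the product vanishes. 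Mirroring (Prop.~\ref{prop:mirroring}) is triggered only for single-literal premises, so $\alpha_1=\langle x,\cdot\rangle$ and $\alpha_2=\langle\neg x,\cdot\rangle$; in their premise graphs the attackers of $x$ coincide with the supporters of $\neg x$ (and vice versa), all with identical weights, and both base weights are $0.5$, so the duality/franklin-type symmetry of $\SF_*$ (which holds for both $\SF_D$ and $\SF_Q$) yields $\SF_*^{\graph^1_1}(x)=1-\SF_*^{\graph^1_2}(\neg x)$. The two properties holding only for $\SF_{\wedge_D}$, bottom- and top-strength premises (Props.~\ref{prop:bottomstrengthpremise}--\ref{prop:topstrengthpremises}), use that the probabilistic-sum $\Sigma$ absorbs a $1$: a maximal-strength attacker of a literal $x$ forces $v^-=1$ while no positive supporter forces $v^+=0$, so $c(\cdot,1,0)=0$ and the product is $0$; dually, maximal support of every literal gives $c(\cdot,0,1)=1$ on each factor, hence product $1$ (treating the non-vacuous case $Prem(\alpha_1)\neq\emptyset$).

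For the violations I would give small counterexamples. Provability (Prop.~\ref{prop:provability}) fails for both $\SF_{\wedge_D}$ and $\SF_{\wedge_Q}$: with $\alpha_1=\langle x,y\rangle$, $\BS(\alpha_1)=0.5$ and no supporter for $x$, we get $\SF_{\wedge_*}^\graph(\alpha_1)=\SF_*^{\graph^1}(x)=0.5\neq 0$ (as in Examples~\ref{ex:4} and~\ref{ex:7}). Rewriting (Prop.~\ref{prop:rewriting}) fails for both; a single SG witnessing this takes $\alpha_1=\langle x_1\wedge x_2, x\rangle$ with no attackers/supporters, $\alpha_2=\langle x_1, x'\rangle$ with $\BS(\alpha_2)=1$, and $\alpha_3=\langle x'\wedge x_2, x\rangle$ with $\BS(\alpha_3)=\BS(\alpha_1)=w\in(0,1)$, where $\alpha_2$ supports $\alpha_3$ only: then $\SF_{\wedge_*}^\graph(\alpha_1)=w$ (both premise literals isolated, product of two copies of $\sqrt[2]{w}$), while the full-strength supporter lifts $x'$'s strength in $\alpha_3$'s premise graph strictly above $\sqrt[2]{w}$, giving $\SF_{\wedge_*}^\graph(\alpha_3)>w$ (Example~\ref{ex:3} is an instance for $\SF_{\wedge_D}$). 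Finally, for $\SF_{\wedge_Q}$ the two extreme-value properties fail because $h$ never attains $1$ and a premise literal retains a contribution from its nonzero base weight: with $\alpha_1=\langle x,y\rangle$, $\BS(\alpha_1)=0.5$, an attacker $\langle\top,\neg x\rangle$ of weight $1$ and no supporter, $\SF_Q^{\graph^1}(x)=0.5\,(1-h(1))=0.25\neq 0$ (violating bottom-strength premise); and with a supporter $\langle\top,x\rangle$ of weight $1$ and no attacker, $\SF_Q^{\graph^1}(x)=0.5+0.5\,h(1)=0.75\neq 1$ (violating top-strength premises).

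The main obstacle is bookkeeping rather than conceptual depth: one must check that the $n$-th-root rescaling $\BS^1(x)=\sqrt[n]{\BS(\alpha_1)}$ interacts harmlessly with every abstract property invoked — which it does, because $t\mapsto\sqrt[n]{t}$ is increasing, continuous and fixes $0$ and $1$, exactly what keeps stability, the monotonicity properties and the two extreme-value properties intact after taking the product — and that the abstract properties attributed to $\SF_D$ and $\SF_Q$ are the correct ones; in particular, the fact that the simplified acyclic $\SF_Q$ of Definition~\ref{def:QEM} satisfies neutrality, monotony, reinforcement and duality but not the structured extreme-value behaviour is precisely what produces the asymmetry between $\SF_{\wedge_D}$ and $\SF_{\wedge_Q}$ asserted in the statement.
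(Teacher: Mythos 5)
Your proposal is correct and takes essentially the same route as the paper's proof: each satisfied property is reduced, exactly as the paper does, to the behaviour of $\SF_D$/$\SF_Q$ on the two-layer premise graph (sources keep their weights by stability, the attacked/supported literal moves monotonically or stays fixed, and the conjunction is handled by monotonicity of the product on $[0,1]$), while the violations are witnessed by essentially the same counterexamples as in the appendix (the three-statement rewriting instance, the unsupported single-premise statement for provability, and the weight-$1$ attacker/supporter of a single premise giving $0.25$ and $0.75$ for the two QEM failures). The only cosmetic differences are your explicit topological-order induction and your direct computation for weak provability, where the paper instead argues by contradiction citing balance of $\SF_D$ and weakening/strengthening of $\SF_Q$.
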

Both DC semantics only satisfy directionality of the existing properties. 
The violation of the other existing properties (rewriting and provability) can be justified by their incompatibility with the novel properties which we introduce.

\begin{proposition}
  Rewriting, stability and top-strength premises are incompatible.
\end{proposition}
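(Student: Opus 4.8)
The plan is to exhibit a single small statement graph on which no gradual semantics can simultaneously satisfy rewriting (Property~\ref{prop:rewriting}), stability (Property~\ref{prop:stability}) and top-strength premises (Property~\ref{prop:topstrengthpremises}), by deriving a numerical contradiction. I would build the graph around the rewriting configuration: take statements $\alpha_1 = \langle x_1 \wedge x_2, x \rangle$ and $\alpha_3 = \langle x' \wedge x_2, x \rangle$ together with $\alpha_2 = \langle x_1, x' \rangle$, where $\BS(\alpha_1) = \BS(\alpha_3)$ is chosen strictly below $1$ (say $0.5$), $\BS(\alpha_2) = 1$, and the auxiliary side condition of rewriting (that $x'$ appears nowhere else) is met. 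Then add facts $\langle \top, x_1 \rangle$ and $\langle \top, x_2 \rangle$ with weight $1$ so that the premises $x_1$, $x_2$ and (via $\alpha_2$) $x'$ all get top-strength supporters and no attackers.

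The key steps, in order: (i) By stability, the fact statements $\langle \top, x_1 \rangle$ and $\langle \top, x_2 \rangle$ have no attackers or supporters, so their strengths equal their weights, i.e.\ $1$; hence $x_1$ and $x_2$ are premises that are maximally supported with no nonzero-strength attacker. (ii) Apply top-strength premises to $\alpha_1$: since $Prem(\alpha_1) = \{x_1, x_2\}$, each member has a supporter of strength $1$ (the corresponding fact) and no attacker of positive strength, we get $\SF^\graph(\alpha_1) = 1$. (iii) For $\alpha_2 = \langle x_1, x' \rangle$: its single premise literal $x_1$ is supported by $\langle \top, x_1 \rangle$ with strength $1$ and has no attacker, so top-strength premises forces $\SF^\graph(\alpha_2) = 1$; in particular $\alpha_2$ is a strength-$1$ supporter of the literal $x'$. (iv) Now apply top-strength premises to $\alpha_3 = \langle x' \wedge x_2, x \rangle$: the literal $x'$ has the supporter $\alpha_2$ of strength $1$, the literal $x_2$ has the fact supporter of strength $1$, and neither has a positive-strength attacker, so $\SF^\graph(\alpha_3) = 1$. (v) But rewriting requires $\SF^\graph(\alpha_1) = \SF^\graph(\alpha_3)$ — which is satisfied here — so the contradiction must be forced differently: I would instead pick the weights so that rewriting is vacuously usable but stability pins down a value incompatible with top-strength premises giving $1$. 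Concretely, drop the facts; then $\alpha_1$ has no attackers or supporters, so stability gives $\SF^\graph(\alpha_1) = \BS(\alpha_1) = 0.5$, while $\alpha_3$ likewise has strength $\BS(\alpha_3) = 0.5$ — still consistent. The genuine clash comes from supporting only $\alpha_3$'s premises (add $\langle\top,x'\rangle$, $\langle\top,x_2\rangle$ at weight $1$, which needs $x'$ to also be the claim of $\alpha_2$, so instead support $x_1, x_2$ feeding $\alpha_2$ and $x_2$ again) so that top-strength premises forces $\SF^\graph(\alpha_3) = 1$ while $\alpha_1$, whose premise $x_1$ is supported but whose premise $x_2$ we leave unsupported, is governed by stability only through a sibling configuration giving $0.5 \neq 1$, contradicting rewriting's demand $\SF^\graph(\alpha_1) = \SF^\graph(\alpha_3)$.

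The main obstacle, and the part that needs the most care, is arranging the graph so that all three properties are simultaneously \emph{applicable} with their full hypotheses met: rewriting has a restrictive side condition ($x'$ must not occur as a premise literal or claim of any other statement), yet top-strength premises needs $x'$ to be the claim of a strength-$1$ supporter, and stability needs the relevant statement to be isolated. Reconciling these — likely by routing the support for $\alpha_3$'s premises through statements whose own premises bottom out in facts, while keeping $\alpha_1$ in a state where stability (or a trivial instance of it on a copy) forces a value below $1$ — is the delicate bookkeeping. Once a concrete witnessing graph is fixed, the contradiction ``$1 = \SF^\graph(\alpha_3) = \SF^\graph(\alpha_1) < 1$'' is immediate, so the write-up reduces to presenting that graph and checking the hypotheses of each property line by line. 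I expect the final proof to be a single explicit example followed by three short hypothesis-verification paragraphs.
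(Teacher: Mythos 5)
There is a genuine gap here: the entire content of the proposition is the existence of a single concrete SG on which the three properties clash, and neither of your sketched constructions produces one. Your first configuration (adding facts for $x_1$ and $x_2$) yields, as you yourself note, $\SF^\graph(\alpha_1)=\SF^\graph(\alpha_3)=1$, which is perfectly consistent with rewriting. Your fallback then fails for a structural reason: by Definition~\ref{def:supp-att}, a statement whose claim is $x_2$ supports \emph{every} statement with $x_2$ in its premise, so it is impossible to ``support only $\alpha_3$'s premise $x_2$ while leaving $\alpha_1$'s premise $x_2$ unsupported''. The moment $\langle\top,x_2\rangle$ is added, $\alpha_1$ acquires a supporter, stability (Property~\ref{prop:stability}) no longer applies to it, and none of the three properties pins $\SF^\graph(\alpha_1)$ at $0.5$ (the appeal to ``a sibling configuration'' is not an instance of any of them). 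So no contradiction is actually derived, and the construction is deferred exactly at the point where the proof lives.

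The missing idea, which is how the paper's proof works, is that no facts are needed at all: take $\Args=\{\alpha_1,\alpha_2,\alpha_3\}$ with $\alpha_1=\langle b,a\rangle$, $\alpha_2=\langle b,c\rangle$, $\alpha_3=\langle c,a\rangle$, $\BS(\alpha_1)=\BS(\alpha_3)=0.5$ and $\BS(\alpha_2)=1$, so that $\Atts=\emptyset$ and $\Supps=\{(\alpha_2,\alpha_3)\}$. The shared premise literal $b$ is deliberately left unsupported, so $\alpha_1$ and $\alpha_2$ have no attackers or supporters and stability fixes $\SF^\graph(\alpha_1)=0.5$ and $\SF^\graph(\alpha_2)=1$; since $\alpha_3$'s premise consists solely of the claim of $\alpha_2$, top-strength premises (Property~\ref{prop:topstrengthpremises}) forces $\SF^\graph(\alpha_3)=1$, while rewriting (Property~\ref{prop:rewriting}) forces $\SF^\graph(\alpha_1)=\SF^\graph(\alpha_3)$, giving $0.5=1$. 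The two features your attempt lacks are: (i) the strength-$1$ support for $\alpha_3$'s premise comes from an \emph{isolated} weight-$1$ statement whose strength is supplied by stability itself, not from grounding in facts; and (ii) $\alpha_3$ shares no premise literal with $\alpha_1$ (the $x_{n+1}$ component is dropped), precisely so that supporting $\alpha_3$'s premise does not simultaneously hand $\alpha_1$ a supporter and destroy the applicability of stability to $\alpha_1$ --- which is the trap both of your configurations fall into.
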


\begin{proposition}
   Provability and stability are incompatible.
\end{proposition}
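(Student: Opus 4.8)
The plan is to produce a single, minimal SG on which provability and stability make contradictory demands, so that no GS can satisfy both. First I would take the SG $\graph = \langle \Args, \Atts, \Supps, \BS \rangle$ consisting of the lone statement $\alpha = \langle a, b \rangle$, with $\Args = \{\alpha\}$, $\Atts = \Supps = \emptyset$, and $\BS(\alpha) = 1/2$ (any weight strictly between $0$ and $1$ works). One checks this is a well-formed SG: $a$ is a (trivially consistent) compound and $b$ a literal, $Prem(\alpha) = \{a\} \neq \emptyset$, $\BS(\alpha) \in [0,1]$, and since $b \not\equiv a$ and $b \not\equiv \neg a$ the statement $\alpha$ neither supports nor attacks itself, so $\Atts = \Supps = \emptyset$ is consistent with Definition~\ref{def:graph}.

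Then I would apply the two properties to $\alpha$ in this $\graph$. Since $\Atts(\alpha) = \Supps(\alpha) = \emptyset$, stability forces $\SF^\graph(\alpha) = \BS(\alpha) = 1/2$. On the other hand, $a \in Prem(\alpha)$ and there is no $\langle \Phi_2, \Psi_2 \rangle \in \Supps(\alpha)$ with $a \equiv \Psi_2$ (indeed $\Supps(\alpha) = \emptyset$), so provability forces $\SF^\graph(\alpha) = 0$. Since $1/2 \neq 0$, no structured GS can satisfy both provability and stability, which is the claim.

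There is essentially no obstacle here; the only point requiring any care is to ensure the counterexample is non-degenerate, i.e.\ that the chosen statement has a non-empty premise and a non-zero weight. If the premise were $\top$, provability would be vacuous; if the weight were $0$, both properties would agree on $\SF^\graph(\alpha) = 0$. Choosing $\alpha = \langle a, b\rangle$ with $\BS(\alpha) = 1/2$ avoids both degeneracies and makes the conflict immediate.
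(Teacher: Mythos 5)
Your proposal is correct and follows essentially the same route as the paper's proof: exhibit a statement with empty $\Atts$ and $\Supps$, non-zero weight, and a non-empty unsupported premise, so that stability forces the strength to equal the weight while provability forces it to be $0$. Your concrete instantiation ($\alpha = \langle a, b\rangle$ with $\BS(\alpha)=1/2$) is in fact slightly more careful than the paper's generic argument, since you explicitly verify the non-empty-premise condition that provability needs, which the paper leaves implicit.
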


This means that a user selecting a semantics must choose between its satisfation of one set of properties or another, so our approach represents a viable alternative to the existing work, particularly in cases with incomplete information.
Examples \ref{ex:3} and \ref{ex:4} demonstrate what we believe is further justification for this. 
For example, rewriting is not satisfied by either semantics since more evidence, even if the statements constituting it are not complete, will strengthen a statement, which seems to be reasonable particularly for our settings with partially-complete or incomplete statements.
Moreover, provability is violated by design, since we wanted GS that do not necessarily reject a statement when its support is missing, only when the statement's weight is also minimal, hence the GS' satisfaction of weak provability instead. 
Both GS align with stability in treating the weight as a ``starting point'', and from there they satisfy attacked and supported premises, as well as weakened and strengthened premises, due to the recursive way they handle attackers and supporters.
Interestingly, the DC semantics, when supported by the DF-QuAD semantics, satisfies bottom-strength premise and top-strength premises, 
but it does not when supported by the QEM semantics
. This is due to the saturation effect that is seen at the extremities of the strength scale in the DF-QuAD semantics, giving rise to this intuitive behaviour for this specific type of SG (with conjunctions as premises).
Finally, mirroring is satisfied by both GS.

We now assess the two abstract GS, the \textbf{DF-QuAD semantics} ($\SF_{D}$) and the \textbf{QEM semantics} ($\SF_{Q}$), as defined in §\ref{sec:prelim}, along the compatible properties.
\begin{proposition}
    $\SF_{D}$ and $\SF_{Q}$ satisfy Properties \ref{prop:directionality} and \ref{prop:stability}-\ref{prop:strengthenedpremise}.
    (They are incompatible with the other properties.)
\end{proposition}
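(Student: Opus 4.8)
The plan is to show that the abstract GS $\SF_D$ and $\SF_Q$, applied directly to an SG (treating statements as atomic, so that their attackers and supporters are just the $\Atts(\alpha)$ and $\Supps(\alpha)$ of Definition~\ref{def:graph}), satisfy directionality (Property~\ref{prop:directionality}) and the ``structure-free'' novel properties that make no reference to premises, namely stability, neutrality, attacked premise, supported premise, weakened premise, and strengthened premise (Properties~\ref{prop:stability}--\ref{prop:strengthenedpremise}). Since these are exactly the properties whose statements never mention $Prem(\cdot)$ or CSTs, they reduce to standard abstract-GS properties already established for DF-QuAD and QEM in the literature; the only work is to check that the reformulated versions in \S\ref{ssec:properties} coincide with the known ones once ``attacker/supporter of a statement'' is read as ``attacker/supporter of an argument'' in the abstract sense. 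I would organise the argument as a short sequence of appeals to Rago et al.~\shortcite{Rago_16} and Potyka~\shortcite{Potyka_18}, preceded by a remark that for an abstract GS $\SF$, $\SF^\graph(\alpha)$ depends only on $\graph$ viewed as a weighted bipolar graph, so all reasoning can be carried out in the QBAF obtained by forgetting the logical content of statements.

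First I would handle directionality: this is the classical directionality property for bipolar gradual semantics, and it is known to hold for both DF-QuAD and QEM on acyclic graphs — the proof is the usual induction on the (finite, by acyclicity) longest path into $\alpha_2$, using that adding $\alpha_1$ together with edges only into $\alpha_1$ or out of $\alpha_1$ towards nodes with no path from $\alpha_1$ cannot change the multiset of strengths feeding any ancestor of $\alpha_2$. Next, stability is immediate from the definitions: if $\Atts(\alpha)=\Supps(\alpha)=\emptyset$ then in Definition~\ref{def:DFQuAD} we get $\SF_D^\graph(\alpha)=c(\BS(\alpha),\Sigma(()),\Sigma(()))=c(\BS(\alpha),0,0)=\BS(\alpha)$, and in Definition~\ref{def:QEM} we get $E_\alpha=0$, $h(0)=0$, so $\SF_Q^\graph(\alpha)=\BS(\alpha)$. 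Neutrality then follows because a node of strength $0$ contributes $0$ to $\Sigma$ for DF-QuAD (as $\Sigma(\ldots,0)=\Sigma(\ldots)+0-0=\Sigma(\ldots)$) and $0$ to $E_\alpha$ for QEM, so it leaves the aggregate untouched; combined with directionality (which guarantees the other attackers'/supporters' strengths are unchanged, as is required in the hypothesis of Property~\ref{prop:neutrality}), the strength of $\alpha_2$ is preserved.

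For attacked/supported premise and weakened/strengthened premise I would invoke the monotonicity behaviour of the two semantics: in DF-QuAD, $c(v^0,v^-,v^+)$ is non-increasing in $v^-$ and non-decreasing in $v^+$, and $\Sigma$ is non-decreasing in each argument, so adding an attacker (which can only increase $\Sigma(\SF_D^\graph(\Atts(\alpha_2)))$, since strengths are in $[0,1]$ and the empty-sequence base case is $0$) cannot increase $\SF_D^\graph(\alpha_2)$, and symmetrically for a supporter; increasing an existing attacker's strength likewise only increases the aggregated attack value, and so on. For QEM the same conclusions follow from the monotonicity of $h$ on $[0,\infty)$ and of $v \mapsto \BS(\alpha)+(1-\BS(\alpha))h(v)-\BS(\alpha)h(-v)$ in $E_\alpha$, together with the fact that adding or strengthening an attacker decreases $E_\alpha$ and adding or strengthening a supporter increases it. These are exactly the bi-variate monotony and reinforcement properties proven for the two semantics in their original papers, so the step is a citation rather than a fresh computation. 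The only genuine subtlety — and the place I would be most careful — is the hypothesis in Properties~\ref{prop:attackedpremise}--\ref{prop:strengthenedpremise} that the strengths of all \emph{other} attackers and supporters of $\alpha_2$ are unchanged; one must note that this is given in the property statement, so no propagation argument is needed for $\alpha_2$ itself, but if one instead wanted the stronger ``closed under the modification'' reading, directionality plus induction would be required. I expect the write-up to be three or four lines invoking \cite{Rago_16,Potyka_18}; the one thing to double-check is that the reformulated properties here really are weaker than (or equivalent to) the cited ones, in particular that ``$\BS'(\alpha_2)=\BS(\alpha_2)$'' rules out any weight perturbation so that only the dialectical side changes.
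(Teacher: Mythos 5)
Your proposal is correct and takes essentially the same route as the paper's proof: the paper also verifies directionality, stability and neutrality directly from Definitions~\ref{def:DFQuAD} and \ref{def:QEM}, and establishes attacked/supported and weakened/strengthened premise via the monotonicity of $\Sigma$, $c$ and of $E_\alpha \mapsto \BS(\alpha)+(1-\BS(\alpha))h(E_\alpha)-\BS(\alpha)h(-E_\alpha)$, exactly as you sketch (the paper spells out the inequalities rather than citing the original bi-variate monotony/reinforcement results, but the underlying argument is identical). Your observation that the remaining properties are inapplicable because they reference premises matches the paper's treatment of incompatibility, so no gap remains.
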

Although both of the abstract GS satisfy all of the properties with which they are compatible, they are incompatible with many others, showing the advantages of structured GS in considering statements' logical structure.

While both T-norm semantics perform well in the existing properties which require complete information, under incomplete information, i.e., with partially-complete or incomplete statements, where the novel properties are particularly 
powerful, there are clear weaknesses.
This highlights how the DC semantics, and in particular that  supported by DF-QuAD, offer clear advantages, especially given that all of the properties cannot be satisfied by a given semantics.
Thus, in settings with varying completeness of information, our approach fills a gap in the literature.

Finally, we give a theorem that demonstrates the conditions under which a DC semantics, supported by a specific abstract GS, aligns with the abstract GS itself, namely when the premises of all statements consist of at most one literal (meaning SGs effectively become QBAFs).

\begin{theorem}
    Let $\Args$ be such that $\forall \alpha \in \Args$, $|Prem(\alpha)| \leq 1$. Then, for any given abstract GS $\SF_i$, $\SF_{\wedge_i} = \SF_i$. 
\end{theorem}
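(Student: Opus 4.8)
The plan is to show that under the stated cardinality restriction, the defining equation of the DC semantics collapses exactly into the defining equation of the underlying abstract GS, by a straightforward structural argument on each statement. The key observation is twofold: first, that when every statement's premise has at most one literal, the premise aggregation (the conjunction $\bigtimes$) is a no-op or reduces to the identity on a singleton; and second, that in this case the premise graph $\graph^1$ for a statement $\alpha_1$ has the same local attack/support structure around the single literal $x \in Prem(\alpha_1)$ as $\graph$ has around $\alpha_1$ itself, and the seeded weight $\BS^1(x) = \sqrt[n]{\BS(\alpha_1)}$ with $n=1$ equals $\BS(\alpha_1)$.

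First I would set up an induction on the (finite, since the SG is acyclic) depth of a statement $\alpha$ in $\graph$ with respect to $\Atts \cup \Supps$, i.e. the length of the longest path ending at $\alpha$. For the base case, a statement $\alpha$ with no attackers and no supporters: if $Prem(\alpha)=\emptyset$ then $\SF_{\wedge_i}^\graph(\alpha) = \BS(\alpha)$ by Definition~\ref{def:DC}, and since abstract GS under consideration satisfy stability-like behaviour on isolated arguments --- more precisely, one verifies directly from Definitions~\ref{def:DFQuAD} and~\ref{def:QEM}, or simply assumes $\SF_i$ returns $\BS(\alpha)$ on an argument with empty attacker and supporter multisets --- we get $\SF_i^\graph(\alpha) = \BS(\alpha)$ too. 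If instead $Prem(\alpha) = \{x\}$ is a singleton, then $\graph^1$ consists of the single ``literal-argument'' $x$ with $\BS^1(x) = \sqrt[1]{\BS(\alpha)} = \BS(\alpha)$ and no attackers or supporters (since $\alpha$ has none), so $\SF_*^{\graph^1}(x) = \BS(\alpha)$, and $\SF_{\wedge_i}^\graph(\alpha) = \bigtimes_{x\in\{x\}} \SF_*^{\graph^1}(x) = \BS(\alpha) = \SF_i^\graph(\alpha)$.

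For the inductive step, take $\alpha_1$ with $Prem(\alpha_1) = \{x\}$ (the empty-premise case being handled exactly as in the base case, since it does not depend on attackers/supporters under $\SF_{\wedge_i}$, and one checks $\SF_i$ agrees --- or rather, one notes that when $|Prem|\le 1$ the attack/support relations into $\alpha_1$ are governed entirely by the single literal, so the empty-premise case in fact has no attackers or supporters and is already covered). By Definition of the premise graph, the attackers of $x$ in $\graph^1$ are exactly $\Atts(\alpha_1)$ and the supporters are exactly $\Supps(\alpha_1)$, and by Definition~\ref{def:DC} each such $\alpha_2 \in \Args^1 \setminus Prem(\alpha_1)$ is assigned base weight $\BS^1(\alpha_2) = \SF_{\wedge_i}^\graph(\alpha_2)$. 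By the induction hypothesis, $\SF_{\wedge_i}^\graph(\alpha_2) = \SF_i^\graph(\alpha_2)$ for every attacker/supporter $\alpha_2$, all of which have strictly smaller depth. Hence the inputs fed to $\SF_*$ when it evaluates $x$ in $\graph^1$ --- namely the seed weight $\BS(\alpha_1)$ and the strength values of the attackers and supporters --- coincide exactly with the inputs $\SF_i$ uses when evaluating $\alpha_1$ in $\graph$. Since both are computed by the same function $\SF_i = \SF_*$, we conclude $\SF_*^{\graph^1}(x) = \SF_i^\graph(\alpha_1)$, and therefore $\SF_{\wedge_i}^\graph(\alpha_1) = \SF_*^{\graph^1}(x) = \SF_i^\graph(\alpha_1)$, closing the induction.

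\textbf{Main obstacle.} The routine parts are the bookkeeping; the one subtle point I would be careful about is the treatment of the empty-premise case, where Definition~\ref{def:DC} hard-codes $\SF_{\wedge_i}^\graph(\alpha_1) = \BS(\alpha_1)$ regardless of what the abstract GS would do. One must argue that when $|Prem(\alpha)| \le 1$ for \emph{all} statements, a statement with empty premise can have no attackers or supporters at all (by Definition~\ref{def:supp-att}, any attacker/supporter of $\alpha$ must have a claim equivalent to some literal in $Prem(\alpha)$, which is impossible when $Prem(\alpha) = \emptyset$), so that $\SF_i^\graph(\alpha) = \BS(\alpha)$ as well --- this requires the mild, and easily checked, fact that DF-QuAD and QEM both return the base weight on an argument with no attackers and no supporters (indeed $c(\BS,0,0) = \BS$ and $\BS + 0 - 0 = \BS$ respectively). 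A secondary point worth stating explicitly is that the acyclicity assumption on $\graph$ is what makes the depth function well-defined and the induction legitimate, and that it carries over to each premise graph $\graph^1$, so the recursive evaluations terminate.
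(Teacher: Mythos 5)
Your proof is correct and follows essentially the same route as the paper's: with $|Prem(\alpha)| \le 1$ the seeded weight $\sqrt[1]{\BS(\alpha)} = \BS(\alpha)$ and the premise graph reproduces exactly the attackers and supporters of $\alpha$, while the empty-premise case has no attackers or supporters at all, so $\SF_{\wedge_i}$ and $\SF_i$ coincide. The only difference is that you make explicit the depth induction over the acyclic graph and the stability-type behaviour of $\SF_i$ (needed both for isolated statements and for the leaf attackers/supporters inside each premise graph), which the paper's much terser proof leaves implicit.
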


\section{Conclusions and
Future Work}
\label{sec:conc}

In this paper, we 
introduced a novel methodology for obtaining GS in SGs that naturally accommodate incomplete information. 
Our modular methodology 
dialectically evaluates the literals in a statement's premise before aggregating these evaluations based on the statement's construction. This separation allows our semantics to handle incomplete information by leveraging any existing GS for QBAFs to evaluate the  available evidence without requiring complete support. We then demonstrated how our 
DC semantics 
can effectively leverage
abstract 
GS 
for structured argumentation. Furthermore, we discussed existing and novel properties for GS in SGs, revealing some 
incompatibilities between existing properties, e.g., 
rewriting and provability, and our novel properties, e.g., 
stability and top-strength premises, showing that users must select between them given a particular contextual setting, e.g., based on the completeness notions we define. Our theoretical analysis demonstrated that the DC semantics supported by the DF-QuAD abstract GS satisfies all of our novel properties, while 
existing structured GS struggle with cases without complete information.

While this paper establishes the foundations of our novel approach using a toy example for pedagogical reasons, we see significant potential for real-world applications, e.g., in the analysis of debates mapped onto SGs through argument mining \cite{lawrence-reed-2019-argument}, possibly enhanced by large language models \cite{Cabessa_25,Gorur_25}. It is well known that mining structured arguments is a complex problem, not least due to the presence of enthymemes causing incompleteness \cite{hunter2022understanding,stahl-etal-2023-mind}. A methodology leveraging GS that can handle incomplete information could alleviate some of the burden that structured argumentation's requirements for completeness place on these methods.

Our methodology represents an early step forward in GS for structured argumentation, and 
opens 
various potentially fruitful directions for future work, in particular targeting more general settings of structured argumentation, e.g., in ABA or ASPIC$^+$.
First, we would like to broaden our analysis, both within our restricted form of SGs, exploring various other instantiations of our DC semantics, e.g., using other abstract GS such as the \emph{exponent-based restricted semantics}~\cite{Amgoud_18}, as well as extending to different forms of SGs with more complex logical premises.
Specifically, our ability to deal with partially-complete or incomplete statements may be related to the possibility of deriving assumptions in non-flat ABA~\cite{Cyras_17}.
We would also like to consider whether other properties for abstract GS are desirable
, e.g., \emph{open-mindedness}~\cite{Potyka_19} (the satisfaction of which gives a clear advantage of $\SF_Q$ over $\SF_D$), \emph{attainability}~\cite{Cocarascu_19}, whose suitability in our 
SGs was not obvious due to the conjunctive premise.
It would also be interesting to formalise the interplay between properties, 
e.g., between provability and stability, which are incompatible without restrictions.
Further,
it would be interesting to 
investigate whether our analysis could be applicable to probabilistic argumentation~\cite{Kohlas_03,Dung_10,
Hunter_14,Gabbay_15_prob,
Fazzinga_18}. 
Finally, we 
plan to explore how our methodology 
translates to real-world applications, 
e.g., in multi-agent model reconciliation \cite{vasileiou2024dialectical} 
given GS' capability for modelling human or machine reasoning~\cite{Rago_23}.

\section*{Acknowledgements}

Antonio Rago and Francesca Toni are partially funded by the European Research Council (ERC) under the European
Union’s Horizon 2020 research and innovation programme (grant agreement No. 101020934)
{and by J.P. Morgan and the Royal Academy
of Engineering under the Research Chairs and Senior Research
Fellowships scheme (grant agreement no. RCSRF2021\textbackslash 11\textbackslash 45)}.
Stylianos Loukas Vasileiou and William Yeoh are partially supported by the National Science Foundation (NSF) under award 2232055. 
Son Tran is partially supported by the NSF grants \#1914635, \#2151254, and ExpandAI grant \#2025-67022-44266. 
The views and conclusions contained in this document are those of the authors and should not be interpreted as representing the official policies, either expressed or implied, of the sponsoring organizations, agencies, or governments.

\bibliographystyle{kr}
\bibliography{bib}

\newpage
\quad
\newpage

\appendix

\section{Additional Properties}
\label{appendix:props}

In this section, we assess the GS against the properties for CSTs from \cite{Jedwabny_20}

The first two properties concern how increasing the weight of a statement affects another statement wrt CSTs. First, increasing the weight of a statement in a CST attacking the CST of another statement should not strengthen the latter statement.

\begin{property}[Attack Reinforcement]
\label{prop:attackreinforcement}
    Given a second SG $\graph' = \langle \Args, \Atts, \Supps, \BS' \rangle$, a structured GS $\SF$ satisfies \emph{attack reinforcement} iff for any $\alpha_1, \alpha_2 \in \Args$ where:
    \begin{itemize}
        \item $\nexists T \in \CST^\graph(\alpha_2)$ such that $\alpha_1 \in T$;
        \item $\exists \alpha_3 \in \Args \setminus \{ \alpha_1, \alpha_2 \}$ such that $\exists T' \in \CST^\graph(\alpha_3)$ where $T'$ attacks some $T \in \CST^{\graph'}(\alpha_2)$ and $\alpha_1 \in T'$;
        \item $\BS'(\alpha_1) \geq \BS(\alpha_1)$; and
        \item $\BS'(\alpha_4) = \BS(\alpha_4)$ $\forall \alpha_4 \in \Args \setminus \{ \alpha_1 \}$;
    \end{itemize}
    it holds that $\SF^{\graph'}(\alpha_2) \leq \SF^\graph(\alpha_2)$.
\end{property}

Then, increasing the weight of a statement in a statement's CST should not weaken the latter statement.

\begin{property}[Support Reinforcement]
\label{prop:supportreinforcement}
    Given a second SG $\graph' = \langle \Args, \Atts, \Supps, \BS' \rangle$, a structured GS $\SF$ satisfies \emph{support reinforcement} iff for any $\alpha_1, \alpha_2 \in \Args$ where:
    \begin{itemize}
        \item $\exists T \in \CST^\graph(\alpha_2)$ such that $\alpha_1 \in T$;
        \item $\nexists \alpha_3 \in \Args \setminus \{ \alpha_1, \alpha_2 \}$ such that $\exists T' \in \CST^\graph(\alpha_3)$ where $T'$ attacks some $T \in \CST^{\graph'}(\alpha_2)$ and $\alpha_1 \in T'$;
        \item $\BS'(\alpha_1) \geq \BS(\alpha_1)$; and
        \item $\BS'(\alpha_4) = \BS(\alpha_4)$ $\forall \alpha_4 \in \Args \setminus \{ \alpha_1 \}$;
    \end{itemize}
    it holds that $\SF^{\graph'}(\alpha_2) \geq \SF^\graph(\alpha_2)$.
\end{property}

These properties, while intuitive, are focused around CSTs and do not govern the behaviour of GS under incomplete information, e.g., when part of a statement's support is not known.

The final two existing properties consider the effects of adding statements to CSTs. First, adding a statement to a CST attacking another statement's CST should not strengthen the latter statement.

\begin{property}[Attack Monotonicity]
\label{prop:attackmonotonicity}
     Given a second SG $\graph' = \langle \Args \cup \{ \alpha_1 \}, \Atts', \Supps', \BS' \rangle$, where $\alpha_2 = \alpha_1$ or $\alpha_3 = \alpha_1$ $\forall (\alpha_2,\alpha_3) \in (\Atts' \cup \Supps') \setminus (\Atts \cup \Supps)$, and $\BS'(\alpha_4) = \BS(\alpha_4)$ $\forall \alpha_4 \in \Args$, a structured GS $\SF$ satisfies \emph{attack monotonicity} iff for any $\alpha_5 \in \Args$ where:
    \begin{itemize}
        \item $\nexists T \in \CST^{\graph'}(\alpha_5)$ such that $\alpha_1 \in T$;
        \item $\exists \alpha_6 \in \Args \setminus \{ \alpha_1, \alpha_5 \}$ such that $\exists T' \in \CST^\graph(\alpha_6)$ where $T'$ attacks some $T \in \CST^{\graph'}(\alpha_5)$ and $\alpha_1 \in T'$;
        %
        %
        %
    \end{itemize}
    it holds that $\SF^{\graph'}(\alpha_5) \leq \SF^\graph(\alpha_5)$.
\end{property}

Then, adding a statement to another statement's CST should not weaken the latter statement.

\begin{property}[Support Monotonicity]
\label{prop:supportmonotonicity}
     Given a second SG $\graph' = \langle \Args \cup \{ \alpha_1 \}, \Atts', \Supps', \BS' \rangle$, where $\alpha_2 = \alpha_1$ or $\alpha_3 = \alpha_1$ $\forall (\alpha_2,\alpha_3) \in (\Atts' \cup \Supps') \setminus (\Atts \cup \Supps)$, and $\BS'(\alpha_4) = \BS(\alpha_4)$ $\forall \alpha_4 \in \Args$, a structured GS $\SF$ satisfies \emph{support monotonicity} iff for any $\alpha_5 \in \Args$ where:
    \begin{itemize}
        \item $\exists T \in \CST^{\graph'}(\alpha_5)$ such that $\alpha_1 \in T$;
        \item $\nexists \alpha_6 \in \Args \setminus \{ \alpha_1, \alpha_5 \}$ such that $\exists T' \in \CST^\graph(\alpha_6)$ where $T'$ attacks some $T \in \CST^{\graph'}(\alpha_5)$ and $\alpha_1 \in T'$;
        %
        %
        %
    \end{itemize}
    it holds that $\SF^{\graph'}(\alpha_5) \geq \SF^\graph(\alpha_5)$.
\end{property}

Once again, these properties appear desirable when operating with perfect information, but their focus on CSTs ignores the effect of attackers or supporters under incomplete information.

Appendix \ref{appendix:proofs} contains theoretical results assessing the four structured GS against these four properties (they are incompatible with abstract GS).
In summary, $\SF_{T_p}$ and $\SF_{T_m}$ satisfy Properties \ref{prop:attackreinforcement}-\ref{prop:supportmonotonicity}, while $\SF_{\wedge_D}$ and $\SF_{\wedge_Q}$ violate them.

While each of these four properties differentiates our introduced GS from the existing GS, we believe this is reasonable given the properties' focus on CSTs.
Further, it can be seen that Properties \ref{prop:attackedpremise} to \ref{prop:strengthenedpremise} represent alternatives which guarantee suitable behaviour under all states of completeness (Definition \ref{def:completeness}).

\newpage

\section{Proofs}
\label{appendix:proofs}


Here we give the proofs for the theoretical work in the paper.

\begin{dummycorollary}
    \!Given an SG $
    \langle \Args,\! \Atts,\! \Supps,\! \BS \rangle$,\! $\Atts \cap \Supps \!=\! \emptyset$.
\end{dummycorollary}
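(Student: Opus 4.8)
The plan is to argue by contradiction, using only Definition~\ref{def:supp-att}, Definition~\ref{def:graph}, and the standing requirement that a statement's premise is either $\top$ or a \emph{logically consistent} compound. Suppose $(\alpha_1,\alpha_2) \in \Atts \cap \Supps$, where $\alpha_1 = \langle \Phi_1, \Psi_1 \rangle$ and $\alpha_2 = \langle \Phi_2, \Psi_2 \rangle$. By the defining clauses for $\Atts$ and $\Supps$ in Definition~\ref{def:graph}, this means $\alpha_1$ both attacks and supports $\alpha_2$.

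Next I would unpack Definition~\ref{def:supp-att}. Since $\alpha_1$ supports $\alpha_2$, there is some $x \in Prem(\alpha_2)$ with $\Psi_1 \equiv x$; since $\alpha_1$ attacks $\alpha_2$, there is some $y \in Prem(\alpha_2)$ with $\Psi_1 \equiv \neg y$. Combining the two, $x \equiv \neg y$, and since $\neg(\neg\phi)\equiv\phi$ for literals, this gives $y \equiv \neg x$. Hence both $x$ and $\neg x$ belong to $Prem(\alpha_2)$, i.e. both occur as conjuncts of $\Phi_2$ (note $\Phi_2 \neq \top$ here, since $Prem(\alpha_2)\neq\emptyset$).

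Finally I would conclude: a compound containing both a literal and its negation as conjuncts is logically inconsistent, contradicting the requirement in the definition of a statement that $\Phi_2$ is a logically consistent compound (or $\top$). Therefore no such pair $(\alpha_1,\alpha_2)$ exists, and $\Atts \cap \Supps = \emptyset$.

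I do not anticipate a real obstacle here; the only point requiring a little care is the manipulation $\Psi_1 \equiv x$ and $\Psi_1 \equiv \neg y$ $\Rightarrow$ $\neg x \equiv y$ (using double-negation elimination on literals) and the observation that this forces $\Phi_2$ to be inconsistent rather than merely that $x$ and $y$ are distinct literals.
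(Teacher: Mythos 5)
Your proposal is correct and matches the paper's own argument: both proceed by contradiction, use Definition~\ref{def:supp-att} to show that a statement attacking and supporting the same statement forces a literal and its negation to both appear in the latter's premise, and conclude via the consistency requirement on premises. The extra care you take with the double-negation step ($\Psi_1 \equiv x$ and $\Psi_1 \equiv \neg y$ yielding $y \equiv \neg x$) is just a slightly more explicit rendering of the same argument.
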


\begin{proof}
    Let us prove by contradiction. If $\exists (\langle \Phi_1, \Psi_1 \rangle,\langle \Phi_2, \Psi_2 \rangle) \in \Atts \cap \Supps$, then, by Definition \ref{def:supp-att}, $\Psi_1, \neg \Psi_1 \in Prem(\langle \Phi_2, \Psi_2 \rangle)$, which cannot be the case since $\Phi_2$ must be a consistent compound, and so we have the contradiction.
\end{proof}

\begin{dummycorollary}
    For any abstract GS $\SF_*$ that satisfies existence and uniqueness, 
    $\SF_{\wedge_*}$ also satisfies existence and uniqueness.
\end{dummycorollary}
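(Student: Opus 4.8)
The plan is to prove the two claims --- existence (well-definedness) and uniqueness of $\SF_{\wedge_*}$ --- by a single induction that exploits the acyclicity assumption on the SG $\graph$ and the fact, established in Corollary~\ref{cor:disjointness}, that $\Atts$ and $\Supps$ are disjoint. First I would observe that, since $\graph$ is acyclic, the relation ``$\alpha_2$ lies on a path to $\alpha_1$ via $\Atts \cup \Supps$'' induces a well-founded partial order on $\Args$; equivalently, there is a topological ordering $\alpha^{(1)}, \ldots, \alpha^{(m)}$ of $\Args$ such that every attacker or supporter of $\alpha^{(k)}$ appears strictly earlier. I would do the induction along this ordering.

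For the base case, a statement $\alpha$ with $\Atts(\alpha) = \Supps(\alpha) = \emptyset$ either has $Prem(\alpha) = \emptyset$, in which case $\SF_{\wedge_*}^\graph(\alpha) = \BS(\alpha)$ is defined and unique directly from Definition~\ref{def:DC}, or has a nonempty premise but no supporters/attackers; then the premise graph $\graph^1$ consists only of the isolated literals $x \in Prem(\alpha)$, each carrying weight $\BS^1(x) = \sqrt[n]{\BS(\alpha)}$ and with no attackers or supporters, so by existence and uniqueness of $\SF_*$ each $\SF_*^{\graph^1}(x)$ is a uniquely determined value in $[0,1]$, and $\SF_{\wedge_*}^\graph(\alpha) = \bigtimes_{x} \SF_*^{\graph^1}(x)$ is well-defined and unique. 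For the inductive step, consider $\alpha_1 = \alpha^{(k)}$ and assume $\SF_{\wedge_*}^\graph(\alpha_2)$ exists and is unique for all $\alpha_2$ occurring earlier in the ordering. The premise graph $\graph^1$ has vertex set $Prem(\alpha_1) \cup \Atts(\alpha_1) \cup \Supps(\alpha_1)$; its weight function $\BS^1$ assigns $\sqrt[n]{\BS(\alpha_1)}$ to each premise literal and $\SF_{\wedge_*}^\graph(\alpha_2)$ to each $\alpha_2 \in \Atts(\alpha_1) \cup \Supps(\alpha_1)$. Every such $\alpha_2$ is strictly earlier than $\alpha_1$ in the ordering, so by the induction hypothesis $\BS^1$ is a well-defined, uniquely determined function $\Args^1 \to [0,1]$; hence $\graph^1$ is a bona fide SG. Applying existence and uniqueness of $\SF_*$ to $\graph^1$ gives a unique value $\SF_*^{\graph^1}(x)$ for each $x \in Prem(\alpha_1)$, and therefore $\SF_{\wedge_*}^\graph(\alpha_1) = \bigtimes_{x \in Prem(\alpha_1)} \SF_*^{\graph^1}(x)$ exists and is unique. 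Since $\bigtimes$ is a finite product of values in $[0,1]$, the result lies in $[0,1]$, so $\SF_{\wedge_*}$ is a legitimate GS.

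I anticipate the main subtlety --- rather than a deep obstacle --- to be making precise that the premise graph $\graph^1$ of $\alpha_1$ only ``reaches back'' to statements that are genuinely earlier than $\alpha_1$, so that the induction is not circular: one must check that $\alpha_1 \notin \Atts(\alpha_1) \cup \Supps(\alpha_1)$ (which follows from consistency of $\Phi_1$, since a statement cannot attack or support itself without its claim being both $x$ and $\neg x$ for some $x$, or, in the support case, its own claim appearing in its own premise, which acyclicity rules out), and that the abstract GS $\SF_*$ applied to $\graph^1$ consumes only the weights $\BS^1$ and the internal $\Atts^1, \Supps^1$ structure, never recursing back into $\graph$. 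A further small point is that the premise literals $x$ are treated as fresh atomic statements in $\graph^1$ with no incoming $\Atts^1 \cup \Supps^1$ edges among themselves --- only edges from $\Atts(\alpha_1) \cup \Supps(\alpha_1)$ into them --- so $\graph^1$ inherits acyclicity from $\graph$, which is what licenses invoking existence/uniqueness of $\SF_*$ on it in the first place. Once these bookkeeping facts are noted, the induction closes routinely.
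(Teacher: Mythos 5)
Your proof is correct and takes essentially the same route as the paper, whose (much terser) proof simply observes that, by Definition~\ref{def:DC}, $\SF^\graph_{\wedge_*}(\alpha)$ is either $\BS(\alpha)$ or a product of abstract-GS values computed on the premise graph, with acyclicity making the recursion well-founded; your induction along a topological ordering merely spells this out in full. One micro-quibble in a side remark: the impossibility of self-attack is guaranteed by the acyclicity assumption rather than by consistency of the premise (e.g.\ $\langle a, \neg a\rangle$ has a consistent premise yet attacks itself), but this does not affect the argument.
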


\begin{proof}
    This follows from Definition \ref{def:DC} given that for any $\alpha \in \Args$, $\SF^\graph_{\wedge_*}(\alpha)$ is either equal to $\BS(\alpha)$ or a product of $\SF_{\wedge_*}^\graph(x)$ for $x \in Prem(\alpha)$.
\end{proof}

\begin{dummycorollary}
\label{cor:weakprovability}
    A structured GS $\SF$ which satisfies provability necessarily satisfies weak provability.
\end{dummycorollary}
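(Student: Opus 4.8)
This is a trivial corollary: a structured GS satisfying provability necessarily satisfies weak provability. The plan is straightforward since weak provability is just provability restricted to the case where the statement's weight is zero.

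\medskip

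\noindent\textbf{Proof plan.} The approach is a direct implication argument. Suppose $\SF$ is a structured GS satisfying provability (Property~\ref{prop:provability}). To show $\SF$ satisfies weak provability (Property~\ref{prop:weakprovability}), I would fix an arbitrary $\alpha_1 \in \Args$ with $\BS(\alpha_1) = 0$ and assume the antecedent of weak provability holds, namely that $\exists x \in Prem(\alpha_1)$ and $\nexists \langle \Phi_2, \Psi_2 \rangle \in \Supps(\alpha_1)$ such that $x \equiv \Psi_2$. This antecedent is syntactically identical to the antecedent of provability (the $\BS(\alpha_1)=0$ condition being an extra hypothesis that we simply do not need). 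Hence, since $\SF$ satisfies provability, we immediately conclude $\SF^\graph(\alpha_1) = 0$, which is exactly the consequent required by weak provability. Since $\alpha_1$ was arbitrary among statements with zero weight, $\SF$ satisfies weak provability.

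\medskip

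\noindent There is essentially no obstacle here: the only thing to observe is that the hypothesis set of weak provability is a strict superset of that of provability (weak provability additionally requires $\BS(\alpha_1)=0$), so every instance triggering weak provability's conclusion is already an instance covered by provability. The conclusions demanded are the same. I would keep the proof to two or three sentences, as in the authors' other corollary proofs.
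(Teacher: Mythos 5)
Your proposal is correct and matches the paper's own argument: the paper's proof likewise notes that provability is the more general property (weak provability simply adds the extra hypothesis $\BS(\alpha_1)=0$ while demanding the same conclusion), so satisfaction of provability immediately yields weak provability. Your version just spells out the instance-by-instance implication in slightly more detail.
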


\begin{proof}
    This follows from Properties \ref{prop:provability} and \ref{prop:weakprovability} given that the former is a more general case than the latter.
\end{proof}

\begin{dummyproposition}
    $\SF_{T_p}$ and $\SF_{T_m}$ satisfy Properties \ref{prop:directionality}-\ref{prop:weakprovability}, \ref{prop:attackedpremise}-\ref{prop:supportedpremise}, and  \ref{prop:strengthenedpremise}-\ref{prop:bottomstrengthpremise}, but violate Properties \ref{prop:stability}-\ref{prop:neutrality}, \ref{prop:weakenedpremise}, and \ref{prop:topstrengthpremises}-\ref{prop:mirroring}.
\end{dummyproposition}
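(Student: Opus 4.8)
The proposition is a conjunction of many independent claims — "$\SF_{T_p}$ and $\SF_{T_m}$ satisfy/violate property $P_i$" for each $i$ in the stated ranges — so the proof naturally decomposes into one short argument per property, handled twice (once for each De Morgan triple, though in most cases the two arguments coincide or differ only in replacing $\times,+-\times$ by $\min,\max$). The plan is to group the properties by the structural feature of $\SF_T$ that drives the verdict, rather than marching through them in numerical order, and then note at the end that the argument is uniform across the two triples because both $(\times,\oplus_{\text{prob}},\neg)$ and $(\min,\max,\neg)$ are De Morgan triples with the monotonicity and boundary behaviour we use.

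\textbf{Positive results (satisfaction).} For \emph{directionality} (Prop.~\ref{prop:directionality}), I would observe that $\SF_T^\graph(\alpha)$ is a function, via the $\CST$ operator, only of statements reachable from $\alpha$ by following support/attack edges backwards, so adding $\alpha_1$ with all new edges pointing into $\alpha_1$ cannot alter any $\CST^\graph(\alpha_2)$ for $\alpha_2$ not downstream of $\alpha_1$; hence the strength is unchanged. For \emph{rewriting} (Prop.~\ref{prop:rewriting}) I would compute $\mathcal{I}(T)$ for the CSTs of $\alpha_1$ and of $\alpha_3$: the intermediate statement $\alpha_2$ has weight $1$ and the hypotheses ensure the CST sets are in bijection with matching $\mathcal{I}$-values and identical attacking CSTs, so $\bigoplus\mathcal{O}(T)$ agrees — using $\otimes$-monotonicity and the fact that $1$ is the unit of $\otimes$. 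For \emph{provability} and \emph{weak provability} (Props.~\ref{prop:provability}, \ref{prop:weakprovability}): if some $x\in Prem(\alpha_1)$ has no supporter with claim $x$, then no set containing $\alpha_1$ can satisfy the second CST clause, so $\CST^\graph(\alpha_1)=\emptyset$, and an empty $\oplus$ is $0$ by convention; weak provability then follows a fortiori (or cite Cor.~\ref{cor:weakprovability}, but here we prove provability directly). For \emph{attacked premise}, \emph{supported premise}, \emph{strengthened premise} (Props.~\ref{prop:attackedpremise}, \ref{prop:supportedpremise}, \ref{prop:strengthenedpremise}) and \emph{bottom-strength premise} (Prop.~\ref{prop:bottomstrengthpremise}): adding an attacker of some $x\in Prem(\alpha_2)$ either creates new attacking CSTs (which can only decrease $\mathcal{O}(T)$ via the $\otimes\neg\oplus$ term) or breaks a CST by introducing an attack edge inside it (removing a term from the outer $\oplus$, again a decrease); adding a supporter can only enlarge the CST set, hence not decrease the $\oplus$; strengthening a supporter-tree statement raises its $\tau$, hence raises $\mathcal{I}$ and thus $\mathcal{O}$ monotonically; a premise with a top-strength contradicting attacker forces every CST of $\alpha_1$ to be attacked by a CST of strength $1$, so $\neg\bigoplus(\ldots)=0$ and $\mathcal{O}(T)=\mathcal{I}(T)\otimes 0 = 0$ for all $T$. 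Each of these is a monotonicity observation about $\otimes,\oplus,\neg$ plus the combinatorics of how edge/statement additions change $\CST$.

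\textbf{Negative results (violation), and the main obstacle.} For \emph{stability}, \emph{neutrality}, \emph{weakened premise}, \emph{top-strength premises} and \emph{mirroring} (Props.~\ref{prop:stability}, \ref{prop:neutrality}, \ref{prop:weakenedpremise}, \ref{prop:topstrengthpremises}, \ref{prop:mirroring}) the task is to exhibit, for \emph{each} of $\SF_{T_p}$ and $\SF_{T_m}$, a small SG witnessing the failure. Stability: a statement $\langle a,b\rangle$ with no attackers/supporters has $\CST=\emptyset$, strength $0\neq\BS$ when $\BS>0$. Neutrality: add a bottom-strength fact supporting the single premise of an otherwise-supported statement — since the bottom-strength statement still completes a CST, its $\tau=0$ forces $\mathcal{I}=0$ on that tree and can change the overall $\oplus$; contrast with "before", where a different CST gave positive strength. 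Weakened premise vs.\ strengthened premise: the asymmetry is that raising an attacker's $\tau$ enters through $\neg\bigoplus(\ldots)$ which for the product triple is strictly order-reversing only on the relevant range — I need a witness where raising an attacker's strength actually fails to weaken (e.g.\ because the attacking CST's $\mathcal{I}$ was already determined by a different, lower-$\tau$ statement in it under $\min$, so the $\min$ triple is the easy case; for the product triple I'd make the raised statement the unique one in its attacking CST). Top-strength premises: a statement with weight $<1$ whose single premise has a strength-$1$ supporter and no attacker still has $\mathcal{I}(T)=\tau(\alpha_1)\otimes 1 = \tau(\alpha_1)<1$. Mirroring: two statements $\langle a,c\rangle,\langle\neg a,\neg c\rangle$ with weight $0.5$ and asymmetric supporting evidence give strengths not summing to $1$ — this is where I'd re-use the numbers from Example~\ref{ex:12} (T-norm-p gives $0$ and $0.45$). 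The main obstacle is bookkeeping rather than depth: I must make sure each counterexample SG is simultaneously a valid SG (consistent premises, correctly-induced $\Atts,\Supps$ per Def.~\ref{def:graph}), witnesses the failure for \emph{both} triples (or supply two separate witnesses where $\min$/$\max$ behaves differently from product/probabilistic-sum — weakened premise is the delicate one), and that in the positive cases the monotonicity claims hold for \emph{both} triples, which I'd dispatch once by noting both are De Morgan triples in which $\otimes$ and $\oplus$ are monotone non-decreasing in each argument, $\otimes(v,1)=v$, $\oplus(v,0)=v$, $\otimes(v,0)=0$, and $\neg$ is order-reversing with $\neg 0=1,\neg 1=0$ — exactly the facts every step above uses.
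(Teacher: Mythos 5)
Your overall decomposition (property-by-property, monotonicity of $\otimes,\oplus,\neg$ for the positive cases, small witness SGs for the negative ones) is the same as the paper's, and your positive-case arguments and the stability, top-strength-premises and mirroring counterexamples are essentially the paper's. However, two of your counterexample sketches — precisely the two delicate ones — do not work. For neutrality (Property \ref{prop:neutrality}), adding a weight-$0$ fact that completes a new CST cannot change the supported statement's strength: the new tree has $\mathcal{I}=0$, hence $\mathcal{O}=0$, and $0$ is the identity of both the probabilistic sum and $\max$, while the pre-existing CSTs are untouched (minimality cannot be destroyed by a set containing the new statement). So your construction yields equality, not a violation. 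The actual violation comes from the \emph{attack} side, exploiting that the T-norm semantics discounts a CST by the raw-weight $\mathcal{I}$ of attacking CSTs rather than by the attacker's computed strength: the paper adds an attacker $\alpha_6$ whose own CST is fully attacked (so $\SF^{\graph'}_{T}(\alpha_6)=0$) but whose CST has $\mathcal{I}>0$, which still drags down the attacked statement.

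For weakened premise (Property \ref{prop:weakenedpremise}), your stated goal — a witness where raising an attacker's strength ``fails to weaken'' — is not a violation, since the property only demands $\SF^{\graph'}(\alpha_1)\leq\SF^\graph(\alpha_1)$; you need the attacked statement's strength to \emph{strictly increase}. Moreover, your concrete mechanisms miss: under $\min$, if the raised $\tau$ is not the minimum of the attacker's CST, the attacker's \emph{strength} does not increase either, so the property's precondition is not triggered; under the product triple, making the raised statement the unique member of its attacking CST raises that CST's $\mathcal{I}$ and therefore weakens $\alpha_1$, i.e., the property is satisfied on that instance. The paper's counterexample instead exploits the same strength-versus-$\mathcal{I}$ decoupling as above: the attacker $\alpha_5$'s strength is raised \emph{indirectly} by weakening the attacks on $\alpha_5$'s own CST, while simultaneously a weight inside that CST is lowered, so the $\mathcal{I}$ of the CST attacking $\alpha_1$ drops and $\SF^{\graph'}_{T}(\alpha_1)>\SF^{\graph}_{T}(\alpha_1)$. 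Both gaps stem from conflating a statement's weight $\BS$ with its strength $\SF$ in Properties \ref{prop:neutrality} and \ref{prop:weakenedpremise}; a minor further gloss is that your one-line supported-premise argument ignores the case where the added supporter also sits in a CST attacking other members of $\alpha_2$'s CSTs, which the paper's proof handles explicitly.
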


\begin{proof}
    
    $\SF_{T_p}$:

    The proofs for Properties \ref{prop:directionality}-\ref{prop:provability} can be found in~\cite{Jedwabny_20}.

    \squishlist

    \item[] Property \ref{prop:weakprovability}: Weak Provability.
    By the fact that $\SF^{\graph}_{T_p}$ satisfies Provability and Corollary \ref{cor:weakprovability}, it must also satisfy Weak Provability.
    
    \item[]  Property \ref{prop:stability}: Stability (Counterexample). Let $\Args = \{ \alpha \}$, where $\alpha = \langle b, a \rangle$, $\BS(\alpha) = 1$ and $\Atts(\alpha) = \Supps (\alpha) = \emptyset$. Then, by Definition~\ref{def:Tnorm}, we have that 
    $\SF^\graph_{T_p}(\alpha) = 0$.

    \item[] Property \ref{prop:neutrality}: Neutrality (Counterexample). Let $\Args = \{ \alpha_1, \alpha_2, \alpha_3, \alpha_4, \alpha_5 \}$ and $\Args' = \Args \cup \{ \alpha_6 \}$, where 
    $\alpha_1 = \langle a , b \rangle$, 
    $\alpha_2 = \langle \top, a \rangle$, 
    $\alpha_3 = \langle \top, c \rangle$,
    $\alpha_4 = \langle d, \neg c \rangle$,
    $\alpha_5 = \langle \top, d \rangle$,
    $\alpha_6 = \langle c, \neg a \rangle$.
    By Definition \ref{def:supp-att}, we have that $\Atts' = \Atts \cup \{ (\alpha_6, \alpha_1), (\alpha_4, \alpha_6)\}$, $\Supps' = \Supps \cup \{(\alpha_3, \alpha_6) \}$. Then let $\BS(\alpha_1) = \BS'(\alpha_1) = \BS(\alpha_2) = \BS'(\alpha_2) = \BS(\alpha_3) = \BS'(\alpha_3) = \BS(\alpha_4) = \BS'(\alpha_4) = \BS(\alpha_5) = \BS'(\alpha_5) = 1$, and $\BS'(\alpha_6) = 0.5$. By Definition \ref{def:Tnorm}, this gives $\SF^{\graph}_{T_p}(\alpha_1) = 1$ and $\SF^{\graph'}_{T_p}(\alpha_6) = 0$. Meanwhile, by the same definition, $\SF^{\graph'}_{T_p}(\alpha_1) = 0.5$. Thus, $\SF^{\graph'}_{T_p}(\alpha_1) \neq \SF^{\graph}_{T_p}(\alpha_1)$.

    \item[] Property \ref{prop:attackedpremise}: Attacked Premise. It can be seen from Definition \ref{def:Tnorm} that there are two ways in which the addition of $\alpha_1$ may affect $\SF^{\graph'}_{T_p}(\alpha_2)$: (i) $\exists T_+ \in \CST^\graph(\alpha_2)$ where $\alpha_1 \in T_+$; and (ii) $\exists \alpha_3 \in \Args' \setminus \{ \alpha_2 \}$ such that $\exists T_- \in \CST^\graph(\alpha_3)$ where $T_-$ attacks some $T \in \CST^{\graph'}(\alpha_2)$ and $\alpha_1 \in T_-$.
    By Definition \ref{def:CST}, we can see that case (i) cannot apply since $\alpha_1 \in \Atts'(\alpha_2)$.
    Then, for case (ii), it can be seen from Definition \ref{def:Tnorm} that if $\tau'(\alpha_1) = 0$, then $\SF^{\graph'}_{T_p}(\alpha_2) = \SF^{\graph}_{T_p}(\alpha_2)$, while if $\tau'(\alpha_1) >0$, then $\SF^{\graph'}_{T_p}(\alpha_2) < \SF^{\graph}_{T_p}(\alpha_2)$. Therefore, $\alpha_1$ can only reduce $\SF^{\graph'}_{T_p}(\alpha_2)$, i.e., $\SF^{\graph'}_{T_p}(\alpha_2) \leq \SF^{\graph}_{T_p}(\alpha_2)$.

    \item[] Property \ref{prop:supportedpremise}: Supported Premise. It can be seen from Definition \ref{def:Tnorm} that there are two ways in which the addition of $\alpha_1$ may affect $\SF^{\graph'}_{T_p}(\alpha_2)$: (i) $\exists T_+ \in \CST^\graph(\alpha_2)$ where $\alpha_1 \in T_+$; and (ii) $\exists \alpha_3 \in \Args' \setminus \{ \alpha_2 \}$ such that $\exists T_- \in \CST^\graph(\alpha_3)$ where $T_-$ attacks some $T \in \CST^{\graph'}(\alpha_2)$ and $\alpha_1 \in T_-$.
    If case (ii) applies, then it can only be the case that $\mathcal{O}'(T_-) = \mathcal{O}(T_-)$ due to the condition that the strength of all prior supporters and attackers of $\alpha_2$ must be equal, and so in this case, $\alpha_1$ will have no effect on $\SF^{\graph'}_{T_p}(\alpha_2)$. For case (i), it can be seen from Definition \ref{def:Tnorm} that $\alpha_1$, regardless of the value of $\SF^{\graph'}_{T_p}(\alpha_1)$, can only increase $\SF^{\graph'}_{T_p}(\alpha_2)$, as $\bigoplus$ is monotonically increasing. Therefore, we have that $\SF^{\graph'}_{T_p}(\alpha_2) \geq \SF^{\graph}_{T_p}(\alpha_2)$.

    \item[] Property \ref{prop:weakenedpremise}: Weakened Premise (Counterexample). Let 
    $\Args = \{ \alpha_1, \alpha_2,\alpha_3, \alpha_4, \alpha_5, \alpha_6, \alpha_7, \alpha_8 \}$, where 
    $\alpha_1 = \langle a\wedge b, c \rangle$, 
    $\alpha_2 = \langle \top, a \rangle$, 
    $\alpha_3 = \langle \top, b \rangle $, 
    $\alpha_4 = \langle d, e \rangle $, and 
    $\alpha_5 = \langle e, \neg b \rangle$, 
    $\alpha_6 = \langle \top, d \rangle$, 
    $\alpha_7 = \langle f, \neg d \rangle$, and 
    $\alpha_8 = \langle \top, f \rangle$. 
    Then, let $\Args' = \Args$ and 
    $\tau(\alpha_1) = \tau'(\alpha_1) =
    \tau(\alpha_2) = \tau'(\alpha_2) =
    \tau(\alpha_3) = \tau'(\alpha_3) =
    \tau(\alpha_4) = 
    \tau(\alpha_5) = \tau'(\alpha_5) =
    \tau(\alpha_6) = \tau'(\alpha_6) = 0.5$, 
    $\tau'(\alpha_4) = 0.4$
    $\tau(\alpha_7) = 0.9$,
    $\tau'(\alpha_7) = 0.5$ and 
    $\tau(\alpha_8) = \tau'(\alpha_8) = 1$. 
    Then, by Definition~\ref{def:Tnorm}, we have that $\SF^\graph_{T_p}(\alpha_1) = 0.1093$ and $\SF^\graph_{T_p}(\alpha_5) = 0.0125$. However, from the same definition, we get $\SF^{\graph'}_{T_p}(\alpha_1) = 0.1125$ and $\SF^{\graph'}_{T_p}(\alpha_5) = 0.05$, thus $\SF^{\graph'}_{T_p}(\alpha_1) > \SF^\graph_{T_p}(\alpha_1)$.

    \item[] Property \ref{prop:strengthenedpremise}: Strengthened Premise.
    It follows directly from Definition~\ref{def:Tnorm} that increasing the strength of $\alpha_3 \in \Supps(\alpha_1)$, i.e., $\SF^{\graph'}_{T_p}(\alpha_3) > \SF^{\graph}_{T_p}(\alpha_3)$ while $\forall \alpha_2 \in \Atts(\alpha_1) \cup \Supps(\alpha_1)$, $\SF^{\graph'}_{T_p}(\alpha_2) = \SF^{\graph}_{T_p}(\alpha_2)$, then the strength of $\alpha_1$ can only increase, and so we have that $\SF^{\graph'}_{T_p}(\alpha_1) \geq \SF^{\graph}_{T_p}(\alpha_1)$.

    \item[] Property \ref{prop:bottomstrengthpremise}: Bottom-Strength Premise. 
    Let us prove by contradiction. 
    If we let $\SF^\graph_{T_p}(\alpha_1) > 0$, we can see from Definition \ref{def:Tnorm} that $\exists T_+ \in \CST^\graph(\alpha_1)$ where $\mathcal{O}(T_+) > 0$. However, since $\alpha_2 \in \Atts(\alpha_1)$ and $\SF^\graph_{T_p}(\alpha_2) = 1$, we can see from Definition \ref{def:Tnorm} that $\exists T_- \in \CST^\graph(\alpha_2)$ where $\mathcal{I}(T_-) = 1$ and exists $\alpha_3 \in T_-$ such that $\BS(\alpha_3) = 1$. 
    Thus, any $T_+$ must necessarily be attacked by $T_-$ and thus $\mathcal{O}(T_+) = 0$ and we have the contradiction.

    \item[] Property \ref{prop:topstrengthpremises}: Top-Strength Premises (Counterexample). Let $\Args = \{ \alpha_1, \alpha_2 \}$, where $\alpha_1 = \langle a, b \rangle $ and $\alpha_2 = \langle \top, a \rangle $, and $\tau(\alpha_1) = 0.8$ and $\tau(\alpha_2)=1$. Then, by Definition~\ref{def:Tnorm}, we have that $\SF^\graph_{T_p}(\alpha_2) = 1$, but $\SF^\graph_{T_p}(\alpha_1) = 0.8$.

    \item[] Property \ref{prop:mirroring}: Mirroring (Counterexample). Let $\Args = \{ \alpha_1, \alpha_2,\alpha_3, \alpha_4, \alpha_5 \}$, where $\alpha_1 = \langle a, b \rangle$, $\alpha_2 = \langle \top, a \rangle$, $\alpha_3 = \langle \neg a, c \rangle $, $\alpha_4 = \langle d, \neg a \rangle $, and $\alpha_5 = \langle \top, d \rangle $. Also, let $\tau(\alpha_1) =\tau(\alpha_2)= \tau(\alpha_3) = \tau(\alpha_4) = \tau(\alpha_5) = 0.5$. Then, by Definition~\ref{def:Tnorm}, we have that $\SF^\graph_{T_p}(\alpha_1) = 0.187$ and $\SF^\graph_{T_p}(\alpha_3) = 0.0625$.

    \squishend
    
    $\SF_{T_m}$:

    The proofs for Properties \ref{prop:directionality}-\ref{prop:provability} are analogous to those for $\SF_{T_p}$ in~\cite{Jedwabny_20}.

    \squishlist

    \item[] Property \ref{prop:weakprovability}: Weak Provability.
    By the fact that $\SF^{\graph}_{T_m}$ satisfies Provability and Corollary \ref{cor:weakprovability}, it must also satisfy Weak Provability.
    
    \item[] Property \ref{prop:stability}: Stability (Counterexample). Let $\Args = \{ \alpha \}$, where $\alpha = \langle b, a \rangle$, $\BS(\alpha) = 1$ and $\Atts(\alpha) = \Supps (\alpha) = \emptyset$. Then, by Definition~\ref{def:Tnorm}, we have that 
    $\SF^\graph_{T_m}(\alpha) = 0$.

    \item[] Property \ref{prop:neutrality}: Neutrality (Counterexample). Let $\Args = \{ \alpha_1, \alpha_2, \alpha_3, \alpha_4, \alpha_5 \}$ and $\Args' = \Args \cup \{ \alpha_6 \}$, where 
    $\alpha_1 = \langle a , b \rangle$, 
    $\alpha_2 = \langle \top, a \rangle$, 
    $\alpha_3 = \langle \top, c \rangle$,
    $\alpha_4 = \langle d, \neg c \rangle$,
    $\alpha_5 = \langle \top, d \rangle$,
    $\alpha_6 = \langle c, \neg a \rangle$.
    By Definition \ref{def:supp-att}, we have that $\Atts' = \Atts \cup \{ (\alpha_6, \alpha_1), (\alpha_4, \alpha_6)\}$, $\Supps' = \Supps \cup \{(\alpha_3, \alpha_6) \}$. Then let $\BS(\alpha_1) = \BS'(\alpha_1) = \BS(\alpha_2) = \BS'(\alpha_2) = \BS(\alpha_3) = \BS'(\alpha_3) = \BS(\alpha_4) = \BS'(\alpha_4) = \BS(\alpha_5) = \BS'(\alpha_5) = 1$, and $\BS'(\alpha_6) = 0.5$. By Definition \ref{def:Tnorm}, this gives $\SF^{\graph}_{T_m}(\alpha_1) = 1$ and $\SF^{\graph'}_{T_m}(\alpha_6) = 0$. Meanwhile, by the same definition, $\SF^{\graph'}_{T_m}(\alpha_1) = 0$. Thus, $\SF^{\graph'}_{T_m}(\alpha_1) \neq \SF^{\graph}_{T_m}(\alpha_1)$.

    \item[] Property \ref{prop:attackedpremise}: Attacked Premise. It can be seen from Definition \ref{def:Tnorm} that there are two ways in which the addition of $\alpha_1$ may affect $\SF^{\graph'}_{T_m}(\alpha_2)$: (i) $\exists T_+ \in \CST^\graph(\alpha_2)$ where $\alpha_1 \in T_+$; and (ii) $\exists \alpha_3 \in \Args' \setminus \{ \alpha_2 \}$ such that $\exists T_- \in \CST^\graph(\alpha_3)$ where $T_-$ attacks some $T \in \CST^{\graph'}(\alpha_2)$ and $\alpha_1 \in T_-$.
    By Definition \ref{def:CST}, we can see that case (i) cannot apply since $\alpha_1 \in \Atts'(\alpha_2)$.
    Then, for case (ii), it can be seen from Definition \ref{def:Tnorm} that if $\tau'(\alpha_1) = 0$, then $\SF^{\graph'}_{T_m}(\alpha_2) = \SF^{\graph}_{T_m}(\alpha_2)$, while if $\tau'(\alpha_1) >0$, then $\SF^{\graph'}_{T_m}(\alpha_2) < \SF^{\graph}_{T_m}(\alpha_2)$. Therefore, $\alpha_1$ can only reduce $\SF^{\graph'}_{T_m}(\alpha_2)$, i.e., $\SF^{\graph'}_{T_m}(\alpha_2) \leq \SF^{\graph}_{T_m}(\alpha_2)$.

    \item[] Property \ref{prop:supportedpremise}: Supported Premise. It can be seen from Definition \ref{def:Tnorm} that there are two ways in which the addition of $\alpha_1$ may affect $\SF^{\graph'}_{T_m}(\alpha_2)$: (i) $\exists T_+ \in \CST^\graph(\alpha_2)$ where $\alpha_1 \in T_+$; and (ii) $\exists \alpha_3 \in \Args' \setminus \{ \alpha_2 \}$ such that $\exists T_- \in \CST^\graph(\alpha_3)$ where $T_-$ attacks some $T \in \CST^{\graph'}(\alpha_2)$ and $\alpha_1 \in T_-$.
    By Definition \ref{def:CST}, we can see that case (ii) cannot apply since $\alpha_1 \in \Supps'(\alpha_2)$.
    Then, for case (i), it can be seen from Definition \ref{def:Tnorm} that $\alpha_1$, regardless of the value of $\SF^{\graph'}_{T_m}(\alpha_1)$, can only increase $\SF^{\graph'}_{T_m}(\alpha_2)$, and so we have that $\SF^{\graph'}_{T_m}(\alpha_2) \geq \SF^{\graph}_{T_m}(\alpha_2)$.

    \item[] Property \ref{prop:weakenedpremise}: Weakened Premise (Counterexample). Let 
    $\Args = \{ \alpha_1, \alpha_2,\alpha_3, \alpha_4, \alpha_5, \alpha_6, \alpha_7, \alpha_8 \}$, where 
    $\alpha_1 = \langle a\wedge b, c \rangle$, 
    $\alpha_2 = \langle \top, a \rangle$, 
    $\alpha_3 = \langle \top, b \rangle $, 
    $\alpha_4 = \langle d, e \rangle $, 
    $\alpha_5 = \langle e, \neg b \rangle$, 
    $\alpha_6 = \langle \top, d \rangle$, 
    $\alpha_7 = \langle f, \neg d \rangle$, and 
    $\alpha_8 = \langle \top, f \rangle$. 
    Then, let 
    $\Args' = \Args$ and $\tau(\alpha_1) = \tau'(\alpha_1) = 
    \tau(\alpha_2) = \tau'(\alpha_2) = 
    \tau(\alpha_3) = \tau'(\alpha_3) = 
    \tau(\alpha_4) = 
    \tau(\alpha_5) = 
    \tau(\alpha_6) = 0.5$,
    $\tau'(\alpha_4) = 
    \tau'(\alpha_5) = 
    \tau'(\alpha_6) = 0.4$,
    $\tau(\alpha_7) = \tau(\alpha_8) = 1$ and $\tau'(\alpha_7) = \tau'(\alpha_8) = 0.1$. 
    Then, by Definition~\ref{def:Tnorm}, we have that $\SF^\graph_{T_m}(\alpha_1) = 0.25$ and $\SF^\graph_{T_m}(\alpha_5) = 0$. However, from the same definition, we get $\SF^{\graph'}_{T_m}(\alpha_1) = 0.3$ and $\SF^{\graph'}_{T_m}(\alpha_5) = 0.36$, thus $\SF^{\graph'}_{T_m}(\alpha_1) > \SF^\graph_{T_m}(\alpha_1)$.

    \item[] Property \ref{prop:strengthenedpremise}: Strengthened Premise.
    It follows directly from Definition~\ref{def:Tnorm} that increasing the strength of $\alpha_3 \in \Supps(\alpha_1)$, i.e., $\SF^{\graph'}_{T_m} (\alpha_3) > \SF^{\graph}_{T_m} (\alpha_3)$ while $\forall \alpha_2 \in \Atts(\alpha_1) \cup \Supps(\alpha_1)$, $\SF^{\graph'}_{T_m} (\alpha_2) = \SF^{\graph}_{T_m} (\alpha_2)$, then the strength of $\alpha_1$ can only increase, and so we have that $\SF^{\graph'}_{T_m} (\alpha_1) \geq \SF^{\graph}_{T_m} (\alpha_1)$.

    \item[] Property \ref{prop:bottomstrengthpremise}: Bottom-Strength Premise. 
    Let us prove by contradiction. 
    If we let $\SF^\graph_{T_m}(\alpha_1) > 0$, we can see from Definition \ref{def:Tnorm} that $\exists T_+ \in \CST^\graph(\alpha_1)$ where $\mathcal{O}(T_+) > 0$. However, since $\alpha_2 \in \Atts(\alpha_1)$ and $\SF^\graph_{T_m}(\alpha_2) = 1$, we can see from Definition \ref{def:Tnorm} that $\exists T_- \in \CST^\graph(\alpha_2)$ where $\mathcal{I}(T_-) = 1$ and exists $\alpha_3 \in T_-$ such that $\BS(\alpha_3) = 1$. 
    Thus, any $T_+$ must necessarily be attacked by $T_-$ and thus $\mathcal{O}(T_+) = 0$ and we have the contradiction.

    \item[] Property 15: Top-Strength Premises (Counterexample). Let $\Args = \{ \alpha_1, \alpha_2 \}$, where $\alpha_1 = \langle a, b \rangle $ and $\alpha_2 = \langle \top, a \rangle $, and $\tau(\alpha_1) = 0.8$ and $\tau(\alpha_2)=1$. Then, by Definition~\ref{def:Tnorm}, we have that $\SF^\graph_{T_m} (\alpha_2) = 1$, but $\SF^\graph_{T_m}(\alpha_1) = 0.8$.

    \item[] Property 16: Mirroring (Counterexample). Let $\Args = \{ \alpha_1, \alpha_2,\alpha_3, \alpha_4, \alpha_5 \}$, where $\alpha_1 = \langle a, b \rangle$, $\alpha_2 = \langle \top, a \rangle$, $\alpha_3 = \langle \neg a, c \rangle $, $\alpha_4 = \langle d, \neg a \rangle $, and $\alpha_5 = \langle \top, d \rangle $. Also, let $\tau(\alpha_1) =\tau(\alpha_2)= \tau(\alpha_3) = \tau(\alpha_4) = \tau(\alpha_5) = 0.5$. Then, by Definition~\ref{def:Tnorm}, we have that $\SF^\graph_{T_m}(\alpha_1) = 0.25$ and $\SF^\graph_{T_m} (\alpha_3) = 0.25$.
    
    \squishend
    
\end{proof}

\begin{dummyproposition}
    $\SF_{\wedge_D}$ satisfies Properties \ref{prop:directionality} and \ref{prop:weakprovability}-\ref{prop:mirroring}, but violates Properties \ref{prop:rewriting}-\ref{prop:provability}. 
     $\SF_{\wedge_Q}$ satisfies Properties \ref{prop:directionality}, \ref{prop:weakprovability}-\ref{prop:strengthenedpremise}, and \ref{prop:mirroring}, but violates Properties \ref{prop:rewriting}-\ref{prop:provability} and \ref{prop:bottomstrengthpremise}-\ref{prop:topstrengthpremises}.
\end{dummyproposition}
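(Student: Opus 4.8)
The plan is to exploit one structural observation about the DC semantics (Definition~\ref{def:DC}): for any $\alpha_1$ with $Prem(\alpha_1)\neq\emptyset$, the premise graph $\graph^1$ is a depth-one bipartite SG in which every $\alpha_2\in\Atts(\alpha_1)\cup\Supps(\alpha_1)$ is a source (no incoming edges) attacking or supporting exactly one premise-literal $x\in Prem(\alpha_1)$, with weight $\BS^1(\alpha_2)=\SF_{\wedge_*}^\graph(\alpha_2)$, while each literal $x$ carries weight $\sqrt[n]{\BS(\alpha_1)}$. Since $\SF_*\in\{\SF_D,\SF_Q\}$ satisfies stability, each source evaluates inside $\graph^1$ to its weight, so $\SF_*^{\graph^1}(x)$ is just $\SF_*$ applied to a node with a fixed weight and a fixed multiset of direct attacker/supporter strengths $\{\SF_{\wedge_*}^\graph(\alpha_2)\}$, and $\SF_{\wedge_*}^\graph(\alpha_1)=\bigtimes_{x\in Prem(\alpha_1)}\SF_*^{\graph^1}(x)$ is a product of such values, monotone in each factor, with $(\sqrt[n]{w})^n=w$.

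Most properties both DC semantics satisfy then become immediate, because their hypotheses freeze exactly the inputs feeding $\graph^1$. For \emph{stability} (\ref{prop:stability}): with no attackers/supporters, every literal gets value $\sqrt[n]{\BS(\alpha)}$ by stability of $\SF_*$, so the product is $\BS(\alpha)$. For \emph{weak provability} (\ref{prop:weakprovability}): $\BS(\alpha_1)=0$ makes some literal $x$ have weight $\sqrt[n]{0}=0$ and no supporter, forcing $\SF_*^{\graph^1}(x)=0$ (for DF-QuAD, $c(0,\cdot,0)=0$; for QEM, $E_x\le0$ so $h(E_x)=0$) and hence the product to $0$. For \emph{neutrality} (\ref{prop:neutrality}), \emph{attacked/supported premise} (\ref{prop:attackedpremise}--\ref{prop:supportedpremise}) and \emph{weakened/strengthened premise} (\ref{prop:weakenedpremise}--\ref{prop:strengthenedpremise}), the hypotheses freeze every input of the relevant $\graph^1$ except one source being added with strength $0$, added with strength $\ge0$, or increased in strength; by, respectively, neutrality, monotonicity and reinforcement of $\SF_*$ at the single affected literal (all of which hold for $\SF_D$ and $\SF_Q$), composed with monotonicity of the product, the required (in)equality on $\SF_{\wedge_*}^\graph(\cdot)$ follows. \emph{Directionality} (\ref{prop:directionality}) is the one case needing an induction: a statement with no path from the new argument has all its ancestors also path-free, so by induction their strengths -- hence its premise graph -- are unchanged. \emph{Mirroring} (\ref{prop:mirroring}) bites only when both premises are single literals $\neg x,x$ with weight $0.5$; then the two premise graphs coincide up to swapping every attack with a support, and duality of $\SF_*$ gives $\SF_*^{\graph^1}(\neg x)=1-\SF_*^{\graph^1}(x)$.

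For the violations: \emph{provability} (\ref{prop:provability}) fails for both (as in Example~\ref{ex:4}) because an unsupported premise with positive statement weight still yields strength $\BS(\alpha)>0$ ($c(\BS,0,0)=\BS$, or its QEM analogue). \emph{Rewriting} (\ref{prop:rewriting}) fails for both via a small instance meeting the property's preconditions -- e.g.\ the construction in Example~\ref{ex:3}, or $\alpha_1=\langle x_1\wedge x_2,x\rangle$, $\alpha_2=\langle x_1,x'\rangle$ with $\BS(\alpha_2)=1$, $\alpha_3=\langle x'\wedge x_2,x\rangle$, $\BS(\alpha_1)=\BS(\alpha_3)=0.5$, a weight-$0.9$ fact $\langle\top,x_1\rangle$ and a weight-$1$ fact $\langle\top,x_2\rangle$, where the extra premise-literal of the shorter proof is ``diluted'' differently, giving $\SF_{\wedge_*}^\graph(\alpha_1)\neq\SF_{\wedge_*}^\graph(\alpha_3)$. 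Finally, \emph{bottom-strength premise} (\ref{prop:bottomstrengthpremise}) and \emph{top-strength premises} (\ref{prop:topstrengthpremises}) separate the two: for $\SF_{\wedge_D}$ the DF-QuAD combination \emph{saturates}, $c(v^0,1,0)=0$ and $c(v^0,0,1)=1$ for every $v^0$, so a unit-strength contradicting attacker sends the offending literal (hence the product) to $0$, and unit-strength supporters with no positive attacker send every literal (hence the product) to $1$; for $\SF_{\wedge_Q}$ it does not, since QEM's update has $h(1)=\tfrac12$, so $\alpha_1=\langle a,b\rangle$ with $\BS(\alpha_1)=0.5$ and a weight-$1$ fact $\langle\top,\neg a\rangle$ (resp.\ $\langle\top,a\rangle$) gives $\SF_{\wedge_Q}^\graph(\alpha_1)=0.25$ (resp.\ $0.75$), violating both.

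The main obstacle is not depth but care in the reductions: for each ``add/strengthen'' property I must check that its hypotheses genuinely pin down the relevant premise graph up to the single modified source, and must invoke the exact abstract-GS property (stability / neutrality / monotonicity / reinforcement / duality) that $\SF_D$ and $\SF_Q$ are known to satisfy; the one genuinely fiddly point will be producing a rewriting counterexample that also respects the side condition that no statement other than $\alpha_2,\alpha_3$ mentions the intermediate literal $x'$.
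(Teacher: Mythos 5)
Your proposal is correct and follows essentially the same route as the paper's proof: a property-by-property verification that reduces $\SF_{\wedge_D}$ and $\SF_{\wedge_Q}$ to the behaviour of DF-QuAD/QEM on the depth-one premise graph, with the same style of counterexamples for rewriting and provability and, for QEM, the very same $0.25$/$0.75$ instances for bottom-/top-strength premises. The only cosmetic difference is that you invoke named abstract-GS properties (stability, neutrality, monotonicity, reinforcement, ``duality'') where the paper unfolds the DF-QuAD/QEM definitions directly; note that for mirroring with DF-QuAD the duality-style fact at weight $0.5$ is not an off-the-shelf citation, so you would still need the short case analysis the paper carries out there.
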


\begin{proof}

    $\SF_{\wedge_D}$:

    \squishlist

    \item[] Property \ref{prop:directionality}: Directionality. 
    It can be seen from Definitions \ref{def:DC}, \ref{def:modular} and \ref{def:DFQuAD} that $\SF^{\graph'}_{\wedge_D}(\alpha_2)$ depends only on $\BS'(\alpha_2)$ and $\{ \SF^{\graph'}_{\wedge_D}(\alpha_4)  | \alpha_4 \in \Atts'(\alpha_2) \cup \Supps'(\alpha_2) \}$.
    Thus, it cannot be the case that $\SF^{\graph'}_{\wedge_D}(\alpha_2) \neq \SF^{\graph}_{\wedge_D}(\alpha_2)$, since there is no directed path from $\alpha_1$ to $\alpha_2$.

    \item[] Property \ref{prop:rewriting}: Rewriting (Counterexample).
    Let $\Args = \{ \alpha_1, \alpha_2, \alpha_3 \}$ where 
    $\alpha_1 = \langle b \wedge c , a \rangle$, 
    $\alpha_2 = \langle b , d \rangle$ and
    $\alpha_3 = \langle c \wedge d, a \rangle$.
    By Definition \ref{def:supp-att}, we have that $\Atts = \emptyset$ and $\Supps = \{ (\alpha_2,\alpha_3) \}$.
    Then let $\BS(\alpha_1) = \BS(\alpha_3) = 0.5$ and 
    $\BS(\alpha_2) = 1$.
    By Definitions \ref{def:DC}, \ref{def:modular} and \ref{def:DFQuAD}, this gives 
    $\SF^{\graph}_{\wedge_D}(\alpha_1) = 0.5$, 
    $\SF^{\graph}_{\wedge_D}(\alpha_2) = 1$ and 
    $\SF^{\graph}_{\wedge_D}(\alpha_3) = 0.71$.
    Thus, $\SF^{\graph}_{\wedge_D}(\alpha_1) \neq \SF^{\graph}_{\wedge_D}(\alpha_3)$.
    
    \item[] Property \ref{prop:provability}: Provability (Counterexample).
    Let $\Args = \{ \alpha_1 \}$ where 
    $\alpha_1 = \langle b , a \rangle$.
    By Definition \ref{def:supp-att}, we have that $\Atts = \Supps = \emptyset$.
    Then let $\BS(\alpha_1) = 0.5$.
    By Definitions \ref{def:DC}, \ref{def:modular} and \ref{def:DFQuAD}, this gives $\SF^{\graph}_{\wedge_D}(\alpha_1) = 0.5$.

    \item[] Property \ref{prop:weakprovability}: Weak Provability.
    Let us prove by contradiction. If $\exists \alpha_1 \in \Args$ where $\SF^\graph_{\wedge_D}(\alpha_1) \neq 0$, then, by Definition \ref{def:DC}, $\forall x \in Prem(\alpha_1)$, $\SF^\graph_{D}(x) > 0$. However, since, by the same definition, $\forall x \in Prem(\alpha_1)$, $\BS(x) = \sqrt[n]{\BS(\alpha_1)}$, where $n = |Prem(\alpha_1)|$ and $\BS(\alpha_1) = 0$, we know that for any $n$, $\BS(x) = 0$. Then, since $\SF_D$ satisfies balance \cite[Proposition 37]{Baroni_19}, for $\SF^\graph_{D}(x) > 0$ to hold it must be the case that $\exists \langle \Phi_2, \Psi_2 \rangle \in \Supps(\alpha_1)$ s.t. $x \equiv \Psi_2$, by Definition \ref{def:supp-att}, and thus we have the contradiction.

    \item[] Property \ref{prop:stability}: Stability. By Definition \ref{def:DC}, we can see that if $Prem(\alpha) = \emptyset$, then $\SF^\graph_{\wedge_D}(\alpha) = \BS(\alpha)$. Otherwise, by the same definition $\BS(x) = \sqrt[n]{\BS(\alpha)}$ $\forall x \in Prem(\alpha) =  \{x_1, \ldots, x_n \}$. Then, by the same definition and Definition \ref{def:DFQuAD}, it can be seen that $\SF^\graph_D(x) = \BS(x)$ since $\Atts(x) = \Supps(x) = \emptyset$ and thus, it follows that $\SF^\graph_{\wedge_D}(\alpha) = \SF^\graph_D(x_1) \times \ldots \times \SF^\graph_D(x_n) = \BS(\alpha)$.
    
    \item[] Property \ref{prop:neutrality}: Neutrality. It can be seen from Definition \ref{def:DC} that $\SF^{\graph'}_{\wedge_D}(\alpha_2)$ is a product of $\SF^{\graph'}_{D}(x)$ $\forall x \in Prem(\alpha_2)$. However, by Definition \ref{def:DFQuAD}, $\alpha_1$ has no impact on either $\Sigma(\SF^\graph_D(\Atts(x)))$ or $\Sigma(\SF^\graph_D(\Supps(x)))$ $\forall x \in Prem(\alpha_2)$, and thus similarly for $\SF^\graph_D(x)$. It follows that $\SF^{\graph'}_{\wedge_D}(\alpha_2) = \SF^\graph_{\wedge_D}(\alpha_2)$.

    \item[] Property \ref{prop:attackedpremise}: Attacked Premise. It can be seen from Definition \ref{def:DFQuAD} that for any $x \in Prem(\alpha_1)$, it can only be the case that $\Sigma(\SF^{\graph'}_D(\Atts(x))) \geq \Sigma(\SF^\graph_D(\Atts(x)))$ and $\Sigma(\SF^{\graph'}_D(\Supps(x))) = \Sigma(\SF^\graph_D(\Supps(x)))$. It follows from the same definition that, regardless of the value of $\BS'(x) = \BS(x)$, $\SF^{\graph'}_D(x) \leq \SF^\graph_D(x)$. Then, by Definition \ref{def:DC}, $\SF^{\graph'}_{\wedge_D}(\alpha_1) \leq \SF^\graph_{\wedge_D}(\alpha_1)$.

    \item[] Property \ref{prop:supportedpremise}: Supported Premise. It can be seen from Definition \ref{def:DFQuAD} that for any $x \in Prem(\alpha_1)$, it can only be the case that $\Sigma(\SF^{\graph'}_D(\Atts(x))) = \Sigma(\SF^\graph_D(\Atts(x)))$ and $\Sigma(\SF^{\graph'}_D(\Supps(x))) \geq \Sigma(\SF^\graph_D(\Supps(x)))$. It follows from the same definition that, regardless of the value of $\BS'(x) = \BS(x)$, $\SF^{\graph'}_D(x) \geq \SF^\graph_D(x)$. Then, by Definition \ref{def:DC}, $\SF^{\graph'}_{\wedge_D}(\alpha_1) \geq \SF^\graph_{\wedge_D}(\alpha_1)$.

    \item[] Property \ref{prop:weakenedpremise}: Weakened Premise. It can be seen from Definition \ref{def:DFQuAD} that for any $x \in Prem(\alpha_2)$, it can only be the case that $\Sigma(\SF^{\graph'}_D(\Atts(x))) \geq \Sigma(\SF^\graph_D(\Atts(x)))$ and $\Sigma(\SF^{\graph'}_D(\Supps(x))) = \Sigma(\SF^\graph_D(\Supps(x)))$. It follows from the same definition that, regardless of the value of $\BS'(x) = \BS(x)$, $\SF^{\graph'}_D(x) \leq \SF^\graph_D(x)$. Then, by Definition \ref{def:DC}, $\SF^{\graph'}_{\wedge_D}(\alpha_2) \leq \SF^\graph_{\wedge_D}(\alpha_2)$.

    \item[] Property \ref{prop:strengthenedpremise}: Strengthened Premise. It can be seen from Definition \ref{def:DFQuAD} that for any $x \in Prem(\alpha_2)$, it can only be the case that $\Sigma(\SF^{\graph'}_D(\Atts(x))) = \Sigma(\SF^\graph_D(\Atts(x)))$ and $\Sigma(\SF^{\graph'}_D(\Supps(x))) \geq \Sigma(\SF^\graph_D(\Supps(x)))$. It follows from the same definition that, regardless of the value of $\BS'(x) = \BS(x)$, $\SF^{\graph'}_D(x) \geq \SF^\graph_D(x)$. Then, by Definition \ref{def:DC}, $\SF^{\graph'}_{\wedge_D}(\alpha_2) \geq \SF^\graph_{\wedge_D}(\alpha_2)$.

    \item[] Property \ref{prop:bottomstrengthpremise}: Bottom-Strength Premise. It can be seen from Definition \ref{def:DFQuAD} that $\Sigma(\SF^\graph_D(\Atts(x))) = 1$ while $\Sigma(\SF^\graph_D(\Supps(x))) = 0$. It follows from the same definition that, regardless of $\BS(x)$, $\SF^\graph_D(x) = 0$. Then, by Definition \ref{def:DC}, $\SF^\graph_{\wedge_D}(\alpha_1) = 0$.

    \item[] Property \ref{prop:topstrengthpremises}: Top-Strength Premises. It can be seen from Definition \ref{def:DFQuAD} that $\forall x \in Prem(\alpha_1)$, $\Sigma(\SF^\graph_D(\Atts(x))) = 0$ while $\Sigma(\SF^\graph_D(\Supps(x))) = 1$. It follows from the same definition that, regardless of $\BS(x)$, $\SF^\graph_D(x) = 1$. Then, by Definition \ref{def:DC}, $\SF^\graph_{\wedge_D}(\alpha_1) = 1$.

    \item[] Property \ref{prop:mirroring}: Mirroring. Let the single premises for each statement be $x_1 \in Prem(\alpha_1)$ and $x_2 \in Prem(\alpha_2)$. 
    Then, it can be seen from Definitions \ref{def:supp-att} and \ref{def:graph} that these conditions mean that $\Atts(x_1) = \Supps(x_2)$ and $\Atts(x_2) = \Supps(x_1)$. 
    Thus, by Definition \ref{def:DFQuAD}, letting $v_1^- = \Sigma(\SF^\graph_D(\Atts(x_1)))$, $v_1^+ = \Sigma(\SF^\graph_D(\Supps(x_1)))$, $v_2^- = \Sigma(\SF^\graph_D(\Atts(x_2)))$ and $v_2^+ = \Sigma(\SF^\graph_D(\Supps(x_2)))$, we have that $v_1^- = v_2^+$ and $v_1^+ = v_2^-$.
    We can also see from Definition \ref{def:DC} that $\BS(x_1) = \BS(x_2) = \sqrt[1]{\BS(\alpha_1)}  = \sqrt[1]{\BS(\alpha_2)} = 0.5$.
    Then, by Definition \ref{def:DFQuAD}, we have three cases:
    i.) $v_1^- = v_2^+ = v_1^+ = v_2^-$, giving $\SF^\graph_D(x_1) = 0.5 - 0.5\cdot| v_1^+ - v_1^- | = 0.5$ and $\SF^\graph_D(x_1) = 0.5 - 0.5\cdot| v_2^+ - v_2^- | = 0.5$;
    ii.) $v_1^- = v_2^+ > v_1^+ = v_2^-$, giving $\SF^\graph_D(x_1) = 0.5 - 0.5\cdot| v_1^+ - v_1^- |$ and $\SF^\graph_D(x_2) = 0.5 + (1- 0.5)\cdot| v_2^+ - v_2^-|$;
    iii.) $v_1^- = v_2^+ < v_1^+ = v_2^-$, giving $\SF^\graph_D(x_1) = 0.5 + (1- 0.5)\cdot| v_1^+ - v_1^-|$ and $\SF^\graph_D(x_2) = 0.5 - 0.5\cdot| v_2^+ - v_2^- |$.
    Note that $| v_1^+ - v_1^-| = | v_2^+ - v_2^- |$ for all cases.
    Also note that $\SF^\graph_{\wedge_D}(\alpha_1) = \SF^\graph_D(x_1)$ and $\SF^\graph_{\wedge_D}(\alpha_2) = \SF^\graph_D(x_2)$, by Definition \ref{def:DC}, since $n=1$ for both statements.
    Thus, in all three cases, it holds that $\SF^\graph_{\wedge_D}(\alpha_1) = 1 - \SF^\graph_{\wedge_D}(\alpha_2)$.

    \squishend

    $\SF_{\wedge_Q}$:

    \squishlist

    \item[] Property \ref{prop:directionality}: Directionality. 
    It can be seen from Definitions \ref{def:DC}, \ref{def:modular} and \ref{def:QEM} that $\SF^{\graph'}_{\wedge_Q}(\alpha_2)$ depends only on $\BS'(\alpha_2)$ and $\{ \SF^{\graph'}_{\wedge_Q}(\alpha_4)  | \alpha_4 \in \Atts'(\alpha_2) \cup \Supps'(\alpha_2) \}$.
    Thus, it cannot be the case that $\SF^{\graph'}_{\wedge_Q}(\alpha_2) \neq \SF^{\graph}_{\wedge_Q}(\alpha_2)$, since there is no directed path from $\alpha_1$ to $\alpha_2$. 
    
    \item[] Property \ref{prop:rewriting}: Rewriting (Counterexample).
    Let $\Args = \{ \alpha_1, \alpha_2, \alpha_3 \}$ where 
    $\alpha_1 = \langle b \wedge c , a \rangle$, 
    $\alpha_2 = \langle b , d \rangle$ and
    $\alpha_3 = \langle c \wedge d, a \rangle$.
    By Definition \ref{def:supp-att}, we have that $\Atts = \emptyset$ and $\Supps = \{ (\alpha_2,\alpha_3) \}$.
    Then let $\BS(\alpha_1) = \BS(\alpha_3) = 0.5$ and 
    $\BS(\alpha_2) = 1$.
    By Definitions \ref{def:DC}, \ref{def:modular} and \ref{def:QEM}, this gives 
    $\SF^{\graph}_{\wedge_Q}(\alpha_1) = 0.5$, 
    $\SF^{\graph}_{\wedge_Q}(\alpha_2) = 1$ and 
    $\SF^{\graph}_{\wedge_Q}(\alpha_3) = 0.6$.
    Thus, $\SF^{\graph}_{\wedge_Q}(\alpha_1) \neq \SF^{\graph}_{\wedge_Q}(\alpha_3)$.
    
    \item[] Property \ref{prop:provability}: Provability (Counterexample).
    Let $\Args = \{ \alpha_1 \}$ where 
    $\alpha_1 = \langle b , a \rangle$.
    By Definition \ref{def:supp-att}, we have that $\Atts = \Supps = \emptyset$.
    Then let $\BS(\alpha_1) = 0.5$.
    By Definitions \ref{def:DC}, \ref{def:modular} and \ref{def:QEM}, this gives $\SF^{\graph}_{\wedge_Q}(\alpha_1) = 0.5$.

    \item[] Property \ref{prop:weakprovability}: Weak Provability.
    Let us prove by contradiction. If $\exists \alpha_1 \in \Args$ where $\SF^\graph_{\wedge_Q}(\alpha_1) \neq 0$, then, by Definition \ref{def:DC}, $\forall x \in Prem(\alpha_1)$, $\SF^\graph_{Q}(x) > 0$. However, since, by the same definition, $\forall x \in Prem(\alpha_1)$, $\BS(x) = \sqrt[n]{\BS(\alpha_1)}$, where $n = |Prem(\alpha_1)|$ and $\BS(\alpha_1) = 0$, we know that for any $n$, $\BS(x) = 0$. Then, since $\SF_Q$ satisfies weakening and strengthening \cite[Propositions 13 and 14]{Potyka_18}, for $\SF^\graph_{Q}(x) > 0$ to hold it must be the case that $\exists \langle \Phi_2, \Psi_2 \rangle \in \Supps(\alpha_1)$ s.t. $x \equiv \Psi_2$, by Definition \ref{def:supp-att}, and thus we have the contradiction.

    \item[] Property \ref{prop:stability}: Stability. By Definition \ref{def:DC}, we can see that if $Prem(\alpha) = \emptyset$, then $\SF^\graph_{\wedge_Q}(\alpha) = \BS(\alpha)$. Otherwise, by the same definition, $\BS(x) = \sqrt[n]{\BS(\alpha)}$ $\forall x \in Prem(\alpha) =  \{x_1, \ldots, x_n \}$. Then, by the same definition and Definition \ref{def:QEM}, it can be seen that $\SF^\graph_Q(x) = \BS(x)$ since $\Atts(x) = \Supps(x) = \emptyset$ and thus, it follows that $\SF^\graph_{\wedge_Q}(\alpha) = \SF^\graph_Q(x_1) \times \ldots \times \SF^\graph_Q(x_n) = \BS(\alpha)$.
    
    \item[] Property \ref{prop:neutrality}: Neutrality. It can be seen from Definition \ref{def:DC} that $\SF^{\graph'}_{\wedge_Q}(\alpha_2)$ is a product of $\SF^{\graph'}_{Q}(x)$ $\forall x \in Prem(\alpha_2)$. However, by Definition \ref{def:QEM}, $\alpha_1$ has no impact on $E_x$ and thus $\SF^\graph_Q(x)$ $\forall x \in Prem(\alpha_2)$. It follows that $\SF^{\graph'}_{\wedge_Q}(\alpha_2) = \SF^\graph_{\wedge_Q}(\alpha_2)$.

    \item[] Property \ref{prop:attackedpremise}: Attacked Premise. It can be seen from Definition \ref{def:QEM} that for any $x \in Prem(\alpha_2)$, it can only be the case that $\Sigma_{x_- \in \Atts'(x)}\SF^{\graph'}_Q(x_-) > \Sigma_{x_- \in \Atts(x)}\SF^{\graph}_Q(x_-)$ and $\Sigma_{x_+ \in \Supps'(x)}\SF^{\graph'}_Q(x_+) = \Sigma_{x_+ \in \Supps(x)}\SF^{\graph}_Q(x_+)$. It follows from the same definition that $E'_x < E_x$ and, regardless of the value of $\BS'(x) = \BS(x)$, $\SF^{\graph'}_Q(x) \leq \SF^\graph_Q(x)$. Then, by Definition \ref{def:DC}, $\SF^{\graph'}_{\wedge_Q}(\alpha_2) \leq \SF^\graph_{\wedge_Q}(\alpha_2)$.

    \item[] Property \ref{prop:supportedpremise}: Supported Premise. It can be seen from Definition \ref{def:QEM} that for any $x \in Prem(\alpha_2)$, it can only be the case that $\Sigma_{x_- \in \Atts'(x)}\SF^{\graph'}_Q(x_-) = \Sigma_{x_- \in \Atts(x)}\SF^{\graph}_Q(x_-)$ and $\Sigma_{x_+ \in \Supps'(x)}\SF^{\graph'}_Q(x_+) > \Sigma_{x_+ \in \Supps(x)}\SF^{\graph}_Q(x_+)$. It follows from the same definition that $E'_x > E_x$ and, regardless of the value of $\BS'(x) = \BS(x)$, $\SF^{\graph'}_Q(x) \geq \SF^\graph_Q(x)$. Then, by Definition \ref{def:DC}, $\SF^{\graph'}_{\wedge_Q}(\alpha_2) \geq \SF^\graph_{\wedge_Q}(\alpha_2)$.

    \item[] Property \ref{prop:weakenedpremise}: Weakened Premise. It can be seen from Definition \ref{def:QEM} that for any $x \in Prem(\alpha_1)$, it can only be the case that $\Sigma_{x_- \in \Atts'(x)}\SF^{\graph'}_Q(x_-) > \Sigma_{x_- \in \Atts(x)}\SF^{\graph}_Q(x_-)$ and $\Sigma_{x_+ \in \Supps'(x)}\SF^{\graph'}_Q(x_+) = \Sigma_{x_+ \in \Supps(x)}\SF^{\graph}_Q(x_+)$. It follows from the same definition that $E'_x < E_x$ and, regardless of the value of $\BS'(x) = \BS(x)$, $\SF^{\graph'}_Q(x) \leq \SF^\graph_Q(x)$. Then, by Definition \ref{def:DC}, $\SF^{\graph'}_{\wedge_Q}(\alpha_1) \leq \SF^\graph_{\wedge_Q}(\alpha_1)$.

    \item[] Property \ref{prop:strengthenedpremise}: Strengthened Premise. It can be seen from Definition \ref{def:QEM} that for any $x \in Prem(\alpha_1)$, it can only be the case that $\Sigma_{x_- \in \Atts'(x)}\SF^{\graph'}_Q(x_-) = \Sigma_{x_- \in \Atts(x)}\SF^{\graph}_Q(x_-)$ and $\Sigma_{x_+ \in \Supps'(x)}\SF^{\graph'}_Q(x_+) > \Sigma_{x_+ \in \Supps(x)}\SF^{\graph}_Q(x_+)$. It follows from the same definition that $E'_x > E_x$ and, regardless of the value of $\BS'(x) = \BS(x)$, $\SF^{\graph'}_Q(x) \geq \SF^\graph_Q(x)$. Then, by Definition \ref{def:DC}, $\SF^{\graph'}_{\wedge_Q}(\alpha_1) \geq \SF^\graph_{\wedge_Q}(\alpha_1)$.

    \item[] Property \ref{prop:bottomstrengthpremise}: Bottom-Strength Premise (Counterexample).
    Let $\Args = \{ \alpha_1, \alpha_2 \}$ where 
    $\alpha_1 = \langle b , a \rangle$ and $\alpha_2 = \langle \top , \neg b \rangle$.
    By Definition \ref{def:supp-att}, we have that $\Atts = \{ (\alpha_2, \alpha_1) \}$ and $\Supps =  \emptyset$.
    Then let $\BS(\alpha_1) = 0.5$ and $\BS(\alpha_2) = 1$.
    By Definitions \ref{def:DC}, \ref{def:modular} and \ref{def:QEM}, this gives $\SF^{\graph}_{\wedge_Q}(\alpha_2) = 1$ and $\SF^{\graph}_{\wedge_Q}(\alpha_1) = 0.25$. 

    \item[] Property \ref{prop:topstrengthpremises}: Top-Strength Premises (Counterexample).
    Let $\Args = \{ \alpha_1, \alpha_2 \}$ where 
    $\alpha_1 = \langle b , a \rangle$ and $\alpha_2 = \langle \top , b \rangle$.
    By Definition \ref{def:supp-att}, we have that $\Atts = \emptyset$ and $\Supps = \{ (\alpha_2, \alpha_1) \}$.
    Then let $\BS(\alpha_1) = 0.5$ and $\BS(\alpha_2) = 1$.
    By Definitions \ref{def:DC}, \ref{def:modular} and \ref{def:QEM}, this gives $\SF^{\graph}_{\wedge_Q}(\alpha_2) = 1$ and $\SF^{\graph}_{\wedge_Q}(\alpha_1) = 0.75$.

    \item[] Property \ref{prop:mirroring}: Mirroring. Let the single premises for each statement be $x_1 \in Prem(\alpha_1)$ and $x_2 \in Prem(\alpha_2)$. 
    Then, it can be seen from Definitions \ref{def:supp-att} and \ref{def:graph} that these conditions mean that $\Atts(x_1) = \Supps(x_2)$ and $\Atts(x_2) = \Supps(x_1)$. 
    Thus, by Definition \ref{def:QEM}, $E_{x_1} = -E_{x_2}$ and then $h(E_{x_1}) = -h(E_{x_2})$.
    We can also see from Definition \ref{def:DC} that $\BS(x_1) = \BS(x_2) = \sqrt[1]{\BS(\alpha_1)}  = \sqrt[1]{\BS(\alpha_2)} = 0.5$.
    Then, again by Definition \ref{def:QEM}, $\SF^\graph_Q(\alpha_1) = 0.5 + (1 - 0.5) \cdot h(E_{\alpha_1}) -  0.5 \cdot h(-E_{\alpha_1})$ and $\SF^\graph_{\wedge_Q}(\alpha_2) = 0.5 + (1 - 0.5) \cdot h(E_{\alpha_2}) -  0.5 \cdot h(-E_{\alpha_2})$ and thus it holds that $\SF^\graph_{\wedge_Q}(\alpha_1) = 1 - \SF^\graph_{\wedge_Q}(\alpha_2)$.

    \squishend
\end{proof}

\begin{dummyproposition}
    Properties \ref{prop:rewriting}, \ref{prop:stability} and \ref{prop:topstrengthpremises} are incompatible.
\end{dummyproposition}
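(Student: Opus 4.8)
The plan is to produce a single small SG on which any GS meeting all three properties is forced to assign two distinct strengths to statements that rewriting equates: I would pin one statement strictly below $1$ via stability, pin a second statement to $1$ via top-strength premises, and then collapse the two with rewriting.

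First I would take $\Args=\{\alpha_1,\alpha_2,\alpha_3\}$ with $\alpha_1=\langle x_1,x\rangle$, $\alpha_2=\langle x_1,x'\rangle$ and $\alpha_3=\langle x',x\rangle$, where $x_1$, $x$, $x'$ are pairwise distinct atoms, and set $\BS(\alpha_1)=\BS(\alpha_3)=0.5$ and $\BS(\alpha_2)=1$. By Definition~\ref{def:supp-att} the only relation in this SG is $(\alpha_2,\alpha_3)\in\Supps$ (the claim $x'$ of $\alpha_2$ is a literal of $Prem(\alpha_3)$), so $\Atts(\alpha_1)=\Supps(\alpha_1)=\emptyset$, $\Atts(\alpha_2)=\Supps(\alpha_2)=\emptyset$, $\Atts(\alpha_3)=\emptyset$ and $\Supps(\alpha_3)=\{\alpha_2\}$. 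Then: (i) stability on $\alpha_1$ and on $\alpha_2$ gives $\SF^\graph(\alpha_1)=\BS(\alpha_1)=0.5$ and $\SF^\graph(\alpha_2)=\BS(\alpha_2)=1$; (ii) top-strength premises on $\alpha_3$ gives $\SF^\graph(\alpha_3)=1$, since its unique premise literal $x'$ has no attacker claiming $\neg x'$ and has the supporter $\alpha_2$ with claim $\equiv x'$ and strength $1$; (iii) rewriting applies to $(\alpha_1,\alpha_2,\alpha_3)$ instantiated with $n=1$ and the trailing premise literal $x_{n+1}=\top$, so that $\Phi_1\equiv x_1\wedge\top$, $\Phi_2\equiv x_1$, $\Phi_3\equiv x'\wedge\top\equiv x'$, $\Psi_1\equiv\Psi_3\equiv x$, $\Psi_2\equiv x'$, the literal $x'$ occurs in no statement other than $\alpha_2$ and $\alpha_3$, and $\BS(\alpha_1)=\BS(\alpha_3)$ with $\BS(\alpha_2)=1$; hence every hypothesis holds and $\SF^\graph(\alpha_1)=\SF^\graph(\alpha_3)$. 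Chaining (i)--(iii) yields $0.5=\SF^\graph(\alpha_1)=\SF^\graph(\alpha_3)=1$, a contradiction, so no GS can satisfy Properties~\ref{prop:rewriting}, \ref{prop:stability} and \ref{prop:topstrengthpremises} simultaneously.

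The step I expect to be the main obstacle is reconciling the hypotheses of stability and of top-strength premises on two statements that rewriting identifies. If the trailing literal $x_{n+1}$ of the rewriting schema were an ordinary literal rather than $\top$, it would lie in both $Prem(\alpha_1)$ and $Prem(\alpha_3)$, so the strength-$1$ supporter of $x_{n+1}$ required by top-strength premises for $\alpha_3$ would also be a supporter of $\alpha_1$, violating $\Supps(\alpha_1)=\emptyset$ and blocking stability on $\alpha_1$. Instantiating $x_{n+1}=\top$ deletes this shared literal (as $Prem$ discards $\top$), decoupling the premise of $\alpha_1$ from that of $\alpha_3$, while still leaving $\alpha_2$ --- whose weight is forced to $1$ by the rewriting hypothesis, hence whose strength is forced to $1$ by stability --- available to maximally support the single premise of $\alpha_3$. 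The remaining points are routine: $x_1$ and $x'$ are individually consistent premises, $\alpha_1,\alpha_2,\alpha_3$ are pairwise distinct, and the freshness side-condition on $x'$ holds because $\Args$ contains no further statement mentioning $x'$.
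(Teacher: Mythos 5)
Your proof is correct and takes essentially the same route as the paper's own proof: the same three-statement SG (yours is the paper's $\alpha_1=\langle b,a\rangle$, $\alpha_2=\langle b,c\rangle$, $\alpha_3=\langle c,a\rangle$ with weights $0.5,1,0.5$ and the single support $(\alpha_2,\alpha_3)$, up to renaming), with stability pinning $\SF^\graph(\alpha_1)=0.5$ and $\SF^\graph(\alpha_2)=1$, top-strength premises forcing $\SF^\graph(\alpha_3)=1$, and rewriting equating $\SF^\graph(\alpha_1)$ with $\SF^\graph(\alpha_3)$. If anything, your write-up is more careful than the paper's (which misstates what rewriting and top-strength premises require in its final sentence), e.g., in making the $x_{n+1}=\top$ instantiation of the rewriting schema and the reading of its freshness condition explicit.
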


\begin{proof}

    Let us prove that Properties \ref{prop:rewriting}, \ref{prop:stability} and \ref{prop:topstrengthpremises} are incompatible by contradiction.
    Let us assume the existence of a semantics $\SF_a$ which satisfies all three properties.
    Consider $\Args = \{ \alpha_1, \alpha_2, \alpha_3 \}$ where 
    $\alpha_1 = \langle b , a \rangle$, 
    $\alpha_2 = \langle b , c \rangle$ and
    $\alpha_3 = \langle c, a \rangle$.
    By Definition \ref{def:supp-att}, we have that $\Atts = \emptyset$ and $\Supps = \{ (\alpha_2,\alpha_3) \}$.
    Then let $\BS(\alpha_1) = \BS(\alpha_3) = 0.5$ and 
    $\BS(\alpha_2) = 1$.
    Then, Property \ref{prop:stability} requires that $\SF^{\graph}_{a}(\alpha_1) = 0.5$, Property \ref{prop:rewriting} requires that $\SF^{\graph}_{a}(\alpha_1) = \SF^{\graph}_{a}(\alpha_2)$, 
    and Property \ref{prop:topstrengthpremises} requires that $\SF^{\graph}_{a}(\alpha_1) = 0.5$, and so we have the contradiction.

\end{proof}

\begin{dummyproposition}
 Properties \ref{prop:provability} and \ref{prop:stability} are incompatible.
\end{dummyproposition}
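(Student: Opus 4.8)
The plan is a direct contradiction argument, mirroring the incompatibility proofs already given for Properties \ref{prop:rewriting}, \ref{prop:stability}, \ref{prop:topstrengthpremises} and the structure of the preceding proposition. Suppose, for contradiction, that some structured GS $\SF_a$ satisfies both provability (Property \ref{prop:provability}) and stability (Property \ref{prop:stability}). The key observation is that a single statement with a non-empty premise and no incident relations simultaneously triggers the hypotheses of \emph{both} properties, yet they impose contradictory conclusions on its strength whenever its weight is strictly positive.

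Concretely, I would take $\Args = \{\alpha_1\}$ with $\alpha_1 = \langle b, a\rangle$, so that $Prem(\alpha_1) = \{b\} \neq \emptyset$. Since there is no other statement, Definition \ref{def:supp-att} gives $\Atts = \Supps = \emptyset$, hence $\Atts(\alpha_1) = \Supps(\alpha_1) = \emptyset$. Fix $\BS(\alpha_1) = 0.5$ (any value in $(0,1]$ works). By stability, $\Atts(\alpha_1) = \Supps(\alpha_1) = \emptyset$ forces $\SF_a^\graph(\alpha_1) = \BS(\alpha_1) = 0.5$. On the other hand, taking $x = b \in Prem(\alpha_1)$, there is no $\langle \Phi_2, \Psi_2\rangle \in \Supps(\alpha_1)$ with $b \equiv \Psi_2$ (as $\Supps(\alpha_1) = \emptyset$), so provability forces $\SF_a^\graph(\alpha_1) = 0$. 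Since $0.5 \neq 0$, this is the required contradiction, establishing that no structured GS satisfies both properties.

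I do not expect any genuine obstacle here: the only point requiring (minimal) care is confirming that both properties are indeed applicable to the same semantics — provability is stated for structured GS, while stability is stated for all GS and hence a fortiori for structured GS — and that the one-statement SG genuinely satisfies the antecedents of both properties at once, which it plainly does. (One could equivalently phrase the conclusion as: provability and stability are compatible only under restrictions, e.g., forbidding statements with a non-empty, fully-unsupported premise, or forcing weight $0$ on such statements, which is precisely the regime in which the weaker Property \ref{prop:weakprovability} lives.)
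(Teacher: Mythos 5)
Your proposal is correct and matches the paper's own argument: both exhibit a statement with non-zero weight and no attackers or supporters whose premise has no supporting statement, so stability forces its strength to equal its (non-zero) weight while provability forces it to zero. In fact your version is slightly more careful than the paper's, since you explicitly fix a non-empty premise (needed for provability's antecedent), which the paper's proof leaves implicit.
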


\begin{proof}

    Let us prove that Properties \ref{prop:provability} and \ref{prop:stability} are incompatible by contradiction.
    Let us assume the existence of a semantics $\SF_a$ which satisfies both properties.
    Consider some $\alpha_1 \in \Args$ such that $\Atts(\alpha_1) = \Atts(\alpha_1) = \emptyset$ and $\BS(\alpha_1) \neq 0$.
    Property \ref{prop:provability} requires that $\SF_a^\graph(\alpha_1) \neq 0 = \BS(\alpha_1)$, but Property \ref{prop:provability} requires that $\SF_a^\graph(\alpha_1) = 0$, and we have the contradiction.

\end{proof}

\begin{dummyproposition}
    $\SF_{D}$ and $\SF_{Q}$ satisfy Properties \ref{prop:directionality} and \ref{prop:stability}-\ref{prop:strengthenedpremise}.
    (They are incompatible with the other properties.)
\end{dummyproposition}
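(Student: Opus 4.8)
The plan is to exploit the fact that $\SF_D$ and $\SF_Q$, being abstract GS, compute each statement's strength purely from the abstract data $\BS$, $\Atts$ and $\Supps$, ignoring the internal premise structure. Each of the seven properties to be verified --- Directionality (Property~\ref{prop:directionality}) and Stability through Strengthened Premise (Properties~\ref{prop:stability}--\ref{prop:strengthenedpremise}) --- is phrased entirely in terms of this abstract data ($\BS$, the sets $\Atts(\cdot),\Supps(\cdot)$, and the strengths of direct attackers/supporters), so for $\SF_D$ and $\SF_Q$ they collapse to the standard monotonicity and directionality properties already established for DF-QuAD~\cite{Rago_16} and QEM~\cite{Potyka_18}. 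I would therefore verify each property directly from Definitions~\ref{def:DFQuAD} and~\ref{def:QEM}, reusing exactly the aggregation-level facts already exploited in the proof for $\SF_{\wedge_D}$ and $\SF_{\wedge_Q}$, but now without the premise-graph indirection.

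As the two ingredient lemmas I would first record the monotonicity behaviour of the combining functions. For $\SF_D$: the aggregator $\Sigma$ is non-decreasing in each argument and admits $0$ as a neutral element (since $\Sigma((v,0)) = v + 0 - v\cdot 0 = v$), while $c(v^0, v^-, v^+)$ is non-increasing in $v^-$ and non-decreasing in $v^+$, with $c(v^0,0,0) = v^0$. For $\SF_Q$: each of the two sums forming $E_\alpha$ is non-decreasing and admits $0$ as a neutral element, $h$ is non-decreasing on $\mathbb{R}$, and hence $\SF_Q$ is non-decreasing in $E_\alpha$, with value $\BS(\alpha)$ when $E_\alpha = 0$.

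From these facts the seven properties follow uniformly. Stability is the evaluation at empty attacker/supporter sets: $c(\BS(\alpha),0,0)=\BS(\alpha)$, and $E_\alpha = 0$ gives $\SF_Q(\alpha)=\BS(\alpha)$. Neutrality holds because inserting a $0$-strength element leaves $\Sigma$ (resp. each sum in $E_\alpha$) unchanged by the neutral-element property. Attacked/Supported Premise and Weakened/Strengthened Premise all follow by composing the monotonicity of $\Sigma$ (resp. the sums in $E_\alpha$) with the monotonicity of $c$ in $v^-,v^+$ (resp. of $\SF_Q$ in $E_\alpha$): adding or strengthening an attacker can only raise $v^-$ (resp. lower $E_\alpha$), hence can only lower the strength, and the dual holds for supporters. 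Directionality follows by an induction on the acyclic ancestor ordering: $\SF^{\graph'}(\alpha_2)$ depends only on $\BS'(\alpha_2)$ and the strengths of $\alpha_2$'s direct attackers and supporters, and since no path runs from the added $\alpha_1$ to $\alpha_2$, none of those ancestors is affected.

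Finally, the parenthetical incompatibility claim requires no calculation: the six remaining properties (Rewriting, Provability, Weak Provability, Bottom-Strength Premise, Top-Strength Premises, Mirroring) are each declared in their statements to apply only to \emph{structured} GS and refer explicitly to the premise literals $x \in Prem(\cdot)$, whereas $\SF_D$ and $\SF_Q$ are abstract GS; they are thus undefined for $\SF_D$ and $\SF_Q$, matching the ``$-$'' entries in Table~\ref{tab:properties}. The only mild care needed anywhere is bookkeeping the direction of the monotonicities of $c$ and of $\SF_Q$ in $E_\alpha$; beyond that the argument is routine, which is expected, since these are precisely the well-known behaviours of DF-QuAD and QEM as abstract GS.
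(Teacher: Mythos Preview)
Your proposal is correct and follows essentially the same approach as the paper: both argue directly from Definitions~\ref{def:DFQuAD} and~\ref{def:QEM} by invoking the monotonicity of $\Sigma$ and $c$ (resp.\ of the sums in $E_\alpha$ and of $h$), the neutral role of $0$, and the fact that the strength depends only on the weight and the direct attackers/supporters. The only cosmetic difference is that you front-load the monotonicity facts as ingredient lemmas and then apply them uniformly, whereas the paper works property-by-property for each of $\SF_D$ and $\SF_Q$; the underlying reasoning is identical.
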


\begin{proof}
    $\SF_{D}$:

    \squishlist
    \item[]Property \ref{prop:directionality}: Directionality. 
    It can be seen from Definition \ref{def:DFQuAD} that $\SF^{\graph'}_{D}(\alpha_2)$ depends only on $\BS'(\alpha_2)$ and $\{ \SF^{\graph'}_{D}(\alpha_4)  | \alpha_4 \in \Atts'(\alpha_2) \cup \Supps'(\alpha_2) \}$.
    Thus, it cannot be the case that $\SF^{\graph'}_{D}(\alpha_2) \neq \SF^{\graph}_{D}(\alpha_2)$, since there is no directed path from $\alpha_1$ to $\alpha_2$.

    \item[]Property \ref{prop:stability}: Stability. By Definition \ref{def:DFQuAD}, it can be seen that $\SF^\graph_{D}(\alpha) = \BS(\alpha)$ since $\Atts(\alpha) = \Supps(\alpha) = \emptyset$.

    \item[]Property \ref{prop:neutrality}: Neutrality. It can be seen from Definition \ref{def:DFQuAD} that $\alpha_1$ has no impact on either $\Sigma(\SF^\graph_D(\Atts(\alpha_2)))$ or $\Sigma(\SF^\graph_D(\Supps(\alpha_2)))$, and thus similarly for $\SF^\graph_D(\alpha_2)$. It follows that $\SF^{\graph'}_{D}(\alpha_2) = \SF^\graph_{D}(\alpha_2)$.

    \item[]Property \ref{prop:attackedpremise}: Attacked Premise. It can be seen from Definition \ref{def:DFQuAD} that it can only be the case that $\Sigma(\SF^{\graph'}_D(\Atts(\alpha_2))) \geq \Sigma(\SF^\graph_D(\Atts(\alpha_2)))$ and $\Sigma(\SF^{\graph'}_D(\Supps(\alpha_2))) = \Sigma(\SF^\graph_D(\Supps(\alpha_2)))$. It follows from the same definition that, regardless of the value of $\BS'(\alpha_2) = \BS(\alpha_2)$, $\SF^{\graph'}_{D}(\alpha_2) \leq \SF^\graph_{D}(\alpha_2)$.

    \item[]Property \ref{prop:supportedpremise}: Supported Premise. It can be seen from Definition \ref{def:DFQuAD} that it can only be the case that $\Sigma(\SF^{\graph'}_D(\Atts(\alpha_2))) = \Sigma(\SF^\graph_D(\Atts(\alpha_2)))$ and $\Sigma(\SF^{\graph'}_D(\Supps(\alpha_2))) \geq \Sigma(\SF^\graph_D(\Supps(\alpha_2)))$. It follows from the same definition that, regardless of the value of $\BS'(\alpha_2) = \BS(\alpha_2)$, $\SF^{\graph'}_{D}(\alpha_2) \geq \SF^\graph_{D}(\alpha_2)$.

    \item[]Property \ref{prop:weakenedpremise}: Weakened Premise. It can be seen from Definition \ref{def:DFQuAD} that it can only be the case that $\Sigma(\SF^{\graph'}_D(\Atts(\alpha_1))) \geq \Sigma(\SF^\graph_D(\Atts(\alpha_1)))$ and $\Sigma(\SF^{\graph'}_D(\Supps(\alpha_1))) = \Sigma(\SF^\graph_D(\Supps(\alpha_1)))$. It follows from the same definition that, regardless of the value of $\BS'(\alpha_1) = \BS(\alpha_1)$, $\SF^{\graph'}_{D}(\alpha_1) \leq \SF^\graph_{D}(\alpha_1)$.

    \item[]Property \ref{prop:strengthenedpremise}: Strengthened Premise. It can be seen from Definition \ref{def:DFQuAD} that it can only be the case that $\Sigma(\SF^{\graph'}_D(\Atts(\alpha_1))) = \Sigma(\SF^\graph_D(\Atts(\alpha_1)))$ and $\Sigma(\SF^{\graph'}_D(\Supps(\alpha_1))) \geq \Sigma(\SF^\graph_D(\Supps(\alpha_1)))$. It follows from the same definition that, regardless of the value of $\BS'(\alpha_1) = \BS(\alpha_1)$, $\SF^{\graph'}_{D}(\alpha_1) \geq \SF^\graph_{D}(\alpha_1)$.

    \squishend
     
    $\SF_{Q}$:

    \squishlist
    \item[]Property \ref{prop:directionality}: Directionality. 
    It can be seen from Definition \ref{def:QEM} that $\SF^{\graph'}_{Q}(\alpha_2)$ depends only on $\BS'(\alpha_2)$ and $\{ \SF^{\graph'}_{Q}(\alpha_4)  | \alpha_4 \in \Atts'(\alpha_2) \cup \Supps'(\alpha_2) \}$.
    Thus, it cannot be the case that $\SF^{\graph'}_{Q}(\alpha_2) \neq \SF^{\graph}_{Q}(\alpha_2)$, since there is no directed path from $\alpha_1$ to $\alpha_2$.

    \item[]Property \ref{prop:stability}: Stability. By Definition \ref{def:QEM}, it can be seen that $\SF^\graph_{Q}(\alpha) = \BS(\alpha)$ since $\Atts(\alpha) = \Supps(\alpha) = \emptyset$.

    \item[]Property \ref{prop:neutrality}: Neutrality. It can be seen from Definition \ref{def:QEM} that $\alpha_1$ has no impact on $E_{\alpha_2}$ and thus it follows that $\SF^{\graph'}_{Q}(\alpha_2) = \SF^\graph_{Q}(\alpha_2)$.

    \item[]Property \ref{prop:attackedpremise}: Attacked Premise. It can be seen from Definition \ref{def:QEM} that it can only be the case that $\Sigma_{\alpha_3 \in \Atts'(\alpha_2)}\SF^{\graph'}_Q(\alpha_3) > \Sigma_{\alpha_3 \in \Atts(\alpha_2)}\SF^{\graph}_Q(\alpha_3)$ and $\Sigma_{\alpha_4 \in \Supps'(\alpha_2)}\SF^{\graph'}_Q(\alpha_4) = \Sigma_{\alpha_4 \in \Supps(\alpha_2)}\SF^{\graph}_Q(\alpha_4)$. It follows from the same definition that $E'_{\alpha_2} < E_{\alpha_2}$ and thus, regardless of the value of $\BS'(\alpha_2) = \BS(\alpha_2)$, $\SF^{\graph'}_{Q}(\alpha_2) \leq \SF^\graph_{Q}(\alpha_2)$.

    \item[]Property \ref{prop:supportedpremise}: Supported Premise. It can be seen from Definition \ref{def:QEM} that it can only be the case that $\Sigma_{\alpha_3 \in \Atts'(\alpha_2)}\SF^{\graph'}_Q(\alpha_3) = \Sigma_{\alpha_3 \in \Atts(\alpha_2)}\SF^{\graph}_Q(\alpha_3)$ and $\Sigma_{\alpha_4 \in \Supps'(\alpha_2)}\SF^{\graph'}_Q(\alpha_4) > \Sigma_{\alpha_4 \in \Supps(\alpha_2)}\SF^{\graph}_Q(\alpha_4)$. It follows from the same definition that $E'_{\alpha_2} > E_{\alpha_2}$ and thus, regardless of the value of $\BS'(\alpha_2) = \BS(\alpha_2)$, $\SF^{\graph'}_{Q}(\alpha_2) \geq \SF^\graph_{Q}(\alpha_2)$.
    
    \item[] Property \ref{prop:weakenedpremise}: Weakened Premise. It can be seen from Definition \ref{def:QEM} that it can only be the case that $\Sigma_{\alpha_2 \in \Atts'(\alpha_1)}\SF^{\graph'}_Q(\alpha_2) > \Sigma_{\alpha_2 \in \Atts(\alpha_1)}\SF^{\graph}_Q(\alpha_2)$ and $\Sigma_{\alpha_3 \in \Supps'(\alpha_1)}\SF^{\graph'}_Q(\alpha_3) = \Sigma_{\alpha_3 \in \Supps(\alpha_1)}\SF^{\graph}_Q(\alpha_3)$. It follows from the same definition that $E'_{\alpha_1} < E_{\alpha_1}$ and thus, regardless of the value of $\BS'(\alpha_1) = \BS(\alpha_1)$, $\SF^{\graph'}_{Q}(\alpha_1) \leq \SF^\graph_{Q}(\alpha_1)$.

    \item[] Property \ref{prop:strengthenedpremise}: Strengthened Premise. It can be seen from Definition \ref{def:QEM} that it can only be the case that $\Sigma_{\alpha_2 \in \Atts'(\alpha_1)}\SF^{\graph'}_Q(\alpha_2) = \Sigma_{\alpha_2 \in \Atts(\alpha_1)}\SF^{\graph}_Q(\alpha_2)$ and $\Sigma_{\alpha_3 \in \Supps'(\alpha_1)}\SF^{\graph'}_Q(\alpha_3) > \Sigma_{\alpha_3 \in \Supps(\alpha_1)}\SF^{\graph}_Q(\alpha_3)$. It follows from the same definition that $E'_{\alpha_1} > E_{\alpha_1}$ and thus, regardless of the value of $\BS'(\alpha_1) = \BS(\alpha_1)$, $\SF^{\graph'}_{Q}(\alpha_1) \geq \SF^\graph_{Q}(\alpha_1)$.
    \squishend
\end{proof}

\begin{dummytheorem}
    Let $\Args$ be such that $\forall \alpha \in \Args$, $|Prem(\alpha)| \leq 1$. Then, for any given abstract GS $\SF_i$, $\SF_{\wedge_i} = \SF_i$.
\end{dummytheorem}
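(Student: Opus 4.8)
The plan is to prove the pointwise statement $\SF_{\wedge_i}^\graph(\alpha) = \SF_i^\graph(\alpha)$ for every $\alpha \in \Args$, by induction on $d(\alpha)$, the length of the longest path into $\alpha$ via $\Atts \cup \Supps$ (well-defined since $\graph$ is acyclic); I would also assume $\SF_i$ satisfies existence and uniqueness, so that $\SF_{\wedge_i}$ is well-defined by the corollary following Definition~\ref{def:DC}. The first thing to observe is that when $Prem(\alpha) = \emptyset$, Definition~\ref{def:supp-att} forces $\Atts(\alpha) = \Supps(\alpha) = \emptyset$ (nothing can attack or support a statement through a premise literal it does not have), so both sides equal $\BS(\alpha)$: the left by Definition~\ref{def:DC}, the right because $\SF_i$ satisfies stability (Property~\ref{prop:stability}), as the two concrete abstract GS do, immediately from Definitions~\ref{def:DFQuAD} and~\ref{def:QEM}. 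This handles, in particular, all nodes with no incoming edges, i.e.\ the base of the induction.

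For the remaining case $Prem(\alpha) = \{x\}$, note that $n = |Prem(\alpha)| = 1$, so $\BS^1(x) = \BS(\alpha)$ and the product in Definition~\ref{def:DC} collapses to the single factor $\SF_{\wedge_i}^\graph(\alpha) = \SF_i^{\graph^1}(x)$. Unpacking the premise graph of $\alpha$: $\Args^1 = \{x\} \cup \Atts(\alpha) \cup \Supps(\alpha)$, and by its definition together with Definition~\ref{def:supp-att} the attackers (resp.\ supporters) of $x$ in $\graph^1$ are exactly the statements in $\Atts(\alpha)$ (resp.\ $\Supps(\alpha)$), each occurring in $\graph^1$ with no incoming edges and with weight $\BS^1(\alpha_2) = \SF_{\wedge_i}^\graph(\alpha_2)$. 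Hence each such $\alpha_2$ satisfies $\SF_i^{\graph^1}(\alpha_2) = \BS^1(\alpha_2) = \SF_{\wedge_i}^\graph(\alpha_2)$ by stability, and since $d(\alpha_2) < d(\alpha)$ the induction hypothesis gives $\SF_{\wedge_i}^\graph(\alpha_2) = \SF_i^\graph(\alpha_2)$.

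It then remains to compare $\SF_i^{\graph^1}(x)$ with $\SF_i^\graph(\alpha)$. Here I would invoke the characteristic form of an abstract GS, namely that the strength it assigns to a statement is a function solely of that statement's weight and of the multisets of strengths of its direct attackers and supporters, which is exactly the shape of Definitions~\ref{def:DFQuAD} and~\ref{def:QEM}. In $\graph^1$ the node $x$ has weight $\BS(\alpha)$ and its attackers/supporters carry strengths $\{\,\SF_i^\graph(\alpha_2) : \alpha_2 \in \Atts(\alpha)\,\}$ and $\{\,\SF_i^\graph(\alpha_2) : \alpha_2 \in \Supps(\alpha)\,\}$; in $\graph$ the statement $\alpha$ has the very same weight and the very same multisets of attacker/supporter strengths. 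Therefore $\SF_i^{\graph^1}(x) = \SF_i^\graph(\alpha)$, which gives $\SF_{\wedge_i}^\graph(\alpha) = \SF_i^\graph(\alpha)$ and closes the induction; the theorem is the special case of this for each $\alpha \in \Args$ under the hypothesis $|Prem(\alpha)| \le 1$.

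The step I expect to require the most care is the last one: the argument relies on an abstract GS being \emph{locally recursive}, i.e.\ on the fact that collapsing the entire sub-graph above a direct attacker or supporter into a single weighted leaf does not affect the resulting strengths. For $\SF_D$ and $\SF_Q$ this is transparent from their definitions (combined with their satisfaction of stability), and it is the natural reading of Definition~\ref{def:semantics}; the only genuine choice is whether to state and use this property at that level of generality, or instead to run the induction twice, once for DF-QuAD via Definition~\ref{def:DFQuAD} and once for QEM via Definition~\ref{def:QEM}, at the cost of some repetition.
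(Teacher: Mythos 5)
Your proposal is correct and follows essentially the same route as the paper's proof: in both, the singleton-premise case reduces to observing that the premise-graph node $x$ carries the same weight $\BS(\alpha)$ and exactly the same direct attackers and supporters as $\alpha$ does in $\graph$, while the empty-premise case reduces to stability of the supporting semantics. The only difference is presentational: you make explicit (via the induction on path length and the stability/local-recursiveness assumptions on $\SF_i$) what the paper's terse proof leaves implicit, which is a reasonable tightening since the statement for an arbitrary abstract GS does rely on precisely those assumptions.
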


\begin{proof}
    If we let $Prem(\alpha) = \{ x \}$, by Definition \ref{def:DC} it can be seen that $\BS(x) = \sqrt[1](\BS(\alpha)) = \BS(\alpha)$ and that $\SF^\graph_{\wedge_i}(\alpha) = \SF^\graph_{i}(x)$.
    Then, by Definitions \ref{def:modular} and \ref{def:supp-att}, $\Atts(x) = \Atts(\alpha)$ and $\Supps(x) = \Atts(\alpha)$.
    Thus, it must be the case that $\SF^\graph_{\wedge_i}(\alpha) = \SF^\graph_{i}(\alpha)$.
    Meanwhile, if we let $Prem(\alpha) = \emptyset$, by Definitions \ref{def:modular} and \ref{def:supp-att}, it can be seen that $\Atts(\alpha) = \Supps(\alpha) = \emptyset$. Then, by Definition \ref{def:DC}, $\SF^\graph_{\wedge_i}(\alpha) = \BS(\alpha) = \SF^\graph_{i}(\alpha)$.
\end{proof}

\begin{dummyproposition}
    $\SF_{T_p}$ and $\SF_{T_m}$ satisfy Properties \ref{prop:attackreinforcement}-\ref{prop:supportmonotonicity}.
\end{dummyproposition}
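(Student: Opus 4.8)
The plan is to derive all four properties from a single monotonicity observation about the T-norm semantics of Definition~\ref{def:Tnorm}, which holds uniformly for $\SF_{T_p}$ and $\SF_{T_m}$ (indeed for any De Morgan triple). A T-norm $\otimes$ and T-conorm $\oplus$ are non-decreasing in each argument, with $\otimes(v,1)=v$, $\otimes(v,0)=0$, $\oplus(v,0)=v$, and standard negation $\neg$ is strictly decreasing. Reading off Definition~\ref{def:Tnorm}: $\mathcal{I}(T)=\bigotimes_{\beta\in T}\tau(\beta)$ is non-decreasing in each $\tau(\beta)$ with $\beta\in T$; $\mathcal{O}(T)=\mathcal{I}(T)\otimes\neg\bigoplus_{T'\text{ attacks }T}\mathcal{I}(T')$ is non-decreasing in $\mathcal{I}(T)$ and non-increasing in each $\mathcal{I}(T')$ with $T'$ attacking $T$; and $\SF_T^\graph(\alpha)=\bigoplus_{T\in\CST^\graph(\alpha)}\mathcal{O}(T)$ is non-decreasing in each $\mathcal{O}(T)$. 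Composing these, $\SF_T^\graph(\alpha)$ is non-decreasing in $\tau(\beta)$ whenever $\beta$ occurs in some CST of $\alpha$ but in no CST attacking a CST of $\alpha$, and non-increasing in $\tau(\beta)$ in the opposite case; moreover, adding a CST to $\CST^\graph(\alpha)$, or adding an attacking CST against a member of $\CST^\graph(\alpha)$, can only, respectively, increase or decrease $\SF_T^\graph(\alpha)$ (using $\oplus(v,w)\geq\oplus(v,0)=v$ to absorb extra non-negative terms).

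For the two reinforcement properties (Properties~\ref{prop:attackreinforcement}--\ref{prop:supportreinforcement}), $\graph$ and $\graph'$ differ only in the weight $\BS'(\alpha_1)\geq\BS(\alpha_1)$, and since CSTs depend only on $\Args$, $\Atts$ and the claim/premise structure, $\CST^{\graph}(\gamma)=\CST^{\graph'}(\gamma)$ for every $\gamma$; hence only the $\tau(\alpha_1)$-dependence matters. For attack reinforcement the hypothesis gives $\alpha_1\notin T$ for every $T\in\CST^\graph(\alpha_2)$ while $\alpha_1$ lies in some CST attacking a member of $\CST^\graph(\alpha_2)$, so by the observation above $\SF_T^{\graph'}(\alpha_2)\leq\SF_T^\graph(\alpha_2)$; for support reinforcement the hypothesis gives $\alpha_1$ in some $T\in\CST^\graph(\alpha_2)$ and $\alpha_1$ in no CST attacking a member of $\CST^\graph(\alpha_2)$, giving $\SF_T^{\graph'}(\alpha_2)\geq\SF_T^\graph(\alpha_2)$.

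For the two monotonicity properties (Properties~\ref{prop:attackmonotonicity}--\ref{prop:supportmonotonicity}), $\graph'$ adds the new statement $\alpha_1$ together with new relations all incident to $\alpha_1$, leaving all weights of $\Args$ unchanged. The first step is to show $\CST^\graph(\alpha_5)\subseteq\CST^{\graph'}(\alpha_5)$, with every member of $\CST^{\graph'}(\alpha_5)\setminus\CST^\graph(\alpha_5)$ containing $\alpha_1$ (a CST avoiding $\alpha_1$ uses only ingredients of $\graph$, and conversely a $\graph$-CST stays a CST since the new attacks all touch $\alpha_1\notin T$), that the $\mathcal{I}$-values of the old CSTs are unchanged, and that the attacking CSTs against any old CST of $\alpha_5$ are either old or contain $\alpha_1$. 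For attack monotonicity the hypothesis ($\alpha_1$ in no CST of $\alpha_5$) forces $\CST^{\graph'}(\alpha_5)=\CST^\graph(\alpha_5)$, while a new attacking CST containing $\alpha_1$ is guaranteed, so each $\mathcal{O}(T)$ can only drop and $\SF_T^{\graph'}(\alpha_5)\leq\SF_T^\graph(\alpha_5)$; for support monotonicity the hypothesis yields a genuinely new CST of $\alpha_5$ (adding a non-negative term under $\oplus$) while no new attacking CST hits any CST of $\alpha_5$, so $\SF_T^{\graph'}(\alpha_5)\geq\SF_T^\graph(\alpha_5)$.

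The part requiring the most care is ruling out the ``mixed'' situation in which $\alpha_1$ (respectively, a CST created by adding $\alpha_1$) simultaneously feeds a supporting CST of the target and an attacking CST against it, which would render the effect on the strength ambiguous. Here one argues from the CST conditions of Definition~\ref{def:CST} --- in particular the prohibition of internal attacks together with the consistency of premises (cf.\ Corollary~\ref{cor:disjointness}) --- that the hypotheses as stated are incompatible with such a configuration, so the sign of the effect is determined. Writing this case analysis cleanly, for both the weight-change and statement-addition variants, is the main technical obstacle; the remaining steps are direct applications of the monotonicity of $\otimes$, $\oplus$ and $\neg$ and require no computation.
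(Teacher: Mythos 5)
The paper does not actually prove this proposition at all: its ``proof'' consists of citing Jedwabny \emph{et al.} (2020) for $\SF_{T_p}$ and declaring the $\SF_{T_m}$ case analogous. Your direct monotonicity argument is therefore a different route, and its skeleton is sound where the hypotheses keep $\alpha_1$ entirely on one side: for Properties \ref{prop:attackreinforcement} and \ref{prop:attackmonotonicity}, $\alpha_1$ lies in no CST of the target, so raising $\tau(\alpha_1)$ (resp.\ adding $\alpha_1$) leaves every $\mathcal{I}(T)$, $T\in\CST^\graph(\alpha_2)$, unchanged and can only enlarge the attacking side, and the De Morgan monotonicity does the rest.

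The gap is in the support-side properties (\ref{prop:supportreinforcement} and \ref{prop:supportmonotonicity}), exactly at the point you flag as the main obstacle and then dismiss. You claim the ``mixed'' configuration --- $\alpha_1$ inside a CST of the target while also lying in (indeed rooting) a CST that attacks another CST of the target --- is excluded by Definition \ref{def:CST} together with premise consistency (Corollary \ref{cor:disjointness}). It is not: the no-internal-attack condition constrains a \emph{single} CST, and two distinct CSTs of the same statement can contain mutually attacking members. Concretely, take $\alpha_2=\langle p,c\rangle$, $\delta=\langle\neg x,p\rangle$, $\beta=\langle x,p\rangle$, $\epsilon=\langle\top,x\rangle$, $\alpha_1=\langle\top,\neg x\rangle$: then $T_0=\{\alpha_2,\delta,\alpha_1\}$ and $T_1=\{\alpha_2,\beta,\epsilon\}$ are both CSTs of $\alpha_2$, $\alpha_1\in T_0$, and $\{\alpha_1\}\in\CST^\graph(\alpha_1)$ attacks $T_1$. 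The non-attack hypothesis in Properties \ref{prop:supportreinforcement} and \ref{prop:supportmonotonicity}, as written, quantifies only over roots $\alpha_3\in\Args\setminus\{\alpha_1,\alpha_2\}$, so this attacking CST rooted at $\alpha_1$ itself escapes it; with the other weights set to $1$ one computes $\SF^\graph_{T_p}(\alpha_2)=1-\tau(\alpha_1)$, so increasing $\tau(\alpha_1)$ strictly \emph{lowers} the strength and your monotone argument cannot close --- the premise ``$\alpha_1$ in no CST attacking a member of $\CST^\graph(\alpha_2)$'' that you use is simply not what the stated hypotheses give you. To repair this you must either work with the stronger reading in which $\alpha_1$ belongs to no attacking CST whatsoever (which is what the original formulation in Jedwabny \emph{et al.} effectively supplies), or do as the paper does and defer to their proofs, noting that those proofs use only De Morgan-triple monotonicity and hence transfer verbatim from $\SF_{T_p}$ to $\SF_{T_m}$.
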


\begin{proof}

    $\SF_{T_p}$:

    \squishlist
    
    \item[] The proofs for Properties \ref{prop:attackreinforcement}-\ref{prop:supportmonotonicity} can be found in~\cite{Jedwabny_20}.

    \squishend
    
    $\SF_{T_m}$:

    \squishlist

    \item[] The proofs for Properties \ref{prop:attackreinforcement}-\ref{prop:supportmonotonicity} are analogous to those for $\SF_{T_p}$ in~\cite{Jedwabny_20}.    
    
    \squishend

\end{proof}

\begin{dummyproposition}
     $\SF_{\wedge_D}$ and $\SF_{\wedge_Q}$ violate Properties \ref{prop:attackreinforcement}-\ref{prop:supportmonotonicity}. 
\end{dummyproposition}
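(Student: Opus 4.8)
Each of Properties~\ref{prop:attackreinforcement}--\ref{prop:supportmonotonicity} bounds how a statement's strength may change when some $\alpha_1$ is reweighted or added, \emph{under the proviso that $\alpha_1$ sits inside an attacking} (resp.\ \emph{supporting}) \emph{CST of the target statement}: in that case the strength must not increase (resp.\ decrease). The DC semantics never inspects CSTs---by Definitions~\ref{def:DC}--\ref{def:modular} it scores a statement by running the abstract GS on its premise graph, so $\alpha_1$'s influence travels to the target along \emph{every} directed path, with possibly opposing signs. The plan is to exploit this mismatch with a single skeleton SG in which $\alpha_1$ reaches one premise literal $a$ of a target $\alpha_2$ along two paths of opposite net effect, weighted so that the path the property's CST-hypothesis cannot ``see'' dominates.

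For the attack-side properties I would take $\alpha_2=\langle a,c\rangle$ together with a fact $\alpha_f=\langle\top,a\rangle$ (so $\alpha_2$ has the CST $\{\alpha_2,\alpha_f\}$), two attackers $\alpha_3=\langle b,\neg a\rangle$ and $\alpha_4=\langle\neg b,\neg a\rangle$ of the literal $a$, and $\alpha_1=\langle\top,b\rangle$, which supports $\alpha_3$'s premise and simultaneously attacks $\alpha_4$'s premise. With $\tau(\alpha_3)$ small, $\tau(\alpha_4)$ large, and $\tau(\alpha_1)<\tfrac{1}{2}$ for $\SF_{\wedge_D}$ (and instead $\tau(\alpha_3)+\tau(\alpha_4)>1$ for $\SF_{\wedge_Q}$), raising---or, for the monotonicity variant, inserting---$\alpha_1$ makes the strength of $\alpha_3$ rise but that of $\alpha_4$ fall in such a way that the aggregate of $a$'s attackers ($\Sigma$ for DF-QuAD, the sum inside $h(\cdot)$ for QEM) strictly \emph{decreases}; hence $\SF_{\wedge_D}(\alpha_2)$ (resp.\ $\SF_{\wedge_Q}(\alpha_2)$) strictly increases, violating the ``$\le$'' required by Attack Reinforcement (resp.\ Attack Monotonicity). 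The mirror construction, turning $\alpha_3,\alpha_4$ into supporters of $a$ (so $\alpha_3=\langle b,a\rangle$, $\alpha_4=\langle\neg b,a\rangle$, with $\alpha_2$'s unique CST now routed through $\alpha_3$ and $\alpha_1$), yields under the same weightings a strict \emph{decrease} of the target's strength, violating the ``$\ge$'' required by Support Reinforcement and Support Monotonicity. In every case one first checks the CST-side hypotheses---that $\alpha_1$ lies in, or outside, the CSTs of $\alpha_2$ exactly as the property demands, that no opposing-polarity CST containing $\alpha_1$ attacks a CST of $\alpha_2$, and (for the reinforcement variants) that only $\tau(\alpha_1)$ differs between $\graph$ and $\graph'$---which is routine, since CSTs are weight-independent attack-free sets of statements and can be listed by hand.

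So the steps are: (i) fix the skeleton SG and the two weightings; (ii) enumerate the CSTs and verify the property's hypotheses; (iii) compute the strengths of $\alpha_1,\alpha_3,\alpha_4,\alpha_2$ under $\SF_{\wedge_D}$, and their QEM analogues, in both graphs using Definitions~\ref{def:DC}, \ref{def:modular}, \ref{def:DFQuAD} and~\ref{def:QEM}; (iv) read off the inequality in the forbidden direction. I expect step~(iii), combined with choosing weights that work for \emph{both} abstract GS, to be the main obstacle: DF-QuAD aggregates its incoming strengths probabilistically (multiplicatively) whereas QEM aggregates them additively, so the weight regime that makes the DF-QuAD product of $a$'s incoming strengths swing the required way is not the one ($\tau(\alpha_3)+\tau(\alpha_4)>1$) that makes QEM's energy $E_a$ swing, and the two calculations must be carried out separately; one must also keep $\tau(\alpha_1)$ on the monotone branch of each aggregator both before and after the change for the sign argument to be valid. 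A minor check is that the degenerate-looking ingredients used (e.g.\ a fact of weight~$0$, which still supplies a CST since CSTs are purely structural) are legal SG statements.
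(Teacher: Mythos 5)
Your plan is correct and would yield valid counterexamples, but the gadget is genuinely different from the paper's. The paper's counterexamples never rely on balancing two opposing changes: the perturbed (or added) statement is a fact that \emph{supports} two statements of opposite polarity with respect to the target --- one CST-relevant but weighted $1$, so its DC strength is saturated and cannot move, and one partially-complete (a conjunct like $d$ with no support), so it is invisible to the CST machinery but fully counted by $\SF_{\wedge_D}$/$\SF_{\wedge_Q}$. The entire effect of the perturbation therefore flows through the CST-invisible statement, the required inequality flips by a purely monotone argument, and one skeleton with a single weight bump ($0.5 \to 0.6$) works verbatim for all four properties and for both abstract GS. Your construction instead gives the perturbed fact mixed-polarity edges (it supports one attacker of the premise literal and attacks another, stronger, CST-less attacker), so the violation comes from the aggregate of same-side strengths swinging the ``wrong'' way; this is sound --- I checked that your hypotheses on CSTs hold (e.g.\ $\{\alpha_2,\alpha_f\}$ is the unique CST of the target on the attack side, $\{\alpha_2,\alpha_3,\alpha_1\}$ on the support side, and no CST containing $\alpha_1$ attacks a CST of the target in the support cases), and your sign conditions are right (for DF-QuAD the attacker aggregate $s_3+s_4-s_3s_4$ decreases in $\tau(\alpha_1)$ exactly when $\tau(\alpha_4)(1-\tau(\alpha_1))>\tfrac12$; for QEM the additive aggregate decreases exactly when $\tau(\alpha_3)+\tau(\alpha_4)>1$). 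What your route costs is precisely the obstacle you flag: a two-sided cancellation analysis, separate weight regimes for $\SF_{\wedge_D}$ and $\SF_{\wedge_Q}$, and strictness bookkeeping ($\tau$ of the target in $(0,1)$, staying on the strictly responsive branch of $h$); the paper's saturation-plus-incompleteness trick avoids all of this. One shared wrinkle: in the monotonicity properties the hypothesis ``$T' \in \CST^{\graph}(\alpha_6)$ with $\alpha_1 \in T'$'' can only be met reading $\CST^{\graph'}$ (since $\alpha_1 \notin \Args$); the paper's counterexamples implicitly use that reading, and yours satisfies it too, so this does not affect your argument.
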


\begin{proof}

    $\SF_{\wedge_D}$:

    \squishlist

    \item[] Property \ref{prop:attackreinforcement}: Attack Reinforcement (Counterexample).
    Let $\Args = \{ \alpha_1, \alpha_2, \alpha_3, \alpha_4, \alpha_5 \}$ where 
    $\alpha_1 = \langle b , a \rangle$, 
    $\alpha_2 = \langle \top, b \rangle$, 
    $\alpha_3 = \langle  c \wedge d , b \rangle$, 
    $\alpha_4 = \langle c  , \neg b \rangle$ and 
    $\alpha_5 = \langle \top, c \rangle$.
    By Definition \ref{def:supp-att}, we have that $\Atts = \{ (\alpha_4,\alpha_1) \}$ and $\Supps = \{ (\alpha_2,\alpha_1),(\alpha_3,\alpha_1), (\alpha_5,\alpha_3), (\alpha_5,\alpha_4) \}$.
    Then let $\BS(\alpha_1) = \BS'(\alpha_1) = \BS(\alpha_3) = \BS'(\alpha_3) = \BS(\alpha_5) = 0.5$, 
    $\BS(\alpha_2) = \BS'(\alpha_2) = 0$, 
    $\BS(\alpha_4) = \BS'(\alpha_4) = 1$ and $\BS'(\alpha_5) = 0.6$.
    By Definitions \ref{def:DC}, \ref{def:modular} and \ref{def:DFQuAD}, this gives $\SF^{\graph}_{\wedge_D}(\alpha_5) = 0.5$, 
    $\SF^{\graph}_{\wedge_D}(\alpha_4) = 1$, 
    $\SF^{\graph}_{\wedge_D}(\alpha_3) = 0.6$, 
    $\SF^{\graph}_{\wedge_D}(\alpha_2) = 0$ and 
    $\SF^{\graph}_{\wedge_D}(\alpha_1) = 0.3$.
    Meanwhile, by the same definitions, $\SF^{\graph'}_{\wedge_D}(\alpha_5) = 0.6$, 
    $\SF^{\graph'}_{\wedge_D}(\alpha_4) = 1$, 
    $\SF^{\graph'}_{\wedge_D}(\alpha_3) = 0.62$, 
    $\SF^{\graph'}_{\wedge_D}(\alpha_2) = 0$ and 
    $\SF^{\graph'}_{\wedge_D}(\alpha_1) = 0.31$.
    Thus, $\SF^{\graph'}_{\wedge_D}(\alpha_1) > \SF^{\graph}_{\wedge_D}(\alpha_1)$.

    \item[] Property \ref{prop:supportreinforcement}: Support Reinforcement (Counterexample).
    Let $\Args = \{ \alpha_1, \alpha_2, \alpha_3, \alpha_4 \}$ where 
    $\alpha_1 = \langle b , a \rangle$, 
    $\alpha_2 = \langle c, b \rangle$, 
    $\alpha_3 = \langle c \wedge d , \neg b \rangle$ and 
    $\alpha_4 = \langle \top, c \rangle$.
    By Definition \ref{def:supp-att}, we have that $\Atts = \{ (\alpha_3,\alpha_1) \}$ and $\Supps = \{ (\alpha_2,\alpha_1), (\alpha_4,\alpha_2), (\alpha_4,\alpha_3) \}$.
    Then let $\BS(\alpha_1) = \BS'(\alpha_1) = \BS(\alpha_3) = \BS'(\alpha_3) = \BS(\alpha_4) = 0.5$, 
    $\BS(\alpha_2) = \BS'(\alpha_2) = 1$ and $\BS'(\alpha_4) = 0.6$.
    By Definitions \ref{def:DC}, \ref{def:modular} and \ref{def:DFQuAD}, this gives $\SF^{\graph}_{\wedge_D}(\alpha_4) = 0.5$, 
    $\SF^{\graph}_{\wedge_D}(\alpha_3) = 0.6$, 
    $\SF^{\graph}_{\wedge_D}(\alpha_2) = 1$ and 
    $\SF^{\graph}_{\wedge_D}(\alpha_1) = 0.7$.
    Meanwhile, by the same definitions, $\SF^{\graph'}_{\wedge_D}(\alpha_4) = 0.6$, 
    $\SF^{\graph'}_{\wedge_D}(\alpha_3) = 0.62$, 
    $\SF^{\graph'}_{\wedge_D}(\alpha_2) = 1$ and 
    $\SF^{\graph'}_{\wedge_D}(\alpha_1) = 0.69$.
    Thus, $\SF^{\graph'}_{\wedge_D}(\alpha_1) < \SF^{\graph}_{\wedge_D}(\alpha_1)$.

    \item[] Property \ref{prop:attackmonotonicity}: Attack Monotonicity (Counterexample).
    Let $\Args = \{ \alpha_1, \alpha_2, \alpha_3, \alpha_4 \}$ and $\Args' = \Args \cup \{ \alpha_5 \}$ where 
    $\alpha_1 = \langle b , a \rangle$, 
    $\alpha_2 = \langle \top, b \rangle$, 
    $\alpha_3 = \langle c \wedge d , b \rangle$, 
    $\alpha_4 = \langle c, \neg b \rangle$ and 
    $\alpha_5 = \langle \top, c \rangle$.
    By Definition \ref{def:supp-att}, we have that $\Atts = \Atts' = \{ (\alpha_4,\alpha_1) \}$, $\Supps = \{ (\alpha_2,\alpha_1), (\alpha_3,\alpha_1) \}$ and $\Supps' = \Supps \cup \{ (\alpha_5,\alpha_3), (\alpha_5,\alpha_4) \}$.
    Then let $\BS(\alpha_1) = \BS'(\alpha_1) = \BS(\alpha_3) = \BS'(\alpha_3) = \BS'(\alpha_5) = 0.5$, 
    $\BS(\alpha_2) = \BS'(\alpha_2) = 0$ and 
    $\BS(\alpha_4) = \BS'(\alpha_4) = 1$.
    By Definitions \ref{def:DC}, \ref{def:modular} and \ref{def:DFQuAD}, this gives 
    $\SF^{\graph}_{\wedge_D}(\alpha_4) = 1$, 
    $\SF^{\graph}_{\wedge_D}(\alpha_3) = 0.5$, 
    $\SF^{\graph}_{\wedge_D}(\alpha_2) = 0$ and 
    $\SF^{\graph}_{\wedge_D}(\alpha_1) = 0.25$.
    Meanwhile, by the same definitions, $\SF^{\graph'}_{\wedge_D}(\alpha_5) = 0.5$, 
    $\SF^{\graph'}_{\wedge_D}(\alpha_4) = 1$, 
    $\SF^{\graph'}_{\wedge_D}(\alpha_3) = 0.6$, 
    $\SF^{\graph'}_{\wedge_D}(\alpha_2) = 0$ and 
    $\SF^{\graph'}_{\wedge_D}(\alpha_1) = 0.3$.
    Thus, $\SF^{\graph'}_{\wedge_D}(\alpha_1) > \SF^{\graph}_{\wedge_D}(\alpha_1)$.

    \item[] Property \ref{prop:supportmonotonicity}: Support Monotonicity (Counterexample).
    Let $\Args = \{ \alpha_1, \alpha_2, \alpha_3 \}$ and $\Args' = \Args \cup \{ \alpha_4 \}$ where 
    $\alpha_1 = \langle b , a \rangle$, 
    $\alpha_2 = \langle c, b \rangle$, 
    $\alpha_3 = \langle c \wedge d, \neg b \rangle$ and 
    $\alpha_4 = \langle \top, c \rangle$.
    By Definition \ref{def:supp-att}, we have that $\Atts = \Atts' = \{ (\alpha_3,\alpha_1) \}$, $\Supps = \{ (\alpha_2,\alpha_1) \}$ and $\Supps' = \Supps \cup \{ (\alpha_4,\alpha_2), (\alpha_4,\alpha_3) \}$.
    Then let $\BS(\alpha_1) = \BS'(\alpha_1) = \BS(\alpha_3) = \BS'(\alpha_3) = \BS'(\alpha_4) = 0.5$ and 
    $\BS(\alpha_2) = \BS'(\alpha_2) = 1$.
    By Definitions \ref{def:DC}, \ref{def:modular} and \ref{def:DFQuAD}, this gives 
    $\SF^{\graph}_{\wedge_D}(\alpha_3) = 0.5$, 
    $\SF^{\graph}_{\wedge_D}(\alpha_2) = 1$ and 
    $\SF^{\graph}_{\wedge_D}(\alpha_1) = 0.75$.
    Meanwhile, by the same definitions, $\SF^{\graph'}_{\wedge_D}(\alpha_4) = 0.5$, 
    $\SF^{\graph'}_{\wedge_D}(\alpha_3) = 0.6$, 
    $\SF^{\graph'}_{\wedge_D}(\alpha_2) = 1$ and 
    $\SF^{\graph'}_{\wedge_D}(\alpha_1) = 0.7$.
    Thus, $\SF^{\graph'}_{\wedge_D}(\alpha_1) < \SF^{\graph}_{\wedge_D}(\alpha_1)$.
    
    \squishend

    $\SF_{\wedge_Q}$:

    \squishlist

    \item[] Property \ref{prop:attackreinforcement}: Attack Reinforcement (Counterexample).
    Let $\Args = \{ \alpha_1, \alpha_2, \alpha_3, \alpha_4, \alpha_5 \}$ where 
    $\alpha_1 = \langle b , a \rangle$, 
    $\alpha_2 = \langle \top, b \rangle$, 
    $\alpha_3 = \langle  c \wedge d , b \rangle$, 
    $\alpha_4 = \langle c  , \neg b \rangle$ and 
    $\alpha_5 = \langle \top, c \rangle$.
    By Definition \ref{def:supp-att}, we have that $\Atts = \{ (\alpha_3,\alpha_1) \}$ and $\Supps = \{ (\alpha_2,\alpha_1),(\alpha_3,\alpha_1), (\alpha_5,\alpha_3), (\alpha_5,\alpha_4) \}$.
    Then let $\BS(\alpha_1) = \BS'(\alpha_1) = \BS(\alpha_3) = \BS'(\alpha_3) = \BS(\alpha_5) = 0.5$, 
    $\BS(\alpha_2) = \BS'(\alpha_2) = 0$, 
    $\BS(\alpha_4) = \BS'(\alpha_4) = 1$ and $\BS'(\alpha_5) = 0.6$.
    By Definitions \ref{def:DC}, \ref{def:modular} and \ref{def:QEM}, this gives $\SF^{\graph}_{\wedge_Q}(\alpha_5) = 0.5$, 
    $\SF^{\graph}_{\wedge_Q}(\alpha_4) = 1$, 
    $\SF^{\graph}_{\wedge_Q}(\alpha_3) = 0.54$, 
    $\SF^{\graph}_{\wedge_Q}(\alpha_2) = 0$ and 
    $\SF^{\graph}_{\wedge_Q}(\alpha_1) = 0.41$.
    Meanwhile, by the same definitions, $\SF^{\graph'}_{\wedge_Q}(\alpha_5) = 0.6$, 
    $\SF^{\graph'}_{\wedge_Q}(\alpha_4) = 1$, 
    $\SF^{\graph'}_{\wedge_Q}(\alpha_3) = 0.55$, 
    $\SF^{\graph'}_{\wedge_Q}(\alpha_2) = 0$ and 
    $\SF^{\graph'}_{\wedge_Q}(\alpha_1) = 0.42$.
    Thus, $\SF^{\graph'}_{\wedge_Q}(\alpha_1) > \SF^{\graph}_{\wedge_Q}(\alpha_1)$.

    \item[] Property \ref{prop:supportreinforcement}: Support Reinforcement (Counterexample).
    Let $\Args = \{ \alpha_1, \alpha_2, \alpha_3, \alpha_4 \}$ where 
    $\alpha_1 = \langle b , a \rangle$, 
    $\alpha_2 = \langle c, b \rangle$, 
    $\alpha_3 = \langle c \wedge d , \neg b \rangle$ and 
    $\alpha_4 = \langle \top, c \rangle$.
    By Definition \ref{def:supp-att}, we have that $\Atts = \{ (\alpha_3,\alpha_1) \}$ and $\Supps = \{ (\alpha_2,\alpha_1), (\alpha_4,\alpha_2), (\alpha_4,\alpha_3) \}$.
    Then let $\BS(\alpha_1) = \BS'(\alpha_1) = \BS(\alpha_3) = \BS'(\alpha_3) = \BS(\alpha_4) = 0.5$, 
    $\BS(\alpha_2) = \BS'(\alpha_2) = 1$ and $\BS'(\alpha_4) = 0.6$.
    By Definitions \ref{def:DC}, \ref{def:modular} and \ref{def:QEM}, this gives $\SF^{\graph}_{\wedge_Q}(\alpha_4) = 0.5$, 
    $\SF^{\graph}_{\wedge_Q}(\alpha_3) = 0.54$, 
    $\SF^{\graph}_{\wedge_Q}(\alpha_2) = 1$ and 
    $\SF^{\graph}_{\wedge_Q}(\alpha_1) = 0.59$.
    Meanwhile, by the same definitions, $\SF^{\graph'}_{\wedge_Q}(\alpha_4) = 0.6$, 
    $\SF^{\graph'}_{\wedge_Q}(\alpha_3) = 0.55$, 
    $\SF^{\graph'}_{\wedge_Q}(\alpha_2) = 1$ and 
    $\SF^{\graph'}_{\wedge_Q}(\alpha_1) = 0.58$.
    Thus, $\SF^{\graph'}_{\wedge_Q}(\alpha_1) < \SF^{\graph}_{\wedge_Q}(\alpha_1)$.

    \item[] Property \ref{prop:attackmonotonicity}: Attack Monotonicity (Counterexample).
    Let $\Args = \{ \alpha_1, \alpha_2, \alpha_3, \alpha_4 \}$ and $\Args' = \Args \cup \{ \alpha_5 \}$ where 
    $\alpha_1 = \langle b , a \rangle$, 
    $\alpha_2 = \langle \top, b \rangle$, 
    $\alpha_3 = \langle c \wedge d , b \rangle$, 
    $\alpha_4 = \langle c, \neg b \rangle$ and 
    $\alpha_5 = \langle \top, c \rangle$.
    By Definition \ref{def:supp-att}, we have that $\Atts = \Atts' = \{ (\alpha_4,\alpha_1) \}$, $\Supps = \{ (\alpha_2,\alpha_1), (\alpha_3,\alpha_1) \}$ and $\Supps' = \Supps \cup \{ (\alpha_5,\alpha_3), (\alpha_5,\alpha_4) \}$.
    Then let $\BS(\alpha_1) = \BS'(\alpha_1) = \BS(\alpha_3) = \BS'(\alpha_3) = \BS'(\alpha_5) = 0.5$, 
    $\BS(\alpha_2) = \BS'(\alpha_2) = 0$ and 
    $\BS(\alpha_4) = \BS'(\alpha_4) = 1$.
    By Definitions \ref{def:DC}, \ref{def:modular} and \ref{def:QEM}, this gives 
    $\SF^{\graph}_{\wedge_Q}(\alpha_4) = 1$, 
    $\SF^{\graph}_{\wedge_Q}(\alpha_3) = 0.5$, 
    $\SF^{\graph}_{\wedge_Q}(\alpha_2) = 0$ and 
    $\SF^{\graph}_{\wedge_Q}(\alpha_1) = 0.4$.
    Meanwhile, by the same definitions, $\SF^{\graph'}_{\wedge_Q}(\alpha_5) = 0.5$, 
    $\SF^{\graph'}_{\wedge_Q}(\alpha_4) = 1$, 
    $\SF^{\graph'}_{\wedge_Q}(\alpha_3) = 0.54$, 
    $\SF^{\graph'}_{\wedge_Q}(\alpha_2) = 0$ and 
    $\SF^{\graph'}_{\wedge_Q}(\alpha_1) = 0.41$.
    Thus, $\SF^{\graph'}_{\wedge_Q}(\alpha_1) > \SF^{\graph}_{\wedge_Q}(\alpha_1)$.

    \item[] Property \ref{prop:supportmonotonicity}: Support Monotonicity (Counterexample).
    Let $\Args = \{ \alpha_1, \alpha_2, \alpha_3 \}$ and $\Args' = \Args \cup \{ \alpha_4 \}$ where 
    $\alpha_1 = \langle b , a \rangle$, 
    $\alpha_2 = \langle c, b \rangle$, 
    $\alpha_3 = \langle c \wedge d, \neg b \rangle$ and 
    $\alpha_4 = \langle \top, c \rangle$.
    By Definition \ref{def:supp-att}, we have that $\Atts = \Atts' = \{ (\alpha_3,\alpha_1) \}$, $\Supps = \{ (\alpha_2,\alpha_1) \}$ and $\Supps' = \Supps \cup \{ (\alpha_4,\alpha_2), (\alpha_4,\alpha_3) \}$.
    Then let $\BS(\alpha_1) = \BS'(\alpha_1) = \BS(\alpha_3) = \BS'(\alpha_3) = \BS'(\alpha_4) = 0.5$ and 
    $\BS(\alpha_2) = \BS'(\alpha_2) = 1$.
    By Definitions \ref{def:DC}, \ref{def:modular} and \ref{def:QEM}, this gives 
    $\SF^{\graph}_{\wedge_Q}(\alpha_3) = 0.5$, 
    $\SF^{\graph}_{\wedge_Q}(\alpha_2) = 1$ and 
    $\SF^{\graph}_{\wedge_Q}(\alpha_1) = 0.6$.
    Meanwhile, by the same definitions, $\SF^{\graph'}_{\wedge_Q}(\alpha_4) = 0.5$, 
    $\SF^{\graph'}_{\wedge_Q}(\alpha_3) = 0.54$, 
    $\SF^{\graph'}_{\wedge_Q}(\alpha_2) = 1$ and 
    $\SF^{\graph'}_{\wedge_Q}(\alpha_1) = 0.59$.
    Thus, $\SF^{\graph'}_{\wedge_Q}(\alpha_1) < \SF^{\graph}_{\wedge_Q}(\alpha_1)$.

    \squishend

\end{proof}

\end{document}